\documentclass{article}





     \usepackage[final,nonatbib]{}
\usepackage[table,xcdraw]{xcolor}
\usepackage{cite}
\usepackage[utf8]{inputenc} 
\usepackage[T1]{fontenc}    
\usepackage{hyperref}       
\usepackage{url}            
\usepackage{booktabs}       
\usepackage{amsfonts}       
\usepackage{nicefrac}       
\usepackage{microtype}      

\usepackage{amsthm, amssymb, amsmath}
\usepackage{verbatim, float}
\usepackage{mathrsfs}
\usepackage{graphics}
\usepackage{subfig}
\usepackage[usenames,dvipsnames]{pstricks}
\usepackage{multirow}
\usepackage{url}
\usepackage{array}
\usepackage{tabu}
\usepackage{epsfig}
\usepackage{parallel}
\usepackage{graphicx}
\usepackage{diagbox}
\usepackage{mathtools}
\usepackage[absolute,overlay]{textpos}
\usepackage{xspace}

\newcommand{\hide}[1]{}

\newcommand{\proj}{DE-GNN\xspace}
\newcommand{\projA}{DEA-GNN\xspace}
\theoremstyle{definition}
\newtheorem{theorem}{Theorem}[section]

\newtheorem{definition}[theorem]{Definition}
\newtheorem{lemma}[theorem]{Lemma}
\newtheorem{corollary}[theorem]{Corollary}

\theoremstyle{remark}

\newtheorem{remark}{Remark}[section]

\title{Distance Encoding: Design Provably More Powerful Neural Networks for Graph Representation Learning}

%
\author{%
  Pan Li \\
  Department of Computer Science\\
  Purdue University \\
  panli@purdue.edu \\
  \And
  Yanbang Wang \\
  Department of Computer Science\\
  Stanford University \\
  ywangdr@cs.stanford.edu \\
  \AND
  Hongwei Wang \\
  Department of Computer Science\\
  Stanford University \\
  hongweiw@cs.stanford.edu \\
  \And
  Jure Leskovec \\
  Department of Computer Science\\
  Stanford University \\
  jure@cs.stanford.edu \\
}

\begin{document}

\maketitle
\vspace{-0.4cm}
\begin{abstract}
\vspace{-0.1cm}
Learning representations of sets of nodes in a graph is crucial for applications ranging from node-role discovery to link prediction and molecule classification. Graph Neural Networks (GNNs) have achieved great success in graph representation learning. However, expressive power of GNNs is limited by the 1-Weisfeiler-Lehman (WL) test and thus GNNs generate identical representations for graph substructures that may in fact be very different. More powerful GNNs, proposed recently by mimicking higher-order-WL tests, only focus on representing entire graphs and they are computationally inefficient as they cannot utilize sparsity of the underlying graph. 
Here we propose and mathematically analyze a general class of structure-related features, termed Distance Encoding (DE). DE assists GNNs in representing any set of nodes, while providing strictly more expressive power than the 1-WL test. DE captures the distance between the node set whose representation is to be learned and each node in the graph. To capture the distance DE can apply various graph-distance measures such as shortest path distance or generalized PageRank scores. We propose two ways for GNNs to use DEs (1) as extra node features, and (2) as controllers of message aggregation in GNNs. Both approaches can utilize the sparse structure of the underlying graph, which leads to computational efficiency and scalability. We also prove that DE can distinguish node sets embedded in almost all regular graphs where traditional GNNs always fail. We evaluate DE on three tasks over six real networks: structural role prediction, link prediction, and triangle prediction. Results show that our models outperform GNNs without DE by up-to 15\% in accuracy and AUROC. Furthermore, our models also significantly outperform other state-of-the-art methods especially designed for the above tasks. 

\hide{
Learning structural representations of node sets from graph-structured data is crucial for applications ranging from node-role discovery to link prediction and molecule classification. Graph Neural Networks (GNNs) have achieved great success in structural representation learning. However, most GNNs are limited by the 1-Weisfeiler-Lehman (WL) test and thus possible to generate identical representation for structures and graphs that are actually different. More powerful GNNs, proposed recently by mimicking higher-order-WL tests, only focus on entire-graph representations and cannot utilize sparsity of the graph structure to be computationally efficient. Here we propose a general class of structure-related features, termed Distance Encoding (DE), to assist GNNs in representing node sets with arbitrary sizes with strictly more expressive power than the 1-WL test. DE essentially captures the distance between the node set whose representation is to be learnt and each node in the graph, which includes important graph-related measures such as shortest-path-distance and generalized PageRank scores. We propose two general frameworks for GNNs to use DEs (1) as extra node attributes and (2) further as controllers of message aggregation in GNNs. Both frameworks may still utilize the sparse structure to keep scalability to process large graphs. In theory, we prove that these two frameworks can distinguish node sets embedded in almost all regular graphs where traditional GNNs always fail. We also rigorously analyze their limitations. Empirically, we evaluate these two frameworks on node structural roles prediction, link prediction and triangle prediction over six real networks. The results show that DE-assisted GNNs outperform GNNs without DEs by up-to 15\% improvement in average accuracy and AUC. DE-assisted GNNs also significantly outperform other state-of-the-art baselines particularly designed for those tasks. 
}
\end{abstract}
\vspace{-0.2cm}
\section{Introduction} \label{sec:intro}
\vspace{-0.15cm}

Graph representation learning aims to learn representation vectors of graph-structured data~\cite{hamilton2017representation}. Representations of node sets in a graph can be leveraged for a wide range of applications, such as discovery of functions/roles of nodes based on individual node representations~\cite{borgatti1992notions,henderson2012rolx,rossi2014role,ribeiro2017struc2vec,donnat2018learning}, link or link type prediction based on node-pair representations~\cite{liben2007link,zhang2017weisfeiler,zhang2018link,you2019position} and graph comparison or molecule classification based on entire-graph representations~\cite{prvzulj2007biological,zager2008graph,shervashidze2011weisfeiler,gilmer2017neural,ying2018hierarchical,xu2018powerful,maziarka2020molecule}.

Graph neural networks (GNNs), inheriting the power of neural networks~\cite{hornik1989multilayer}, have become the {\em de facto} standard for representation learning in graphs~\cite{scarselli2008graph}. Generaly, GNNs use message passing procedure over the input graph, which can be summarized in three steps: (1) Initialize node representations with their initial attributes (if given) or structural features such as node degrees; (2) Iteratively update the representation of each node by aggregating over the representations of its neighboring nodes; (3) Readout the final representation of a single node, a set of nodes, or the entire node set as required by the task. Under the above framework, researchers have proposed many  GNN architectures~\cite{kipf2016semi,hamilton2017inductive,velivckovic2017graph,gilmer2017neural,xu2018powerful,ying2018hierarchical,zhang2018end}. Interested readers may refer to tutorials on GNNs for further details~\cite{battaglia2018relational,hamilton2017representation}. 

Despite the success of GNNs, their representation power in representation learning is limited~\cite{xu2018powerful}. Recent works proved that the representation power of GNNs that follow the above framework is bounded by the 1-WL test~\cite{weisfeiler1968reduction,xu2018powerful,morris2019weisfeiler} (We shall refer to these GNNs as WLGNNs). Concretely, WLGNNs yield identical vector representations for any subgraph structure that the 1-WL test cannot distinguish. Consider an extreme case: If node attributes are all nodes are
the same, then for any node in a $r$-regular graph GNN will output identical representation. 
Such an issue becomes even worse when WLGNNs are used to extract representations of node sets, \textit{e.g.}, node-pairs for link prediction (Fig.~\ref{fig:WLGNN} (a)). A few works have been recently proposed to improve the power of WLGNNs~\cite{maron2019provably}. However, they either focus on building theory only for entire-graph representations~\cite{morris2019weisfeiler,murphy2019relational,maron2018invariant,chen2019equivalence,maron2019provably}, or show empirical success using heuristic methods without strong theoretical characterization~\cite{zhang2018link,you2019position,chen2019path,maziarka2020molecule,klicpera2019diffusion,chien2020joint}. We review these methods in detail in Section~\ref{sec:related-works}. 

Here we address the limitations of WLGNNs and propose and mathematically analyze a new class of node features, termed {\em Distance Encoding (DE)}. DE comes with both theoretical guarantees and empirical efficiency. Given a node set $S$ 
whose structural representation is to be learnt, for every node $u$ in the graph 
DE is defined as a mapping of a set of landing probabilities of random walks from each node of the set $S$ to node $u$. DE may use measures such as shortest path distance (SPD) and generalized PageRank scores~\cite{li2019optimizing}. DE can be combined with any GNN architecture in simple but effective ways: First, we propose \proj 
that utilizes DE as an extra node feature. We further enhance \proj by allowing DE to control the message aggregation procedure of WLGNNs, which yields another model \projA. 
Since DE purely depends on the graph structure and is independent of node identifiers, DE also provides inductive and generalization capability. 

\begin{figure}[t]
\begin{minipage}{0.36\textwidth}
\flushright
{\includegraphics[trim={1.5cm 7.5cm 14cm 6cm},clip,width=0.9\textwidth]{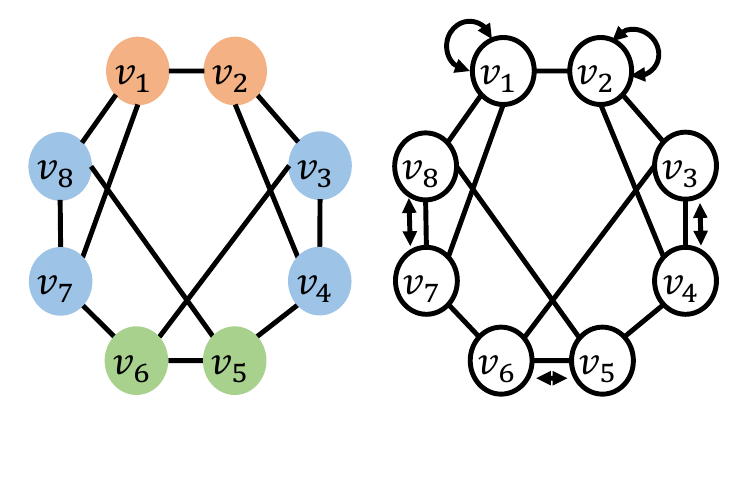}}
\end{minipage}
\begin{minipage}{0.63\textwidth}
\hfill\vline\hfill
\centering
\begin{tabular}{l}
\hline 
\textbf{WLGNN-$p$ to represent $\mathcal{T} = (S, \mathbf{A})$, $|S|=p$} \\
\hline
Initialize: For all $v\in V$, $h_v^{(0)} = \mathbf{A}_{vv}$  \\
For layers $l = 0, 1, ..., L-1$ and all $v\in V$, do: \\
\quad $h_v^{(l+1)} = f_1(h_v^{(l)}, \text{AGG}(\{f_2(h_u^{(l)}, \mathbf{A}_{vu})\}_{u\in \mathcal{N}_v}))$ \\
Output $\Gamma(\mathcal{T}) = \text{AGG}(\{h_v^{(L)}\}_{v\in S})$ \\
\hline 
\end{tabular}
\end{minipage}
\vspace{-0.3cm}
\caption{\small{\textbf{(a)} 3-regular graph with 8 nodes. Briefly assume that all node attributes are the same and the nodes can only be distinguished based on their network structure. Then for all nodes, WLGNN will produce the same representation and thus fail to distinguish them. However, nodes with different colors should have different representations, as they are not structurally equivalent (or ``isomorphic'' as defined in Section~\ref{sec:prem}). Furthermore, WLGNN cannot distinguish all the node-pairs (\textit{e.g.}, $\{v_1,v_2\}$ vs $\{v_4,v_7\}$). However, if we use shortest-path-distances (SPDs) between nodes as features we can distinguish blue nodes from green and red nodes because there is another node with SPD$=3$ to a blue node of interest (\textit{e.g.}, SPD between $v_3$ and $v_8$), while all SPDs between other nodes to red/green nodes are less than 3. 
Note that the structural equivalence between any two nodes of the same color can be obtained from the reflexivity of the graph while the equivalence between two vertically-aligned blue nodes can be further obtained from the node permutation shown in the right. \textbf{(b)} WLGNN algorithm to represent a node set $S$ of size $p$ --- $f_i(\cdot)$'s are arbitrary neural networks; AGG$(\cdot)$'s are set-pooling operators; $L$ is the number of layers. }}
\label{fig:WLGNN}
\vspace{-0.3cm}
\end{figure}

\begin{textblock*}{1cm}(3.9cm,3.55cm) 
   \textbf{(a)}
\end{textblock*}
\begin{textblock*}{1cm}(9.8cm,3.55cm) 
   \textbf{(b)}
\end{textblock*}

We mathematically analyze the expressive power of \proj and \projA for structural representation learning. We prove that the two models are able to distinguish two non-isomorphic equally-sized node sets (including nodes, node-pairs, $\dots$, entire-graphs) that are embedded in almost all sparse regular graphs, where WLGNN always fails to distinguish them unless discriminatory node/edge attributes are available. 
We also prove that the two models are not more powerful than WLGNN when applied to distance regular graphs~\cite{brouwer2012distance}, which implies the limitation of DEs. However, we show that DE has an extra power to learn the structural representations of node-pairs over distance regular graphs~\cite{brouwer2012distance}. 

We experimentally evaluate \proj and \projA on three levels of tasks including node-structural-role classification (node-level), link prediction (node-pair-level), triangle prediction (node-triad-level). Our methods outperform WLGNN on all three tasks by up-to 15\% improvement in average accuracy. Our methods also outperform other baselines specifically designed for these tasks.


\vspace{-0.05cm}
\section{Preliminaries} \label{sec:prem}
\vspace{-0.05cm}
In this section we formally define the notion of structural representation and review how WLGNN learns structural representation and its relation to the 1-WL test. 
\vspace{-0.1cm}
\subsection{Graph Representation Learning}

\begin{definition}
We consider an undirected graph which can be represented as $G=(V, E, \mathbf{A})$, where $V=[n]$ is the node set, $E\subseteq V\times V$ is the edge set, and $\mathbf{A}$ contains all features in the space $\mathcal{A}\subset \mathbb{R}^{n\times n\times k}$. 
Its diagonal component, $\mathbf{A}_{vv\cdot}$, denotes the node attributes of node $v(\in V)$, while its off-diagonal component in $\mathbf{A}_{vu\cdot}$ denotes the node-pair attributes of $(v,u)$.  We set $\mathbf{A}_{vu\cdot}$ as all zeros if $(v,u)\not\in E$. In practice, graphs are usually sparse, \textit{i.e.}, $|E|\ll n^2$. We introduce $A\in \{0,1\}^{n\times n}$ to denote the adjacency matrix of $G$ such that $A_{uv} = 1$ iff $(u,v)\in E$. Note that $A$ can be also viewed as one slice of the feature tensor $\mathbf{A}$. If no node/edge attributes are available, we let $\mathbf{A} = A$. 
\end{definition}

\begin{definition} \label{def:nodeset}
The node permutation denoted by $\pi$ is a bijective mapping from $V$ to $V$. All possible $\pi$'s are collected in the permutation group $\Pi_n$. We denote $\pi$ acting on a subset $S$ of $V$ as $\pi(S) = \{\pi(i)| i\in S\}$. We further define $\pi(\mathbf{A})_{uv\cdot} = \mathbf{A}_{\pi^{-1}(u)\pi^{-1}(v)\cdot}$ for any $(u,v)\in V\times V$.
\end{definition}
\begin{definition}\label{def:tuple} \vspace{-0.05cm}
Denote all $p$-sized subsets $S$ of $V$ as $S \in \mathcal{P}_p(V)$ and define the space $\Omega_p = \mathcal{P}_p(V)\times \mathcal{A}$. For two tuples $\mathcal{T}_1 = (S^{(1)}, \mathbf{A}^{(1)})$ and $\mathcal{T}_2 = (S^{(2)}, \mathbf{A}^{(2)})$ in $\Omega_p$, we call that that they are \emph{isomorphic} (otherwise \emph{non-isomorphic}), if $\exists \pi \in \Pi_n$ such that $S^{(1)} = \pi(S^{(2)})$ and $\mathbf{A}^{(1)}=\pi(\mathbf{A}^{(2)})$. 
\end{definition}
\begin{definition}
A function $f$ defined on $\Omega_p$ is \emph{invariant} if $\forall \pi\in \Pi_n$, $f(S, \mathbf{A})  = f(\pi(S), \pi(\mathbf{A}))$. 
\end{definition}
\begin{definition}
The \emph{structural representation} of \emph{a tuple} $(S, \mathbf{A})$ is an invariant function $\Gamma(\cdot): \Omega_p \rightarrow \mathbb{R}^d$ where $d$ is the dimension of representation. 
Therefore, if two tuples are isomorphic, they should have the same structural representation. 
\end{definition}
\vspace{-0.1cm}
The invariant property is critical for the inductive and generalization capability as it frees structural representations from node identifiers and effectively reduces the problem dimension by incorporating the symmetry of the parameter space~\cite{maron2018invariant} (\textit{e.g.}, the convolutional layers in GCN~\cite{kipf2016semi}). The invariant property also implies that structural representations do not allow encoding the absolute positions of $S$ in the graph. 

The definition of structural representation is very general. Suppose we set two node sets $S^{(1)}$, $S^{(2)}$ as two single nodes and set two graph structures $\mathbf{A}^{(1)}$ and $\mathbf{A}^{(2)}$ as the ego-networks around these two nodes. Then, the definition of structural representation provides a mathematical characterization the concept ``structural roles'' of nodes~\cite{henderson2012rolx,ribeiro2017struc2vec,donnat2018learning}, where two far-away nodes could have the same structural roles (representations) as long as their ego-networks have the same structure. 





Note that depending on the application one can vary/select the size $p$ of the node set $S$. For example, when $p=1$ then we are in the regime of node classification, $p=2$ is link prediction, and when $S=V$, structural representations reduce to entire graph representations. However, in this work we will primarily focus on the case that the node set $S$ has a fixed and small size $p$, where $p$ does not depend on the graph size $n$. Although Corollary~\ref{col:graphpower} later shows the potential of our techniques on learning the entire graph representations, this is not the main focus of our work here. We expect the techniques proposed here can be further used for entire-graph representations while we leave the detailed investigation for future work. 


Although structural representation defines a more general concept, it shares some properties with traditional entire-graph representation. For example, the universal approximation theorem regarding entire-graph representation~\cite{chen2019equivalence} can be directly generalized to the case of structural representations: 

\begin{theorem}\label{thm:univapp}
If structural representations $\Gamma$ are different over any two non-isomorphic tuples $\mathcal{T}_1$ and $\mathcal{T}_2$ in $\Omega_p$, then for any invariant function $f: \Omega_p \rightarrow \mathbb{R}$, $f$ can be universally approximated by feeding $\Gamma$ into a 3-layer feed-forward neural network with ReLu as the activation function, as long as (1) the feature space $\mathcal{A}$ is compact and (2) $f(S, \cdot)$ is continuous over $\mathcal{A}$ for any $S\in \mathcal{P}_p(V)$. \vspace{-0.1cm}

\end{theorem}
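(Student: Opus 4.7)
The plan is to reduce the statement to the classical universal approximation theorem (UAT) for feed-forward ReLU networks by showing that $f$ factors continuously through $\Gamma$; this mirrors the strategy used by Chen et al.~\cite{chen2019equivalence} at the level of entire graphs, adapted here to the more general space $\Omega_p$. The first step is to establish the factorization $f = g\circ\Gamma$ on the image of $\Gamma$: by the injectivity hypothesis, whenever $\Gamma(\mathcal{T}_1)=\Gamma(\mathcal{T}_2)$ the two tuples must be isomorphic, and invariance of $f$ then yields $f(\mathcal{T}_1)=f(\mathcal{T}_2)$, so $g(\Gamma(\mathcal{T})):=f(\mathcal{T})$ is a well-defined single-valued function on $\Gamma(\Omega_p)\subseteq\mathbb{R}^d$.

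The second step is to verify that $g$ is continuous on the compact set $\Gamma(\Omega_p)$. Since $\mathcal{P}_p(V)$ is finite, $\Omega_p = \bigcup_{S}\{S\}\times\mathcal{A}$ splits into finitely many compact sheets; on each sheet $f(S,\cdot)$ is continuous by hypothesis, and $\Gamma(S,\cdot)$ is continuous as any reasonable parametric $\Gamma$ is. The identifications induced on $\Gamma(\{S\}\times\mathcal{A})$ by the stabilizer of $S$ in $\Pi_n$ are precisely those respected by $f$ through its invariance, so on the appropriate quotient $\Gamma|_{\{S\}\times\mathcal{A}}$ becomes a continuous bijection from a compact Hausdorff space onto its image and hence a homeomorphism. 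This makes $g$ continuous on each sheet-image; patching the finitely many sheet-images together yields continuity of $g$ on all of $\Gamma(\Omega_p)$.

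With $g$ continuous on a compact subset of $\mathbb{R}^d$, I would extend it to a continuous $\tilde g$ on a compact ball via Tietze's extension theorem, and then invoke the classical UAT for ReLU networks: for any $\epsilon>0$ there exists a one-hidden-layer ReLU network $\widehat g$ with $\sup_{z}|\tilde g(z)-\widehat g(z)|<\epsilon$, which, counted together with its input and output layers, is the three-layer network of the statement. Composition with $\Gamma$ then delivers the promised uniform approximation of $f$. The main obstacle is the continuity argument in the middle: it requires carefully matching the isomorphism identifications on $\Omega_p$ (coming from $\Pi_n$ acting jointly on both $S$ and $\mathbf{A}$) against the values of $f$, which is precisely where the injectivity of $\Gamma$ across isomorphism classes and the per-sheet continuity of $f(S,\cdot)$ must conspire. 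Once continuity of $g$ is established, the remainder is a standard appeal to UAT and compactness.
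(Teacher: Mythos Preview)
Your proposal is correct and follows the same underlying idea as the paper---factor $f$ through $\Gamma$ using the separation hypothesis, then invoke universal approximation---but the packaging differs. The paper does not argue continuity of the factor $g$ directly on $\Gamma(\Omega_p)$ as you do. Instead, it first builds a bijection between $\Omega_p$ and an augmented tensor space $\mathcal{A}'\subset\mathbb{R}^{n\times n\times(k+1)}$ by appending to each $\mathbf{A}$ an extra diagonal $0/1$ slice that encodes the membership indicator of $S$; compactness of $\mathcal{A}'$ follows from compactness of $\mathcal{A}$ together with the finiteness of $\mathcal{P}_p(V)$, and a suitable metric on $\mathcal{A}'$ makes the transferred function $f'$ continuous. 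At that point the paper simply cites Theorem~4 of \cite{chen2019equivalence} for graph-level representations, applied to $\mathcal{A}'$, and pulls the conclusion back through the bijection.

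What each approach buys: the paper's encoding trick is shorter and sidesteps the quotient/patching argument entirely---once $S$ is absorbed into the feature tensor, the problem is literally an instance of the entire-graph result, so no new analysis is needed. Your route is more self-contained: you do not rely on the black-box Theorem~4, and your explicit use of the closed-cover pasting lemma plus the compact--Hausdorff homeomorphism argument makes the continuity of $g$ transparent. One point worth tightening in your write-up is the implicit assumption that $\Gamma(S,\cdot)$ is continuous in $\mathbf{A}$; the theorem statement does not state this, and your homeomorphism step needs it (the paper's reduction to \cite{chen2019equivalence} implicitly carries the same requirement).
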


Theorem~\ref{thm:univapp} formally establishes the relation between learning structural representations and distinguishing non-isomorphic structures, \textit{i.e.}, $\Gamma(\mathcal{T}_1) \neq \Gamma(\mathcal{T}_2)$ iff $\mathcal{T}_1$ and $\mathcal{T}_2$ are non-isomorphic. However, no polynomial algorithm has been found to distinguish even just non-isomorphic entire graphs ($S=V$) without node/edge attributes ($\mathbf{A}=A$), which is known as the graph isomorphism problem~\cite{babai2016graph}. In this work, we will use the range of non-isomorphic structures that GNNs can distinguish to characterize their expressive power for graph representation learning. 

\subsection{Weisfeiler-Lehman Tests and WLGNN for Structural Representation Learning} \label{sec:WLGNN}
Weisfeiler-Lehman test (WL-test) is a family of very successful algorithmic heuristics used in graph isomorphism problems~\cite{weisfeiler1968reduction}. 1-WL test, the simplest one among this family, starts with coloring nodes with their degrees, then it  iteratively aggregates the colors of nodes and their neighborhoods, and hashes the aggregated colors into unique new colors. The coloring procedure finally converges to some static node-color configuration. Here a node-color configuration is a multiset that records the types of colors and their numbers. Different node-color configurations indicate two graphs are non-isomorphic while the reverse statement is not always true. 

More than the graph isomorphism problem, the node colors obtained by the $1$-WL test naturally provide a test of structural isomorphism. Consider two tuples $\mathcal{T}_1 = (S^{(1)}, \mathbf{A}^{(1)})$ and $\mathcal{T}_2 = (S^{(2)}, \mathbf{A}^{(2)})$ according to Definition~\ref{def:tuple}. We temporarily ignore node/edge attributes for simplicity, so $\mathbf{A}^{(1)}, \mathbf{A}^{(2)}$ reduce to adjacent matrices. It is easy to show that different node-color configurations of nodes in $S^{(1)}$ and in $S^{(2)}$ obtained by the 1-WL test also indicate that $\mathcal{T}_1$ and $\mathcal{T}_2$ are not isomorphic. 

WLGNNs refer to those GNNs that mimic the 1-WL test to learn structural representation, which is summarized in Fig.~\ref{fig:WLGNN} (b). It covers many well-known GNNs of which difference may appear in the implementation of neural networks $f_i$ and set-poolings AGG$(\cdot)$ (Fig.~\ref{fig:WLGNN} (b)), including GCN~\cite{kipf2016semi}, GraphSAGE~\cite{hamilton2017inductive}, GAT~\cite{velivckovic2017graph}, MPNN~\cite{gilmer2017neural}, GIN~\cite{xu2018powerful} and many others~\cite{maron2018invariant}. Note that we use WLGNN-$p$ to denote the WLGNN that is to learn structural representations of node sets $S$ with size $|S|=p$. One may directly choose $S=V$ to obtain the entire-graph representation. Theoretically, the structural representation power of WLGNN-$p$ is provably bounded by the 1-WL test~\cite{xu2018powerful}. The result can be also generalized to the case of structural representations as follows. 


\begin{theorem}\label{thm:1-WL-limit}
Consider two tuples $\mathcal{T}_1 = (S^{(1)}, \mathbf{A}^{(1)})$ and $\mathcal{T}_2 = (S^{(2)}, \mathbf{A}^{(2)})$ in $\Omega_p$. If $\mathcal{T}_1, \,\mathcal{T}_2$ cannot be distinguished by the 1-WL test, then the corresponding outputs of WLGNN-$p$  satisfy $\Gamma(\mathcal{T}_1) = \Gamma(\mathcal{T}_2)$. On the other side, if they can be distinguished by the 1-WL test and we suppose aggregation operations (AGG) and neural networks $f_1,\,f_2$ are all injective mappings, then with a large enough number of layers $L$, the outputs of WLGNN-$p$ also satisfy $\Gamma(\mathcal{T}_1)\neq \Gamma(\mathcal{T}_2)$. 
\end{theorem}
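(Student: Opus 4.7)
The plan is to mirror the proof of the analogous result for entire-graph representations in Xu et al.~(GIN) and to lift it to the $p$-sized node-set setting. The workhorse is an induction on the layer/iteration index $l$ establishing a correspondence between the $1$-WL color $c_v^{(l)}$ of a node $v$ and its WLGNN hidden representation $h_v^{(l)}$, after which the final readout over $S$ is handled as a pooling step on a sub-multiset of colors/representations.

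For the first direction (``$1$-WL cannot distinguish $\Rightarrow$ same output''), I would show by induction on $l$ that for any two nodes $u$ in the graph of $\mathcal{T}_1$ and $v$ in the graph of $\mathcal{T}_2$, $c_u^{(l)}=c_v^{(l)}$ implies $h_u^{(l)}=h_v^{(l)}$. The base case is immediate since both initializations agree with the node attributes $\mathbf{A}_{vv}$ (reducing to degree information when no attributes are present). In the inductive step, two nodes sharing the same $(l{+}1)$-th $1$-WL color must share the same $l$-th color and the same multiset of neighboring $l$-th colors; by the inductive hypothesis this lifts to the equality of the neighborhood multisets $\{\!\{h_w^{(l)} : w\in \mathcal{N}_u\}\!\}=\{\!\{h_w^{(l)} : w\in \mathcal{N}_v\}\!\}$, and the determinism of AGG and $f_1,f_2$ then forces $h_u^{(l+1)}=h_v^{(l+1)}$. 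Because $1$-WL cannot distinguish $\mathcal{T}_1$ from $\mathcal{T}_2$, the color multisets restricted to $S^{(1)}$ and $S^{(2)}$ coincide at every iteration, so the corresponding representation multisets coincide, and the final readout AGG yields $\Gamma(\mathcal{T}_1)=\Gamma(\mathcal{T}_2)$.

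For the converse (``$1$-WL can distinguish $\Rightarrow$ different output under injectivity''), I would prove the reverse inductive statement: if $c_u^{(l)}\neq c_v^{(l)}$ then $h_u^{(l)}\neq h_v^{(l)}$. Injectivity of AGG on multisets guarantees that distinct neighborhood color multisets at iteration $l$ lift to distinct representation multisets, and injectivity of $f_1$ composed with $f_2$ and AGG then yields distinct $h^{(l+1)}$ values. Since $1$-WL color refinement stabilizes in at most $|V|$ iterations, taking $L$ beyond this stabilization time suffices. At stabilization, distinguishability of $\mathcal{T}_1,\mathcal{T}_2$ means the color multisets on $S^{(1)}$ and $S^{(2)}$ differ; by the induction they lift to distinct representation multisets, and the injective readout AGG over $S$ produces $\Gamma(\mathcal{T}_1)\neq\Gamma(\mathcal{T}_2)$.

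The main obstacle is mostly careful bookkeeping rather than deep mathematics: one must formalize what ``$1$-WL distinguishes tuples $(S,\mathbf{A})$'' means (namely, comparing color multisets restricted to $S$, with $\mathbf{A}$ used to refine the coloring update via both diagonal and off-diagonal entries), and verify that the sub-multiset readout preserves injectivity in the converse direction. Some care is also needed at initialization: WLGNN starts from $h_v^{(0)}=\mathbf{A}_{vv}$ rather than from node degrees, so the $1$-WL baseline should be aligned to start from the same attribute coloring; when no attributes are present, one aggregation step recovers degree information, so the two initializations differ by at most a constant iteration-index shift. Existence of injective AGG mappings on multisets over a countable feature space is standard (e.g., the sum-of-embeddings construction of Xu et al.), so no additional existence obstruction arises.
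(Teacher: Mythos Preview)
Your proposal is correct and takes essentially the same approach as the paper: the paper's proof is a brief deferral to Lemma~2 and Theorem~3 of Xu et al.\ (GIN), noting that the only change from the entire-graph case is that the final readout $\mathrm{AGG}$ is taken over the subset $S$ rather than over all of $V$. Your plan spells out exactly this induction-on-layers argument in more detail, including the correct formalization of ``1-WL distinguishes $(S,\mathbf{A})$'' as comparison of color multisets restricted to $S$, so there is nothing substantively different to compare.
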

\vspace{-0.1cm}
Because of Theorem~\ref{thm:1-WL-limit}, WLGNN inherits the limitation of the 1-WL test. For example, WLGNN cannot distinguish two equal-sized node sets  in all $r$-regular graphs (unless node/edge features are discriminatory). Here, a $r$-regular graph means that all its nodes have degree $r$. Therefore, researchers have recently focused on designing GNNs with expressive power greater than the 1-WL test. Here we will improve the power of GNNs by developing a general class of structural features. 

\vspace{-0.2cm}
\section{Distance Encoding and Its Power}\label{sec:PE}
\vspace{-0.15cm}
\subsection{Distance Encoding}

Suppose we aim to \emph{learn the structural representation of the target node set $S$.} Intuitively, our proposed DE will then encode the distance from $S$ to any other node $u$. We define DE as  follows: 
\begin{definition} \label{def:de}
Given a target set of nodes $S\in 2^V\backslash\emptyset$ of $G$ with the adjacency matrix $A$, we denote \emph{distance encoding} as a function $\zeta(\cdot|S, A): V \rightarrow \mathbb{R}^{k}$. $\zeta$ should also be permutation invariant, \textit{i.e.}, $\zeta(u|S, A)=\zeta(\pi(u)|\pi(S), \pi(A))$ for all $u\in V$ and $\pi\in \Pi_n$. Then we denote DEs \textit{w.r.t.} the size of $S$ and call them as \emph{DE-$p$} if $|S|=p$. 
\end{definition}
Later we use $\zeta(u|S)$ for brevity where $A$ could be inferred from the context. 
For simplicity, we choose DE as a set aggregation (\textit{e.g.}, the sum-pooling) of DEs between nodes $u,\,v$ where $v\in S$: \vspace{-0.1cm}
\begin{align} \label{eq:node2setPE}
\zeta(u|S) = \text{AGG}(\{\zeta(u|v) | v\in S\})
\end{align}
 More complicated DE may be used while this simple design can be efficiently implemented and achieves good empirical performance. Then, the problem reduces to choosing a proper $\zeta(u|v)$. Again for simplicity, we consider the following class of functions that is based on the mapping of a list of landing probabilities of random walks from $v$ to $u$ over the graph, \textit{i.e.}, 
\begin{align} \label{eq:node2nodePE}
\zeta(u|v) = f_3(\ell_{uv}), \; \ell_{uv} = ((W)_{uv}, (W^2)_{uv}, ..., (W^{k})_{uv},...)
\end{align}
where  $W=AD^{-1}$ is the random walk matrix, $f_3$ may be simply designed by some heuristics or be parameterized and learnt as a feed-forward neural network. In practice, a finite length of $\ell_{vu}$, say 3,4, is enough. 
Note that Eq. \eqref{eq:node2nodePE} covers many important distance measures. First, setting $f_3(\ell_{uv})$ as the first non-zero position in $\ell_{uv}$ gives the \emph{shortest-path-distance (SPD)} from $v$ to $u$. We denote this specific choice as $\zeta_{spd}(u|v)$. Second, one may also use \emph{generalized PageRank scores}~\cite{li2019optimizing}:
\begin{align}\label{eq:gpr}
\zeta_{gpr}(u|v) = \sum_{k\geq 1} \gamma_k (W^k)_{uv} = (\sum_{k\geq 1} \gamma_k W^k)_{uv}, \quad \gamma_k\in \mathbb{R},\,\text{for all $k\in \mathbb{N}$ }.
\end{align}
\begin{table}[h]
\vspace{-1.2cm}
\end{table}

Note that the permutation invariant property of DE is beneficial for inductive learning, which fundamentally differs from positional node embeddings such as node2vec~\cite{grover2016node2vec} or one-hot node identifiers. In the rest of this work, we will show that DE improves the expressive power of GNNs in both theory and practice. In Section~\ref{sec:PEGNN}, we use DE as extra node features. We term this model as \proj, and theoretically demonstrate its expressive power. In the next subsection, we further use DE-1 to control the aggregation procedure of WLGNN. We term this model as \projA and extend our theory there. 

\vspace{-0.1cm}
\subsection{\proj --- Distance Encodings as Node Features}\label{sec:PEGNN}
\vspace{-0.1cm}
%

DE can be used as extra node features. Specifically, we improve WLGNNs by setting $h_v^{(0)} = \mathbf{A}_{vv} \oplus \zeta(v|S) $ where $\oplus$ is the concatenation. We call the obtained model \proj. We similarly use \textbf{\proj-$p$} to specify the case when $|S|=p$. For simplicity, we give the following definition.

\begin{definition}
\proj is called \emph{proper} if $f_1, f_2$, $\text{AGG}$s in the WLGNN (Fig.~\ref{fig:WLGNN} (b)), and $\text{AGG}$ in Eq.~\eqref{eq:node2setPE}, $f_3$ in Eq.~\eqref{eq:node2nodePE} are injective mappings as long as the input features are all countable.
\end{definition}
We know that a proper \proj exists because of the universal approximation theorem of feed-forward networks (to construct $f_i$, $i\in\{1,2,3\}$) and Deep Sets~\cite{zaheer2017deep} (to construct AGGs). 
 
 \vspace{-0.1cm}
\subsubsection{The Expressive Power of \proj}
 \vspace{-0.1cm}

Next, we demonstrate the power of \proj to distinguish structural representations. Recall that the fundamental limit of WLGNN is the 1-WL test for structural representation (Theorem~\ref{thm:1-WL-limit}). One important class of graphs that cannot be distinguished by the 1-WL test are regular graphs (although, in practice, node/edge attributes may help diminish such difficulty by breaking the symmetry). In theory, we may consider the most difficult case by assuming that no node/edge attributes are available. In the following, our main theorem shows that even in the most difficult case, \proj is able to distinguish two equal-sized node sets that are embedded in almost all $r$-regular graphs. One example where \proj using $\zeta_{spd}(\cdot)$ (SPD) is shown in Fig.~\ref{fig:WLGNN} (a): The blue nodes can be easily distinguished from the green or red nodes as SPD$=3$ may appear between two nodes when a blue node is the node set of interest, while all SPDs from other nodes to red and green nodes are less than 3. Actually, \proj-1 with SPD may also distinguish the red or green nodes by investigating its procedure in details (Fig.~\ref{fig:subtrees} in Appendix). 

\begin{theorem}\label{thm:power}
Given two equal-sized sets $S^{(1)}, S^{(2)}\subset V$, $|S^{(1)}| = |S^{(2)}|=p$. Consider two tuples $\mathcal{T}^{(1)}=(S^{(1)}, \mathbf{A}^{(1)})$ and $\mathcal{T}^{(2)}=(S^{(2)}, \mathbf{A}^{(2)})$ in the most difficult setting where features $\mathbf{A}^{(1)}$ and $\mathbf{A}^{(2)}$ are only different in graph structures specified by $A^{(1)}$ and $A^{(2)}$ respectively. Suppose $A^{(1)}$ and $A^{(2)}$ are uniformly independently sampled from all r-regular graphs over $V$ where $3\leq r< (2\log n)^{1/2}$. Then, for any small constant $\epsilon >0$, within $L \leq \lceil (\frac{1}{2} + \epsilon)\frac{\log n}{\log(r-1)}\rceil $ layers, there exists a proper \proj-$p$ using DEs $\zeta(u|S^{(1)}),\,\zeta(u|S^{(2)})$ for all $u\in V$, such that with probability $1-o(n^{-1})$, the outputs $\Gamma(\mathcal{T}^{(1)})\neq \Gamma(\mathcal{T}^{(2)})$. Specifically, $f_3$ can be simply chosen as SPD, \textit{i.e.}, $\zeta(u|v) = \zeta_{spd}(u|v)$. The big-O notations here and later are w.r.t. $n$.
\end{theorem}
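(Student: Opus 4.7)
The plan is to split the claim into two halves: (i) a deterministic encoding step showing that a proper DEGNN-$p$ with $L$ layers and $f_3=\mathrm{SPD}$ injectively encodes the rooted SPD-labeled $L$-ball around the target set $S$, and (ii) a probabilistic distinguishing step showing that with probability $1-o(n^{-1})$ these rooted labeled balls differ between $\mathcal{T}^{(1)}$ and $\mathcal{T}^{(2)}$. Sub-claim (i) extends the WL-subtree argument behind Theorem~\ref{thm:1-WL-limit}: since $f_1,f_2,f_3$ and every AGG are injective on countable inputs, induction on the layer index $\ell$ gives that $h_v^{(\ell)}$ encodes the SPD-labeled BFS tree of depth $\ell$ rooted at $v$, and the final AGG over $v\in S$ injectively encodes the multiset of such depth-$L$ trees for $v\in S$. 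Consequently, non-isomorphic rooted SPD-labeled $L$-balls around $S^{(1)}$ and $S^{(2)}$ produce distinct outputs $\Gamma$.

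For sub-claim (ii), I would use standard facts about the local geometry of uniform random $r$-regular graphs. When $3\le r<\sqrt{2\log n}$ and $L=\lceil(\tfrac{1}{2}+\epsilon)\tfrac{\log n}{\log(r-1)}\rceil$, the $L$-ball around the fixed $p$-set contains $\Theta((r-1)^L)=\Theta(n^{1/2+\epsilon})$ vertices. Just below the tree-like radius $\tfrac{1}{2}\log n/\log(r-1)$ the ball is w.h.p.\ a pure $(r{-}1)$-ary tree and carries no random information; crossing the threshold introduces a polynomially growing and genuinely random number of BFS ``collisions'', i.e.\ short cycles that meet an SPD level set at two different branches. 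Each such collision perturbs the SPD-labeled neighborhood of at least one vertex away from the generic tree pattern (one neighbor at SPD $k-1$, $r-1$ neighbors at SPD $k+1$), so it is visible to $\Gamma$ via (i).

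The distinguishing statistic I would then track is the joint profile $\Phi=(N_k,M_k,C_k)_{k\le L}$, where $N_k$ counts vertices at SPD $k$ from $S$, $M_k$ counts intra-level edges at SPD $k$, and $C_k$ counts backward-skipping edges. All three quantities are deterministic in the tree regime but acquire diverging variance past the threshold, and the joint distribution of $\Phi$ carries enough entropy---essentially parametrizing how $\Theta(n^{2\epsilon})$ excess edges can be glued onto the BFS skeleton---that, by a configuration-model or switching calculation combined with a union bound over polynomially many possible profiles, two independent samples collide on $\Phi$ with probability $o(n^{-1})$. Since $\Phi$ is recoverable from $\Gamma$ by (i), the two outputs must differ with the stated probability.

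The principal obstacle is sub-claim (ii); within it the delicate point is choosing a statistic refined enough to beat the $o(n^{-1})$ failure budget. Any single scalar, such as the total excess-edge count, concentrates too sharply and would coincide between the two samples with probability only $\Theta(n^{-1/2-\epsilon})$, so a vector-valued refinement like $\Phi$ is necessary. The upper bound $r<\sqrt{2\log n}$ is precisely what keeps the $L$-ball size $n^{1/2+\epsilon}$ strictly $o(n)$, so that configuration-model couplings and second-moment estimates for short cycles remain valid in this regime. A minor secondary check is that fixing $f_3=\mathrm{SPD}$ rather than using a fully learned map of the landing-probability vector $\ell_{uv}$ does not weaken the injectivity induction in (i), which is fine because SPD is already injective on the initial SPD distribution needed to launch the WL refinement.
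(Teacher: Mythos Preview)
Your two-step decomposition---an injective WL-style encoding of the SPD-labeled depth-$L$ subtree, followed by a probabilistic anti-concentration argument on level-wise BFS statistics in a random regular graph---is exactly the paper's strategy. The paper first reduces to $|S|=1$ via a short lemma, works explicitly in the configuration model, and takes as its level-$k$ statistic the full in-degree profile $C_k=(a_{1,k},a_{2,k},\dots)$ recording how many nodes of $Q_{k+1}$ have exactly $i$ parents in $Q_k$; its key technical lemma shows $\max_{C_k}\mathbb{P}(C_k\mid p_k,q_k)=O(n^{1/2}/p_k)$, and the product over $\Theta(\epsilon\log n/\log(r-1))$ consecutive levels past the tree threshold is then super-polynomially small. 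Your coarser profile $(N_k,M_k)$ carries essentially the same entropy, since $N_{k+1}\approx p_k-a_{2,k}$ once $p_k$ is fixed by the earlier levels, so the plan is sound.

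Three small corrections are worth making. First, in any BFS no edge skips a level, so your ``backward-skipping'' count $C_k$ is identically zero and contributes nothing. Second, the phrase ``union bound over polynomially many possible profiles'' is the wrong device here: the number of profiles is not polynomial, and what you actually need is a level-by-level anti-concentration (mode) bound on the conditional law of the profile, which is precisely what the paper's lemma supplies. Third, the hypothesis $r<\sqrt{2\log n}$ is not what keeps the $L$-ball size $o(n)$---that already follows from the choice of $L$---but what guarantees that the fraction of \emph{simple} graphs produced by the configuration model is $\exp(-(r^2-1)/4)=\Omega(n^{-1/2})$; this is why the paper first proves the collision bound at strength $o(n^{-3/2})$ on the configuration model and only then transfers to simple $r$-regular graphs by dividing out this factor. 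Your sketch names the configuration model but omits this $n^{-1/2}$ bookkeeping step, so as written it would only yield $o(n^{-1/2})$ rather than $o(n^{-1})$ for simple regular graphs.
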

\begin{remark}\label{rmk:same-graph}
In some cases, we are to learn representations of structures that lie in a single large graph, \textit{i.e.}, $A^{(1)}=A^{(2)}$. Actually, there is no additional difficulty to extend the proof of Theorem~\ref{thm:power} to this setting as long as $A^{(1)}(=A^{(2)})$ is uniformly sampled from all r-regular graphs and $S^{(1)}\cap S^{(2)}=\emptyset$. The underlying intuition is that for large $n$, the local subgraphs (within $L$-hop neighbors) around two non-overlapping fixed sets $S^{(1)}, S^{(2)}$ are almost independent. Simulation results to validate the single node case ($p=1$) of Theorem~\ref{thm:power} and Remark~\ref{rmk:same-graph} are shown in Fig.~\ref{fig:2PE} (a). Discussion on the node sets over irregular graphs that WLGNN cannot distinguish is in Appendix.~\ref{sec:irregular}.
\end{remark}

Actually, the power of structural representations of small node sets can be used to further characterize the power of entire graph representations. Consider that we directly aggregate all the representations of nodes of a graph output by \proj-1 via set-pooling as the graph representation, which is a common strategy adopted to learn graph representation via WLGNN-1~\cite{gilmer2017neural,xu2018powerful,zhang2018end}. So how about the power of \proj-1 to represent graphs? To answer this question, suppose two $n$-sized $r$-regular graphs $\mathbf{A}^{(1)}$ and $\mathbf{A}^{(2)}$ satisfy the condition in Theorem~\ref{thm:power}. Then, by using a union bound, Theorem~\ref{thm:power} indicates that for a node $v\in V$, its representation $\Gamma((v, \mathbf{A}^{(1)})) \not\in \{\Gamma((u, \mathbf{A}^{(2)}))| u\in V\}$ with probability $1 - no(n^{-1}) = 1 - o(1)$. Therefore, these two graphs $\mathbf{A}^{(1)}$ and $\mathbf{A}^{(2)}$ can be distinguished via \proj-1 with high probability. We formally state this result in the following corollary.

\begin{corollary}\label{col:graphpower}
Suppose two graphs are uniformly independently sampled from all $n$-sized $r$-regular graphs over $V$ where $3\leq r< (2\log n)^{1/2}$. Then, within $L \leq \lceil (\frac{1}{2} + \epsilon)\frac{\log n}{\log(r-1)} \rceil$ layers, \proj-1 can distinguish these two graphs with probability $1 - o(1)$ by being concatenated with injective set-pooling over all the representations of nodes.
\end{corollary}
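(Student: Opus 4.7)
The strategy is essentially the sketch given in the paragraph preceding the corollary: reduce graph-level distinguishability to single-node distinguishability, invoke Theorem~\ref{thm:power} for $p=1$, and close with a union bound. The final graph representation is produced by composing DEGNN-1 with an injective multiset-pooling, whose existence on countable inputs is guaranteed by Deep Sets~\cite{zaheer2017deep}. Hence it suffices to prove that with probability $1 - o(1)$ the two multisets $\{\Gamma((\{v\}, \mathbf{A}^{(1)}))\,|\,v \in V\}$ and $\{\Gamma((\{u\}, \mathbf{A}^{(2)}))\,|\,u \in V\}$ of node representations are unequal.

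To do this I would fix an arbitrary node $v_0 \in V$ and, for each $u \in V$, consider the pair of tuples $(\{v_0\}, \mathbf{A}^{(1)})$ and $(\{u\}, \mathbf{A}^{(2)})$. Because $\mathbf{A}^{(1)}$ and $\mathbf{A}^{(2)}$ are sampled uniformly and independently from the $n$-sized $r$-regular graphs with $3 \le r < (2\log n)^{1/2}$, the hypotheses of Theorem~\ref{thm:power} are satisfied with $p = 1$. Hence for every fixed $(v_0, u)$ there is a proper DEGNN-1 with depth $L \le \lceil (\tfrac{1}{2}+\epsilon)\tfrac{\log n}{\log(r-1)} \rceil$, using the SPD-based distance encoding, for which $\Gamma((\{v_0\}, \mathbf{A}^{(1)})) \neq \Gamma((\{u\}, \mathbf{A}^{(2)}))$ with probability $1 - o(n^{-1})$. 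Crucially, the architecture --- depth, the functions $f_1, f_2, f_3$, and the aggregation operators --- depends only on $n$ and $r$, not on $u$, so the \emph{same} DEGNN-1 serves uniformly for all $u$ simultaneously.

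A union bound over the $n$ choices of $u$ then shows that with probability at least $1 - n \cdot o(n^{-1}) = 1 - o(1)$, the vector $\Gamma((\{v_0\}, \mathbf{A}^{(1)}))$ is absent from the multiset of node representations of the second graph. Two multisets of identical cardinality with an element of one missing from the other are distinct, so the injective set-pooling produces different graph-level outputs, completing the argument. The only subtlety I would watch is whether the $o(n^{-1})$ rate in Theorem~\ref{thm:power} is tight enough to survive a union bound over $n$ nodes: this is exactly why the depth bound $L \le \lceil(\tfrac{1}{2}+\epsilon)\tfrac{\log n}{\log(r-1)}\rceil$ is preserved verbatim in the corollary, since in Theorem~\ref{thm:power} it was already chosen so that the $L$-hop neighborhoods around the anchor sets are sparse enough to be treated as (nearly) independent local trees; no further slack is required here.
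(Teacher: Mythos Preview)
Your proposal is correct and matches the paper's own argument essentially step for step: fix a single node $v_0$ in the first graph, apply Theorem~\ref{thm:power} with $p=1$ to each pair $(\{v_0\},\mathbf{A}^{(1)})$, $(\{u\},\mathbf{A}^{(2)})$, and take a union bound over the $n$ choices of $u$ to conclude that $\Gamma((\{v_0\},\mathbf{A}^{(1)}))\notin\{\Gamma((\{u\},\mathbf{A}^{(2)}))\,|\,u\in V\}$ with probability $1-o(1)$, whence the injective set-pooling separates the two graphs. Your added remark that the architecture is uniform in $u$ is a useful point of rigor that the paper leaves implicit.
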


\begin{figure}[t]
\begin{minipage}{0.335\textwidth}
\flushright
\includegraphics[trim={0.5cm 0cm 1.8cm 1.1cm},clip,width=0.94\textwidth]{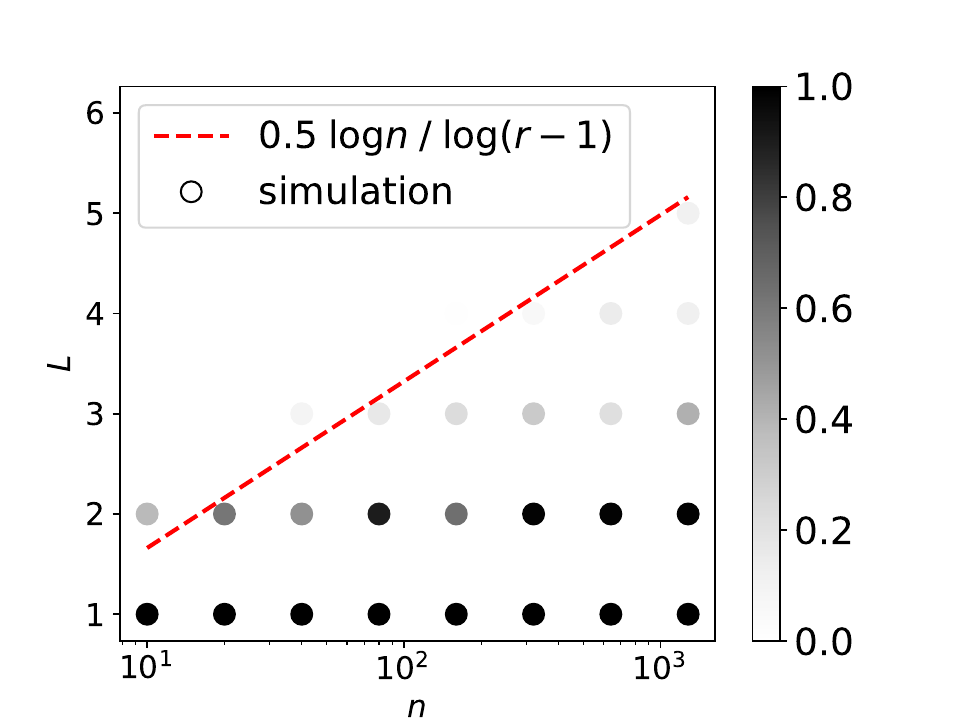}
\end{minipage}
\begin{minipage}{0.664\textwidth}
\hfill\vline\hfill\hfill
\includegraphics[trim={2.5cm 12cm 10cm 4cm},clip,width=0.935\textwidth]{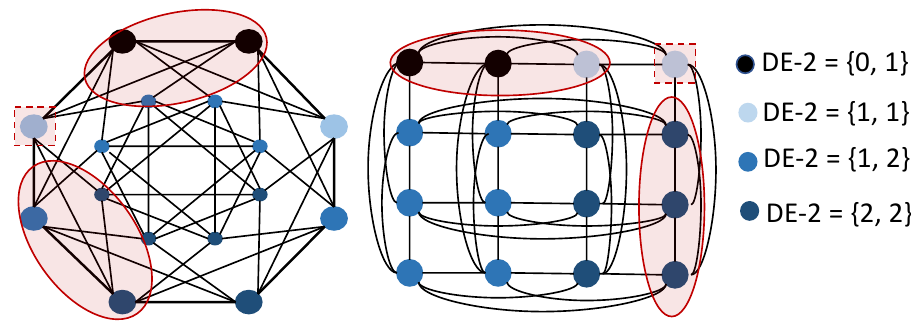}
\end{minipage}
\vspace{-0.25cm}
\caption{\small{\textbf{(a)} Simulation to validate Theorem~\ref{thm:power}. We uniformly at random generate $10^4/n$ 3-regular graphs and compare node representations output by a randomly initialized but untrained \proj-1 with $L$ layers, $L\leq 6$. All the nodes in these graphs are considered and thus for each $n$, there are $10^4$ nodes from the same or different graphs. For any two nodes $u, v$, if $\|h_u^{(L)}-h_v^{(L)}\|_2$ is greater than machine accuracy, they are regarded to be distinguishable. The colors of the scatter plot indicate the portion of two nodes that are not distinguishable by \proj-1. The red line is boundary predicted by our theory, which well matches the simulation.  \textbf{(b)} The power of DE-2. The left is the Shrikhande graph while the right is the $4\times4$ Rook's graph. \proj-1 assigns all nodes with the same representation. \proj-2 may distinguish the structures by learning representations of node-pairs (edges)---the node-pairs colored by black. Each node is colored with its DE-2 that is a set of SPDs to either node in the target edges (Eq.~\eqref{eq:node2setPE}). Note the neighbors of nodes with DE-2$=\{1,1\}$ (covered by dashed boxes) that are highlighted by red ellipses. As these neighbors have different DE-2's, after one layer of \proj-2, the intermediate representations of nodes with DE-2$=\{1,1\}$ are different between these two graphs. Using another layer, \proj-2 can distinguish the representations of two target edges.}}
\vspace{-0.1cm}
\label{fig:2PE}
\end{figure}

\begin{textblock*}{1cm}(3.7cm,4.0cm) 
   \textbf{(a)}
\end{textblock*}
\begin{textblock*}{1cm}(8.9cm,4.0cm) 
   \textbf{(b)}
\end{textblock*} 

One insightful observation is that the structural representation with a small set $S$ may become easier to be learnt than the entire graph representation as the knowledge of the node set $S$ can be viewed as a piece of side information. Models can be built to leverage such information, while when to compute the entire graph representation ($S=V$), the model loses such side information. This could be a reason why the successful probability to learn structural representation of a small node set (Theorem~\ref{thm:power}) is higher than to that of an entire graph in (Corollary~\ref{col:graphpower}), though our derivation of the probabilistic bounds is not tight. Note that DE provides a convenient way to effectively capture such side information. One naive way to leverage the information of $S$ is to simply annotate the nodes in $S$ with binary encoding 1 and those out of $S$ with 0. This is obviously a special case of DE but it is not as powerful as the general DE (even compared to the special case SPD). Think about setting the target node set in Eq.\eqref{eq:node2setPE} as the entire graph ($S=V$), where annotating the nodes in/out of $S$ does not improve the representation power of WLGNN because all nodes are annotated as $1$. However, DE still holds extra representation power: For example, we want to distinguish a graph with two disconnected 3-circles and a graph with a 6-circle. These two graphs generate different SPDs between nodes. 


\subsubsection{The Limitation of \proj}

Next, we show the limitation of \proj. We prove that over a subclass of regular graphs, distance regular graphs (DRG), DE-1 is useless for structural representation. We provide the definition of DRG as follows while we refer interested readers to check more properties of DRGs in~\cite{brouwer2012distance}.

\begin{definition}\label{def:DRG}
A \emph{distance regular graph} is a regular graph such that for any two nodes $v, u\in V$, the number of vertices $w$ s.t. SPD$(w, v)=i$ and SPD$(w, u)=j$, only depends on $i$, $j$ and SPD$(v, u)$. 
\end{definition}
The Shrikhande graph and the $4\times4$ Rook's graph are two non-isomorphic DRGs shown in Fig.~\ref{fig:2PE} (b) (We temporarily ignore the nodes colors which will be discussed later). For simplicity, we only consider connected DRGs that can be characterized by arrays of integers termed intersection arrays. 
\begin{definition}
The \emph{intersection array} of a connected DRG with diameter $\triangle$ is an array of integers $\{b_0, b_1, ..., b_{\triangle-1};c_1, c_2, ..., c_{\triangle}\}$ such that for any node pair $(u, v)\in V\times V$ that satisfies SPD$(v, u)=j$, $b_j$ is the number of nodes $w$ that are neighbors of $v$ and satisfy SPD$(w, u)=j+1$, and $c_j$ is the number of nodes $w$ that are neighbors of $v$ and satisfy SPD$(w, u)=j-1$.
\end{definition}

It is not hard to show that the two DRGs in Fig.~\ref{fig:2PE} (b) share the same intersecion array $\{6, 3;1, 2\}$. The following theorem shows that over distance regular graphs, \proj-1 requires discriminatory node/edge attributes to distinguish structures, which indicates the limitation of DE-1.

\begin{theorem}\label{thm:limit}
 Given any two nodes $v, u\in V$, consider two tuples $\mathcal{T}_1=(v, \mathbf{A}^{(1)})$ and $\mathcal{T}_2=(u, \mathbf{A}^{(2)})$ with graph structures $A^{(1)}$ and $A^{(2)}$ that correspond to two connected DRGs with a same intersection array. Then, \proj-1 must use discriminatory node/edge attributes to distinguish $\mathcal{T}_1$ and $\mathcal{T}_2$. \vspace{-0.1cm} 
\end{theorem}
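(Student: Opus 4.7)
The plan is to show that when DEGNN-1 uses SPD-based DE (and more generally any DE-1 function derived from landing probabilities, which in a DRG also depend only on distance to the target) and no discriminatory attributes, the hidden representation of every node is a function only of its distance to the target node $v$, and moreover this function is determined by the intersection array alone. Since both graphs share the same intersection array, the target representations $h_v^{(L)}$ must coincide, hence $\Gamma(\mathcal{T}_1)=\Gamma(\mathcal{T}_2)$.

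First I would record two structural facts about a connected DRG with intersection array $\{b_0,\dots,b_{\triangle-1};c_1,\dots,c_\triangle\}$: (i) for any node $w$ at distance $i$ from $v$, the number of neighbors of $w$ at distance $i{+}1$, $i$, $i{-}1$ from $v$ equals $b_i$, $a_i := r-b_i-c_i$, and $c_i$ respectively, depending only on $i$ and the array; (ii) the number $k_i$ of vertices at distance $i$ from any fixed node is itself determined by the array (via the recursion $k_{i+1}=k_i b_i / c_{i+1}$). Fact (i) is immediate from Definition~\ref{def:DRG}, and fact (ii) follows by double counting edges between the distance-$i$ and distance-$(i{+}1)$ spheres. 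These facts show that the initial features $h_w^{(0)} = \zeta_{spd}(w|v) = \mathrm{SPD}(w,v)$ (since $\mathbf{A}_{ww}$ is vacuous in the attribute-free setting) depend only on the distance of $w$ from $v$, and their multiset over $V$ is determined by the intersection array.

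Next I would prove by induction on $\ell$ the key invariant: there exists a function $g_\ell$, depending only on the intersection array and the chosen neural networks $f_1,f_2,\mathrm{AGG}$, such that for every node $w$ in either graph,
\[
h_w^{(\ell)} \;=\; g_\ell\!\left(\mathrm{SPD}(w,v)\right).
\]
The base case $\ell=0$ is exactly $g_0(i)=i$. For the inductive step, by the inductive hypothesis the multiset of neighbor representations is
\[
\{\!\!\{\, h_{w'}^{(\ell)} : w' \in \mathcal{N}_w \,\}\!\!\}
\;=\; \{\!\!\{\, g_\ell(i{-}1)\times c_i,\; g_\ell(i)\times a_i,\; g_\ell(i{+}1)\times b_i \,\}\!\!\},
\]
where $i=\mathrm{SPD}(w,v)$ (with the obvious convention at the boundaries $i=0$ and $i=\triangle$). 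Because $\mathbf{A}_{ww'}$ carries no information beyond the edge, the aggregated message and thus $h_w^{(\ell+1)}$ are a function of $i$ alone and of the intersection-array constants; this defines $g_{\ell+1}$. The same function $g_{\ell+1}$ is produced in both graphs because both share the same intersection array.

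Finally I would apply the invariant at $w=v$ with $\mathrm{SPD}(v,v)=0$: we obtain $h_v^{(L)}=g_L(0)$ in both graphs, so $\Gamma(\mathcal{T}_1)=h_v^{(L)}=h_u^{(L)}=\Gamma(\mathcal{T}_2)$ (the readout AGG on a singleton set is trivial). A brief remark would note that the same argument applies to any DE-1 $\zeta(w|v)$ of the form \eqref{eq:node2nodePE}: in a DRG the entries $(W^k)_{wv}$ depend only on $\mathrm{SPD}(w,v)$, because $W^k_{wv}$ is a sum over walks of length $k$ and the number of such walks ending at $w$ depends only on the distance $\mathrm{SPD}(w,v)$ and the intersection array. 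Hence the invariant and conclusion extend verbatim. The main obstacle is bookkeeping in the inductive step --- in particular verifying that the counts $b_i,a_i,c_i$ alone (plus the inductive hypothesis) pin down the aggregated multiset with no residual dependence on the identity of $w$ --- but this is a direct consequence of Definition~\ref{def:DRG}, so no heavy calculation is needed.
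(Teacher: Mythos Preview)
Your proposal is correct and follows essentially the same approach as the paper: both arguments hinge on the fact that in a DRG the multiset of neighbor distances of a node at distance $i$ from the target is $\{(i{-}1)^{c_i},\,i^{\,b_0-b_i-c_i},\,(i{+}1)^{b_i}\}$, which yields (by induction on the layer / by equality of the rooted computation subtrees) that every node's representation is a function of its SPD to the target and of the intersection array alone. Your remark that landing probabilities in a DRG depend only on SPD likewise matches the paper's Lemma extending the result from $\zeta_{spd}$ to the general DE-1 of Eq.~\eqref{eq:node2nodePE}.
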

Note Theorem~\ref{thm:limit} only works for node representations using DE-1. Therefore, \proj-1 may not associate distinguishable node representations in the two DRGs in Fig.~\ref{fig:2PE} (b). 

However, if we are to learn higher-order structural representations ($|S|\geq 2$) with DE-$p$ ($p\geq2$), \proj-p may have even stronger representation power. We illustrate this point by considering structural representations of two node-pairs that form edges of the two DRGs respectively. Consider two node-pairs that correspond to two edges of these two graphs in Fig.~\ref{fig:2PE} (b) respectively. Then, there exists a proper \proj-2 via using SPD as DE-2, associating these two node-pairs with different representations. Moreover, by simply aggregating the obtained representations of all node-pairs into graph representations via a set-pooling, we may also distinguish these two graphs. 
Note that distinguishing the node-pairs of the two DRGs is really hard, because even the 2-WL test \footnote{We follow the terminology 2-WL test in \cite{cai1992optimal}, which refines representations of node-pairs iteratively and is proved to be more powerful than 1-WL test. This 2-WL test is termed 2-WL' test in~\cite{grohe2017descriptive} or 2-FWL test in~\cite{maron2019provably}. A brief introduction of higher-order WL tests can be found in Appendix~\ref{sec:WL-supp}.} will fail to distinguish any edges in the DRGs with a same intersection array and diameters exactly equal to 2~\footnote{Actually, the 2-WL test may not distinguish edges in a special class of DRGs, termed strongly regular graphs (SRG)~\cite{cai1992optimal}. A connected SRG is a DRG with diameter constrained as 2~\cite{brouwer2012strongly}.}. This means that the recently proposed more powerful GNNs, such as RingGNN~\cite{chen2019equivalence} and PPGN~\cite{maron2019provably}, will also fail in this case. However, it is possible to use \proj-2 to distinguish those two DRGs.


It is interesting to generalize Theorem~\ref{thm:power} to DRGs to demonstrate the power of \proj-$p$ ($p\geq2$). However, missing analytic-friendly random models for DRGs makes such generalization challenging.

\subsection{\projA --- Distance Encoding-1's as Controllers of the Message Aggregation} \label{sec:related}
\proj only uses DEs as initial node features. In this subsection, we further consider leveraging DE-1 between any two nodes to control the aggregation procedure of \proj. Specifically, we propose DE-Aggregation-GNN (\projA) to do the following change 
\begin{align}\label{eq:agg-pe}
\text{AGG}(\{f_2(h_u^{(l)}, \mathbf{A}_{vu})\}_{u\in \mathcal{N}_v})   \rightarrow \text{AGG}(\{(f_2(h_u^{(l)}, \mathbf{A}_{vu}), \zeta(u|v))\}_{u \in V})
\end{align}
Note that the representation power of \projA is at least no worse than \proj because the later one is specialized by aggregating the nodes with $\zeta_{spd}(u|v)=1$, so Theorem~\ref{thm:power}, Corollary~\ref{col:graphpower} are still true. Interestingly, its power is also limited by Theorem~\ref{thm:limit}. We conclude as the follows.

\begin{corollary}\label{col:PEAGNN}
Theorem~\ref{thm:power}, Corollary~\ref{col:graphpower} and Theorem~\ref{thm:limit} are still true for \projA. \vspace{-0.15cm}
\end{corollary}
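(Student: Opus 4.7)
For Theorem~\ref{thm:power} and Corollary~\ref{col:graphpower} the argument is by specialization. DEAGNN's aggregation in Eq.~\eqref{eq:agg-pe} runs over all $u \in V$ but carries $\zeta(u|v)$ as a side channel; choosing $\zeta = \zeta_{spd}$ and designing the injective AGG to discard every pair whose second coordinate is not $1$ recovers the DEGNN update over $\mathcal{N}_v$ exactly. Since a proper DEAGNN may realize such an injective AGG, the DEGNN construction that witnesses Theorem~\ref{thm:power} lifts verbatim to a DEAGNN witness with the same layer budget and the same $1 - o(n^{-1})$ probability bound, and the same lift carries the graph-level union bound of Corollary~\ref{col:graphpower}.

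The substantive work is Theorem~\ref{thm:limit}. Given two connected DRGs $G_1, G_2$ sharing an intersection array with no discriminatory attributes, and targets $v \in V(G_1)$, $u \in V(G_2)$, my plan is induction on the DEAGNN layer $l$ of the following distance-indexed strengthening: for every pair of nodes $w_1 \in V(G_1), w_2 \in V(G_2)$ satisfying $\mathrm{dist}(w_1, v) = \mathrm{dist}(w_2, u)$, the intermediate representations satisfy $h_{w_1}^{(l)} = h_{w_2}^{(l)}$. The base case uses that the initial feature is $\mathbf{A}_{ww} \oplus \zeta(w|\text{target})$: the self-attribute is a common constant, and $\zeta(w_1|v) = f_3(((W^k)_{w_1 v})_{k \geq 1})$ is a function of landing probabilities that in a DRG depend only on $\mathrm{dist}(w_1, v)$ and on the intersection array, a standard property of the Bose--Mesner algebra.

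For the inductive step, the DEAGNN update at $w_1$ aggregates the multiset $\{(f_2(h_{w'}^{(l)}, \mathbf{A}_{w_1 w'}), \zeta(w'|w_1)) : w' \in V(G_1)\}$. By the inductive hypothesis $h_{w'}^{(l)}$ depends only on $\mathrm{dist}(w', v)$; without edge attributes $\mathbf{A}_{w_1 w'}$ depends only on whether $\mathrm{dist}(w_1, w') = 1$; and $\zeta(w'|w_1)$ depends only on $\mathrm{dist}(w', w_1)$. So each multiset entry is a function of the pair $(\mathrm{dist}(w', v), \mathrm{dist}(w', w_1))$, and the number of $w'$ realizing any given pair $(d_1, d_2)$ equals the intersection number $p^{\mathrm{dist}(v, w_1)}_{d_1, d_2}$, an invariant of the intersection array~\cite{brouwer2012distance}. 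Since both graphs share this invariant and $\mathrm{dist}(v, w_1) = \mathrm{dist}(u, w_2)$, the multisets at $(G_1, w_1)$ and $(G_2, w_2)$ coincide, closing the induction; taking $w_1 = v$, $w_2 = u$ at the final layer yields $\Gamma(\mathcal{T}_1) = \Gamma(\mathcal{T}_2)$. The main obstacle is framing the inductive hypothesis: the naive statement $h_v^{(l)} = h_u^{(l)}$ is too weak because DEAGNN's aggregation touches every node of $V$, not just a neighborhood, and only the distance-indexed strengthening exposes the DRG intersection numbers that make the multiset comparison work.
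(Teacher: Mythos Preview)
Your proposal is correct and follows essentially the same route as the paper. For Theorem~\ref{thm:power} and Corollary~\ref{col:graphpower}, both you and the paper argue by specialization (restricting the aggregation to $\zeta_{spd}(u|v)=1$); for Theorem~\ref{thm:limit}, both prove by induction on layers that intermediate representations depend only on the SPD to the target and on the intersection array, and both close the inductive step by counting nodes at prescribed distance-pairs via the DRG intersection numbers (you name them $p^{\mathrm{dist}(v,w_1)}_{d_1,d_2}$ and cite the Bose--Mesner algebra, while the paper partitions explicitly and invokes Definition~\ref{def:DRG}). Your framing of the induction across the two graphs simultaneously is a cosmetic variant of the paper's single-graph statement that the representation is a function of $(\mathrm{SPD},\text{intersection array})$; the content is the same.
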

The general form Eq.~\eqref{eq:agg-pe} that aggregates all nodes in each iteration holds more theoretical significance than practical usage due to scalability concern. In practice, the aggregation procedure of \projA may be trimmed by balancing the tradeoff between complexity and performance. For example, we may choose $\zeta(u|v) = \zeta_{spd}(u|v)$, and only aggregate the nodes $u$ such that $\zeta_{spd}(u|v)\leq K$, \textit{i.e.}, $K$-hop neighbors. Multi-hop aggregation allows avoiding the training issues of deep architecture, \textit{e.g.}, gradient degeneration. Particularly, we may prove that $K$-hop aggregation decreases the number of layers $L$ requested to $\lceil(\frac{1}{2} + \epsilon)\frac{\log n}{K\log(r-1)} \rceil$ in Theorem~\ref{thm:power} and Corollary~\ref{col:graphpower} with proof in Appendix~\ref{apd:PEAGNN}. We may also choose $\zeta(u|v) = \zeta_{gpr}(u|v)$ with non-negative $\gamma_k$ in Eq.~\eqref{eq:gpr} and aggregate the nodes whose $\zeta(u|v)$ are top-$K$ ranked among all $u\in V$. This manner is able to control fix-sized aggregation sets. As \projA does not show provably better representation power than \proj, all the above approaches share the same theoretical power and limitations. However, in practice their specific performance may vary across datasets and applications. 

\section{Related Work} \label{sec:related-works}
\vspace{-0.1cm}
Recently, extensive effort has been taken to improve the structural representation power of WLGNN. From the theoretical perspective, most previous works only considered representations of entire graphs~\cite{murphy2019relational,maron2018invariant,chen2019equivalence,maron2019provably,morris2019weisfeiler,kondor2018covariant,kondor2018generalization} while Srinivasan \& Ribeiro initialized the investigation of structural representations of node sets~\cite{srinivasan2019equivalence} from the view of joint probabilistic distributions. Some works view GNNs as general approximators of invariant functions but the proposed models hold more theoretical implication than practical usage because of their dependence on polynomial$(n)$-order tensors~\cite{maron2018invariant,maron2019universality,keriven2019universal}. Ring-GNN~\cite{chen2019equivalence} (or equivalently PPGN~\cite{maron2019provably}), a relatively scalable model among them, was based on 3-order tensors and was proposed to achieve the expressive power of the 2-WL test (a brief introduction in Appendix~\ref{sec:WL-supp}). However, Ring-GNN (PPGN) was proposed for entire-graph representations and cannot leverage the sparsity of the graph structure to be scalable enough to process large graphs~\cite{chen2019equivalence,maron2019provably}. \proj still benefits from such sparsity and are also used to represent node sets of arbitrary sizes. Moreover, our models theoretically behave orthogonal to Ring-GNN, as DE-2 can distinguish some non-isomorphic node-pairs that Ring-GNN fails to distinguish because the power of Ring-GNN is limited by the 2-WL test (Fig.~\ref{fig:2PE} (b)). 

Some works with empirical success inspire the proposal of DE, though we are the first one to derive theoretical characterization and leverage our theory to better those models as a return. SEAL~\cite{zhang2018link} predicts links by reading out the representations of ego-networks of node-pairs. Although SEAL leverages a specific DE-2, the representations of those ego-networks are extracted via complex SortPooling~\cite{zhang2018end}. However, we argue against such complex operations as DE-2 yields all the benefit of representations of node-pairs, as demonstrated by our experiments. PGNN~\cite{you2019position} uses SPDs between each node and some anchor nodes to encode distance between nodes. As those encodings are not permutation invariant, PGNN holds worse inductive/generalization capability than our models. 

Quite a few works targeted at revising neighbor aggregation procedure of WLGNN and thus are related to the \projA. However, none of them demystified their connection to DE or provided theoretical characterization. MixHop~\cite{abu2019mixhop}, PAGTN~\cite{chen2019path}, MAT~\cite{maziarka2020molecule} essentially used $\zeta_{spd}(u|v)$ to change the way of aggregation (Eq.~\eqref{eq:agg-pe}) while GDC~\cite{klicpera2019diffusion} and PowerGNN~\cite{chen2019supervised} used variants of $\zeta_{gpr}(u|v)$. MixHop, GDC and PowerGNN are evaluated for node classification while PAGTN and MAT are evaluated for graph classification. GDC claims that the aggregation based on $\zeta_{gpr}(u|v)$ does not help link prediction. However, we are able to show its empirical success for link prediction, as the key point missed by GDC is using DEs as extra node attributes (Appendix~\ref{sec:degnn_supple}). Note that as the above models are covered by DEA-GNN-1, their representation powers are all bounded by Theorem~\ref{thm:limit} according to Corollary~\ref{col:PEAGNN}.

\vspace{-0.1cm}
\section{Experiments}\label{sec:exp}
\vspace{-0.1cm}
Extensive experiments\footnote{The code to evaluate our model can be downloaded from https://github.com/snap-stanford/distance-encoding.} are conducted to evaluate our \proj and \projA over three levels of tasks involving target node sets with sizes 1, 2 and 3 respectively: roles of nodes classification (Task 1), link prediction (Task 2), and triangle prediction (Task 3). Triangle prediction is to predict for any given subset of 3 nodes, $\{u, v, w\}$, whether links $uv$, $uw$, and $vw$ all exist. This task belongs to the more general class of higher-order network motif prediction tasks \cite{benson2018simplicial,nassar2019pairwise} and has recently attracted much significance to~\cite{benson2016higher,li2017inhomogeneous,li2019motif,yin2017local,tsourakakis2017scalable}. We briefly introduce the experimental settings and save the details of the datasets and the model parameters to Appendix~\ref{sec:exp_supple}.

\textbf{Dataset \& Training.} We use the following six real graphs for the three tasks introduced above: Brazil-Airports (Task 1), Europe-Airports (1), USA-Airports (1), NS (2 \& 3), PB (2), C.ele (2 \& 3). For Task 1, the goal is to predict the passenger flow level of a given airport based solely on the flight traffic network. These airports datasets are chosen because the labels indicate the structural roles of nodes (4 levels in total from hubs to switches) rather than community identifiers of nodes as traditionally used ~\cite{kipf2016semi,hamilton2017inductive,sen2008collective}. For Tasks 2 \& 3, the datasets were used by the strongest baseline ~\cite{zhang2018link}, which consist of existing links/triangles plus the the same number of randomly sampled negative instances from those graphs. The positive test links/triangles are removed from the graphs during the training phase. For all tasks, we use 80\%, 10\%, 10\% dataset splitting for training, validation, testing respectively. All the models are trained until loss converges and the testing performance of the best model on validation set is reported. We also report the experiments without validation sets that follow the original settings of the baselines ~\cite{ribeiro2017struc2vec,zhang2018link} in Appendix~\ref{sec:exp-res-sup}.   


\textbf{Baselines.} We choose six baselines. GCN~\cite{kipf2016semi}, GraphSAGE(SAGE)~\cite{hamilton2017inductive}, GIN~\cite{xu2018powerful} are representative methods of WLGNN. These models use node degrees as initial features when attributes are not available to keep inductive ability. Struc2vec\cite{ribeiro2017struc2vec} is a kernel-based method, particularly designed for structural representations of single nodes. PGNN\cite{you2019position} and SEAL\cite{zhang2018link} are also GNN-based methods: PGNN learns node positional embeddings and is not inductive for node classification; SEAL is particularly designed for link prediction by using entire-graph representations of ego-networks of node-pairs. SEAL outperforms other link prediction approaches such as VGAE\cite{kipf2016variational}. The node initial features for these two models are set as the inductive setting suggested in their papers. We tune parameters of all baselines (Appendix~\ref{sec:hyper-supple}) and list their optimal performance here. 

\textbf{Instantiation of \proj and \projA.} We choose GCN as the basic WLGNN and implement three variants of \proj over it. Note that GIN could be a more  powerful basis while we tend to keep our models simple. The first two variants of Eq. \eqref{eq:node2nodePE} give us \proj-SPD and \proj-LP. The former uses SPD-based one-hot vectors $\zeta_{\text{sdp}}$ as extra nodes attributes, and the latter uses the sequence of landing probabilities Eq. \eqref{eq:node2nodePE}. Next, we consider an instantiation of Eq. \eqref{eq:agg-pe}, \projA-SPD that uses SPDs, $\zeta(u|v) = \zeta_{sdp}(u|v) \leq K$, to control the aggregation, which enables $K$-hop aggregation ($K=2,3$ and the better performance will be used). \projA-SPD uses SPD-based one-hot vectors as extra nodes attributes. Appendix~\ref{sec:degnn_supple} provides thorough discussion on implementation of the three variants and another implementation that uses Personalized PageRank scores to control the aggregation. Experiments are repeated 20 times using different seeds and we report the average. 




\begin{table}[t]
\centering
\resizebox{\textwidth}{!}{%
\begin{tabular}{l|lll|lll|ll}
\hline
 &
  \multicolumn{3}{c|}{\cellcolor[HTML]{F5F9FF}Nodes (Task 1): Average Accuracy} &
  \multicolumn{3}{c|}{\cellcolor[HTML]{FFEFEE}Node-pairs (Task 2): AUC} &
  \multicolumn{2}{c}{\cellcolor[HTML]{EAFDEA}Node-triads (Task 3): AUC} \\
\multirow{-2}{*}{\backslashbox{Method}{Data}} &
  \multicolumn{1}{c}{Bra.-Airports} &
  \multicolumn{1}{c}{Eur.-Airports} &
  \multicolumn{1}{c|}{USA-Airports} &
  \multicolumn{1}{c}{C.elegans} &
  \multicolumn{1}{c}{NS} &
  \multicolumn{1}{c|}{PB} &
  \multicolumn{1}{c}{C.elegans} &
  \multicolumn{1}{c}{NS} \\ \hline
GCN~\cite{kipf2016semi} &
  64.55$\pm$4.18 &
  54.83$\pm$2.69 &
  56.58$\pm$1.11 &
  74.03$\pm$0.99 &
  74.21$\pm$1.72 &
  89.78$\pm$0.99 &
  80.94$\pm$0.51 &
  81.72$\pm$1.50 \\
SAGE~\cite{hamilton2017inductive} &
  70.65$\pm$5.33 &
  56.29$\pm$3.21 &
  50.85$\pm$2.83 &
  73.91$\pm$0.32 &
  79.96$\pm$1.40 &
  90.23$\pm$0.74 &
  84.72$\pm$0.40 &
  84.06$\pm$1.14 \\
GIN~\cite{xu2018powerful} &
  {71.89$\pm$3.60}$^\dagger$ &
  {57.05$\pm$4.08} &
  58.87$\pm$2.12&
  75.58$\pm$0.59&
  87.75$\pm$0.56&
  91.11$\pm$0.52&
  {86.42$\pm$1.12}$^\dagger$ &
  {94.59$\pm$0.66}$^\dagger$ \\
Struc2vec~\cite{ribeiro2017struc2vec} &
  70.88$\pm$4.26 &
  57.94$\pm$4.01$^\dagger$ &
  {61.92$\pm$2.61}$^\dagger$ &
  72.11$\pm$0.31 &
  82.76$\pm$0.59 &
  90.47$\pm$0.60 &
  77.72$\pm$0.58 &
  81.93$\pm$0.61 \\
PGNN~\cite{you2019position} &
  \multicolumn{1}{c}{N/A} &
  \multicolumn{1}{c}{N/A} &
  \multicolumn{1}{c|}{N/A} &
  78.20$\pm$0.33 &
  94.88$\pm$0.77 &
  89.72$\pm$0.32 &
  86.36$\pm$0.74 &
  79.36$\pm$1.49 \\
SEAL~\cite{zhang2018link} &
  \multicolumn{1}{c}{N/A} &
  \multicolumn{1}{c}{N/A} &
  \multicolumn{1}{c|}{N/A} &
  {88.26$\pm$0.56}$^\dagger$ &
  {98.55$\pm$0.32}$^\dagger$ &
  {94.18$\pm$0.57}$^\dagger$ &
  \multicolumn{1}{c}{N/A} &
  \multicolumn{1}{c}{N/A} \\ \hline
\proj-SPD &
  \textbf{73.28$\pm$2.47} &
  56.98$\pm$2.79 &
  \textbf{63.10$\pm$0.68}$^*$ &
  \textbf{89.37$\pm$0.17}$^*$ &
  \textbf{99.09$\pm$0.79} &
  \textbf{94.95$\pm$0.37}$^*$ &
  \textbf{92.17$\pm$0.72}$^*$ &
  \textbf{99.65$\pm$0.40}$^*$ \\
\proj-LP &
  \textbf{75.10$\pm$3.80}$^*$ &
  58.41$\pm$3.20$^*$ &
  \textbf{64.16$\pm$1.70}$^*$ &
  86.27$\pm$0.33 &
  98.01$\pm$0.55 &
  91.45$\pm$0.41 &
  86.24$\pm$0.18 &
  \textbf{99.31$\pm$0.12}$^*$ \\
\projA-SPD &
  \textbf{75.37$\pm$3.25}$^*$ &
  57.99$\pm$2.39$^*$ &
  \textbf{63.28$\pm$1.59} &
  \textbf{90.05$\pm$0.26}$^*$ &
  \textbf{99.43$\pm$0.63}$^*$ &
  \textbf{94.49$\pm$0.24}$^*$ &
  \textbf{93.35$\pm$0.65}$^*$ &
  \textbf{99.84$\pm$0.14}$^*$ \\ \hline
\end{tabular}%
}
\caption{\footnotesize{Performance in Average Accuracy and Area Under the ROC Curve (AUC) (mean in percentage $\pm$ 95\% confidence level). $\dagger$ highlights the best baselines. $^*$, \textbf{bold font}, \textbf{bold font}$^*$ respectively highlights the case where our models' performance exceeds the best baseline on average, by 70\% confidence, by 95\% confidence.}  }
\label{tab:performance}
\vspace{-0.6cm}
\end{table}

\textbf{Results} are shown in Table~\ref{tab:performance}. Regarding the node-level task, GIN outperforms other WLGNNs, which matches the theory in~\cite{xu2018powerful,maron2018invariant,morris2019weisfeiler}. Struc2vec is also powerful though it is kernel-based. \proj's significantly outperform the baselines (except Eur.-Airport) which imply the power of DE-1's. Among them, landing probabilities (LP) work slightly better than SPDs as DE-1's. 

Regarding node-pairs-level tasks, SEAL is the strongest baseline, as it is particularly designed for link prediction by using a special DE-2 plus a graph-level readout~\cite{zhang2018end}. However, our \proj-SPD  performs even significantly better than SEAL: The numbers are close, but the difference is still significant; The decreases of error rates are always greater than 10\% and achieve almost 30\% over NS. This indicates that DE-2 is the key signal that makes SEAL work while the complex graph-level readout adopted by SEAL is not necessary. Moreover, our set-pooling form of DE-2 (Eq.~\eqref{eq:node2setPE}) decreases the dimension of DE-2 adopted in SEAL, which also betters the generalization of our models (See the detailed discussion in Appendix~\ref{sec:degnn_supple}). Moreover, for link prediction, SPD seems to be much better to be chosen as DE-2 than LP.



Regarding node-triads-level tasks, no baselines were particularly designed for this setting. We have not expected that GIN outperforms PGNN as PGNN captures node positional information that seems useful to predict triangles. We guess that the distortion of absolute positional embeddings learnt by PGNN may be the reason that limits its ability to distinguish structures with nodes in close positions: For example, the three nodes in a path of length two are close in position and the three nodes in a triangle are also close in position.  However, this is not a problem for DE-3. We also conjecture that the gain based on DEs grows w.r.t. their orders (\textit{i.e.}, $|S|$ in Eq.~\eqref{eq:node2setPE}). Again, for triangle prediction, SPD seems to be much better to be chosen as DE-3 than LP.

Note that \projA-SPD further improves \proj-SPD (by almost 1\% across most of the tasks). This demonstrates the power of multi-hop aggregation (Eq.~\eqref{eq:agg-pe}). However, note that \projA-SPD needs to aggregate multi-hop neighbors simultaneously and thus pays an additional cost of scalability. 

\section{Conclusion and Discussion}
This work proposes a novel angle to systematically improve the structural representation power of GNNs. We break from the convention that previous works characterize and further improve the power of GNNs by intimating different-order WL tests~\cite{chen2019equivalence,maron2019provably,morris2019weisfeiler,chen2020can}. As far as we know, we are the first one to provide non-asymptotic analysis of the expressive power of the proposed GNN models. Therefore, the proof techniques of Theorems~\ref{thm:power},\ref{thm:limit} may be expected to inspire new theoretical studies of GNNs and further better the practical usage of GNNs. Moreover, our models have good scalability by avoiding using the framework of WL tests, as higher-order WL tests are not able to leverage the sparsity within graph structures. To be evaluated over extremely large graphs~\cite{hu2020open}, our models can be simply trimmed and work on the ego-networks sampled with a limited size around the target node sets, just as the strategy adopted by GraphSAGE~\cite{hamilton2017inductive} and GraphSAINT~\cite{zeng2019graphsaint}.

Distance encoding unifies the techniques of many GNN models~\cite{zhang2018link,abu2019mixhop,chen2019path,maziarka2020molecule,klicpera2019diffusion} and provides a extremely general framework with clear theoretical characterization. In this paper, we only evaluate four specific instantiations over three levels of tasks. However, there are some other interesting instantiations and applications. For example, we expect a better usage of PageRank scores as edge attributes (Eq.~\eqref{eq:agg-pe}). Currently, our instantiation \projA-PR simply uses those scores as weights in a weighted sum to aggregate node representations. We also have not considered any attention-based mechanism over DEs in aggregation while it seems to be practically useful~\cite{velivckovic2017graph,maziarka2020molecule}. Researchers may try these directions in a more principled manner based on this work. Our approaches may also help other tasks based on structural representation learning, such as graph-level classification/regression~\cite{gilmer2017neural,zhang2018end,xu2018powerful,maron2019provably,morris2019weisfeiler} and subgraph counting~\cite{chen2020can}, which we leave for future study. 

There are also two important implications coming from the observations of this work. First, Theorem~\ref{thm:limit} and Corollary~\ref{col:PEAGNN} show the limitation of DE-1 over distance regular graphs, including the cases when DE-1's are used as node attributes or controllers of message aggregation. As distance regular graphs with the same intersection array have the important co-spectral property~\cite{brouwer2012distance}, we guess that DE-1 is a bridge  to connect GNN frameworks to spectral approaches, two fundamental approaches in graph-structured data processing. This point sheds some light on the question left in \cite{chen2019equivalence} while more rigorous characterization is still needed. Second, as observed in the experiments, higher-order DE's induce larger gains as opposed to WLGNN, while Theorem~\ref{thm:power} is not able to characterize this observation as the probability $1-o(\frac{1}{n})$ does not depend on the size $p$. We are sure that the probabilistic quantization in Theorem~\ref{thm:power} is not tight, so it is interesting to see how such probability depends on $p$ by deriving tighter bounds.

\section*{Acknowledgement}
The authors would like to thank Weihua Hu for raising the preliminary concept of distance encoding that initializes the investigation. The authors also would like to thank Jiaxuan You and Rex Ying for their insightful comments during the discussion. The authors also would thank Baharan Mirzasoleiman and Tailin Wu for their suggestions on the paper writing. The authors also would like to thank the NeurIPS reviewers for their insightful observations and actionable suggestions to improve the manuscript. This research has been supported in part by Purdue CS start-up, NSF CINES, NSF HDR, NSF Expeditions, NSF RAPID, DARPA MCS, DARPA ASED, ARO MURI, Stanford Data Science Initiative, Wu Tsai Neurosciences Institute, Chan Zuckerberg Biohub, Amazon, Boeing, Chase, Docomo, Hitachi, Huawei, JD.com, NVIDIA, Dell. J. L. is a Chan Zuckerberg Biohub investigator.

\bibliographystyle{IEEETran}
\bibliography{example_paper}

\begin{thebibliography}{10}
\providecommand{\url}[1]{#1}
\csname url@samestyle\endcsname
\providecommand{\newblock}{\relax}
\providecommand{\bibinfo}[2]{#2}
\providecommand{\BIBentrySTDinterwordspacing}{\spaceskip=0pt\relax}
\providecommand{\BIBentryALTinterwordstretchfactor}{4}
\providecommand{\BIBentryALTinterwordspacing}{\spaceskip=\fontdimen2\font plus
\BIBentryALTinterwordstretchfactor\fontdimen3\font minus
  \fontdimen4\font\relax}
\providecommand{\BIBforeignlanguage}[2]{{%
\expandafter\ifx\csname l@#1\endcsname\relax
\typeout{** WARNING: IEEEtran.bst: No hyphenation pattern has been}%
\typeout{** loaded for the language `#1'. Using the pattern for}%
\typeout{** the default language instead.}%
\else
\language=\csname l@#1\endcsname
\fi
#2}}
\providecommand{\BIBdecl}{\relax}
\BIBdecl

\bibitem{hamilton2017representation}
W.~L. Hamilton, R.~Ying, and J.~Leskovec, ``Representation learning on graphs:
  Methods and applications,'' \emph{IEEE Data Engineering Bulletin}, vol.~40,
  no.~3, pp. 52--74, 2017.

\bibitem{borgatti1992notions}
S.~P. Borgatti and M.~G. Everett, ``Notions of position in social network
  analysis,'' \emph{Sociological Methodology}, pp. 1--35, 1992.

\bibitem{henderson2012rolx}
K.~Henderson, B.~Gallagher, T.~Eliassi-Rad, H.~Tong, S.~Basu, L.~Akoglu,
  D.~Koutra, C.~Faloutsos, and L.~Li, ``Rolx: structural role extraction \&
  mining in large graphs,'' in \emph{the ACM SIGKDD international conference on
  Knowledge discovery and data mining}, 2012, pp. 1231--1239.

\bibitem{rossi2014role}
R.~A. Rossi and N.~K. Ahmed, ``Role discovery in networks,'' \emph{IEEE
  Transactions on Knowledge and Data Engineering}, vol.~27, no.~4, pp.
  1112--1131, 2014.

\bibitem{ribeiro2017struc2vec}
L.~F. Ribeiro, P.~H. Saverese, and D.~R. Figueiredo, ``struc2vec: Learning node
  representations from structural identity,'' in \emph{the ACM SIGKDD
  International Conference on Knowledge Discovery and Data Mining}, 2017, pp.
  385--394.

\bibitem{donnat2018learning}
C.~Donnat, M.~Zitnik, D.~Hallac, and J.~Leskovec, ``Learning structural node
  embeddings via diffusion wavelets,'' in \emph{the ACM SIGKDD International
  Conference on Knowledge Discovery \& Data Mining}, 2018, pp. 1320--1329.

\bibitem{liben2007link}
D.~Liben-Nowell and J.~Kleinberg, ``The link-prediction problem for social
  networks,'' \emph{Journal of the American society for information science and
  technology}, vol.~58, no.~7, pp. 1019--1031, 2007.

\bibitem{zhang2017weisfeiler}
M.~Zhang and Y.~Chen, ``Weisfeiler-lehman neural machine for link prediction,''
  in \emph{the ACM SIGKDD International Conference on Knowledge Discovery and
  Data Mining}, 2017, pp. 575--583.

\bibitem{zhang2018link}
------, ``Link prediction based on graph neural networks,'' in \emph{Advances
  in Neural Information Processing Systems}, 2018, pp. 5165--5175.

\bibitem{you2019position}
J.~You, R.~Ying, and J.~Leskovec, ``Position-aware graph neural networks,'' in
  \emph{International Conference on Machine Learning}, 2019, pp. 7134--7143.

\bibitem{prvzulj2007biological}
N.~Pr{\v{z}}ulj, ``Biological network comparison using graphlet degree
  distribution,'' \emph{Bioinformatics}, vol.~23, no.~2, pp. 177--183, 2007.

\bibitem{zager2008graph}
L.~A. Zager and G.~C. Verghese, ``Graph similarity scoring and matching,''
  \emph{Applied mathematics letters}, vol.~21, no.~1, pp. 86--94, 2008.

\bibitem{shervashidze2011weisfeiler}
N.~Shervashidze, P.~Schweitzer, E.~J.~v. Leeuwen, K.~Mehlhorn, and K.~M.
  Borgwardt, ``Weisfeiler-lehman graph kernels,'' \emph{Journal of Machine
  Learning Research}, vol.~12, no. Sep, pp. 2539--2561, 2011.

\bibitem{gilmer2017neural}
J.~Gilmer, S.~S. Schoenholz, P.~F. Riley, O.~Vinyals, and G.~E. Dahl, ``Neural
  message passing for quantum chemistry,'' in \emph{International Conference on
  Machine Learning}, 2017, pp. 1263--1272.

\bibitem{ying2018hierarchical}
Z.~Ying, J.~You, C.~Morris, X.~Ren, W.~Hamilton, and J.~Leskovec,
  ``Hierarchical graph representation learning with differentiable pooling,''
  in \emph{Advances in Neural Information Processing Systems}, 2018, pp.
  4800--4810.

\bibitem{xu2018powerful}
K.~Xu, W.~Hu, J.~Leskovec, and S.~Jegelka, ``How powerful are graph neural
  networks?'' in \emph{International Conference on Learning Representations},
  2019.

\bibitem{maziarka2020molecule}
{\L}.~Maziarka, T.~Danel, S.~Mucha, K.~Rataj, J.~Tabor, and
  S.~Jastrz{\k{e}}bski, ``Molecule attention transformer,'' \emph{arXiv
  preprint arXiv:2002.08264}, 2020.

\bibitem{hornik1989multilayer}
K.~Hornik, M.~Stinchcombe, H.~White \emph{et~al.}, ``Multilayer feedforward
  networks are universal approximators.'' \emph{Neural Networks}, vol.~2,
  no.~5, pp. 359--366, 1989.

\bibitem{scarselli2008graph}
F.~Scarselli, M.~Gori, A.~C. Tsoi, M.~Hagenbuchner, and G.~Monfardini, ``The
  graph neural network model,'' \emph{IEEE Transactions on Neural Networks},
  vol.~20, no.~1, pp. 61--80, 2008.

\bibitem{kipf2016semi}
T.~N. Kipf and M.~Welling, ``Semi-supervised classification with graph
  convolutional networks,'' in \emph{International Conference on Learning
  Representations}, 2017.

\bibitem{hamilton2017inductive}
W.~Hamilton, Z.~Ying, and J.~Leskovec, ``Inductive representation learning on
  large graphs,'' in \emph{Advances in Neural Information Processing Systems},
  2017, pp. 1024--1034.

\bibitem{velivckovic2017graph}
P.~Veli{\v{c}}kovi{\'c}, G.~Cucurull, A.~Casanova, A.~Romero, P.~Lio, and
  Y.~Bengio, ``Graph attention networks,'' in \emph{International Conference on
  Learning Representations}, 2018.

\bibitem{zhang2018end}
M.~Zhang, Z.~Cui, M.~Neumann, and Y.~Chen, ``An end-to-end deep learning
  architecture for graph classification,'' in \emph{the AAAI Conference on
  Artificial Intelligence}, 2018, pp. 4438--4445.

\bibitem{battaglia2018relational}
P.~W. Battaglia, J.~B. Hamrick, V.~Bapst, A.~Sanchez-Gonzalez, V.~Zambaldi,
  M.~Malinowski, A.~Tacchetti, D.~Raposo, A.~Santoro, R.~Faulkner
  \emph{et~al.}, ``Relational inductive biases, deep learning, and graph
  networks,'' \emph{arXiv preprint arXiv:1806.01261}, 2018.

\bibitem{morris2019weisfeiler}
C.~Morris, M.~Ritzert, M.~Fey, W.~L. Hamilton, J.~E. Lenssen, G.~Rattan, and
  M.~Grohe, ``Weisfeiler and leman go neural: Higher-order graph neural
  networks,'' in \emph{the AAAI Conference on Artificial Intelligence},
  vol.~33, 2019, pp. 4602--4609.

\bibitem{weisfeiler1968reduction}
B.~Weisfeiler and A.~Leman, ``A reduction of a graph to a canonical form and an
  algebra arising during this reduction,'' \emph{Nauchno-Technicheskaya
  Informatsia}, 1968.

\bibitem{murphy2019relational}
R.~Murphy, B.~Srinivasan, V.~Rao, and B.~Riberio, ``Relational pooling for
  graph representations,'' in \emph{International Conference on Machine
  Learning}, 2019.

\bibitem{maron2018invariant}
H.~Maron, H.~Ben-Hamu, N.~Shamir, and Y.~Lipman, ``Invariant and equivariant
  graph networks,'' in \emph{International Conference on Learning
  Representations}, 2019.

\bibitem{chen2019equivalence}
Z.~Chen, S.~Villar, L.~Chen, and J.~Bruna, ``On the equivalence between graph
  isomorphism testing and function approximation with gnns,'' in \emph{Advances
  in Neural Information Processing Systems}, 2019, pp. 15\,868--15\,876.

\bibitem{maron2019provably}
H.~Maron, H.~Ben-Hamu, H.~Serviansky, and Y.~Lipman, ``Provably powerful graph
  networks,'' in \emph{Advances in Neural Information Processing Systems},
  2019, pp. 2153--2164.

\bibitem{chen2019path}
B.~Chen, R.~Barzilay, and T.~Jaakkola, ``Path-augmented graph transformer
  network,'' \emph{ICML 2019 Workshop on Learning and Reasoning with
  Graph-Structured Data}, 2019.

\bibitem{klicpera2019diffusion}
J.~Klicpera, S.~Wei{\ss}enberger, and S.~G{\"u}nnemann, ``Diffusion improves
  graph learning,'' in \emph{Advances in Neural Information Processing
  Systems}, 2019, pp. 13\,333--13\,345.

\bibitem{chien2020joint}
E.~Chien, J.~Peng, P.~Li, and O.~Milenkovic, ``Joint adaptive feature smoothing
  and topology extraction via generalized pagerank gnns,'' \emph{arXiv preprint
  arXiv:2006.07988}, 2020.

\bibitem{li2019optimizing}
P.~Li, I.~Chien, and O.~Milenkovic, ``Optimizing generalized pagerank methods
  for seed-expansion community detection,'' in \emph{Advances in Neural
  Information Processing Systems}, 2019, pp. 11\,705--11\,716.

\bibitem{brouwer2012distance}
A.~E. Brouwer and W.~H. Haemers, ``Distance-regular graphs,'' in \emph{Spectra
  of Graphs}.\hskip 1em plus 0.5em minus 0.4em\relax Springer, 2012, pp.
  177--185.

\bibitem{babai2016graph}
L.~Babai, ``Graph isomorphism in quasipolynomial time,'' in \emph{Proceedings
  of the Forty-Eighth Annual ACM Symposium on Theory of Computing}, 2016, pp.
  684--697.

\bibitem{grover2016node2vec}
A.~Grover and J.~Leskovec, ``node2vec: Scalable feature learning for
  networks,'' in \emph{the ACM SIGKDD International Conference on Knowledge
  Discovery and Data Mining}, 2016, pp. 855--864.

\bibitem{zaheer2017deep}
M.~Zaheer, S.~Kottur, S.~Ravanbakhsh, B.~Poczos, R.~R. Salakhutdinov, and A.~J.
  Smola, ``Deep sets,'' in \emph{Advances in Neural Information Processing
  Systems}, 2017, pp. 3391--3401.

\bibitem{arvind2019weisfeiler}
V.~Arvind, F.~Fuhlbr{\"u}ck, J.~K{\"o}bler, and O.~Verbitsky, ``On
  weisfeiler-leman invariance: subgraph counts and related graph properties,''
  in \emph{International Symposium on Fundamentals of Computation
  Theory}.\hskip 1em plus 0.5em minus 0.4em\relax Springer, 2019, pp. 111--125.

\bibitem{cai1992optimal}
J.-Y. Cai, M.~F{\"u}rer, and N.~Immerman, ``An optimal lower bound on the
  number of variables for graph identification,'' \emph{Combinatorica},
  vol.~12, no.~4, pp. 389--410, 1992.

\bibitem{grohe2017descriptive}
M.~Grohe, \emph{Descriptive complexity, canonisation, and definable graph
  structure theory}.\hskip 1em plus 0.5em minus 0.4em\relax Cambridge
  University Press, 2017, vol.~47.

\bibitem{brouwer2012strongly}
A.~E. Brouwer and W.~H. Haemers, ``Strongly regular graphs,'' in \emph{Spectra
  of Graphs}.\hskip 1em plus 0.5em minus 0.4em\relax Springer, 2012, pp.
  115--149.

\bibitem{kondor2018covariant}
R.~Kondor, H.~T. Son, H.~Pan, B.~Anderson, and S.~Trivedi, ``Covariant
  compositional networks for learning graphs,'' \emph{arXiv preprint
  arXiv:1801.02144}, 2018.

\bibitem{kondor2018generalization}
R.~Kondor and S.~Trivedi, ``On the generalization of equivariance and
  convolution in neural networks to the action of compact groups,'' in
  \emph{International Conference on Machine Learning}, 2018, pp. 2747--2755.

\bibitem{srinivasan2019equivalence}
B.~Srinivasan and B.~Ribeiro, ``On the equivalence between node embeddings and
  structural graph representations,'' in \emph{International Conference on
  Learning Representations}, 2020.

\bibitem{maron2019universality}
H.~Maron, E.~Fetaya, N.~Segol, and Y.~Lipman, ``On the universality of
  invariant networks,'' in \emph{International Conference on Machine Learning},
  2019, pp. 4363--4371.

\bibitem{keriven2019universal}
N.~Keriven and G.~Peyr{\'e}, ``Universal invariant and equivariant graph neural
  networks,'' in \emph{Advances in Neural Information Processing Systems},
  2019, pp. 7090--7099.

\bibitem{abu2019mixhop}
S.~Abu-El-Haija, B.~Perozzi, A.~Kapoor, N.~Alipourfard, K.~Lerman,
  H.~Harutyunyan, G.~Ver~Steeg, and A.~Galstyan, ``Mixhop: Higher-order graph
  convolutional architectures via sparsified neighborhood mixing,'' in
  \emph{International Conference on Machine Learning}, 2019, pp. 21--29.

\bibitem{chen2019supervised}
Z.~Chen, L.~Li, and J.~Bruna, ``Supervised community detection with line graph
  neural networks,'' in \emph{International Conference on Learning
  Representations}, 2019.

\bibitem{benson2018simplicial}
A.~R. Benson, R.~Abebe, M.~T. Schaub, A.~Jadbabaie, and J.~Kleinberg,
  ``Simplicial closure and higher-order link prediction,'' \emph{the
  Proceedings of the National Academy of Sciences}, vol. 115, no.~48, pp.
  11\,221--11\,230, 2018.

\bibitem{nassar2019pairwise}
H.~Nassar, A.~R. Benson, and D.~F. Gleich, ``Pairwise link prediction,'' in
  \emph{the IEEE/ACM International Conference on Advances in Social Networks
  Analysis and Mining}, 2019, pp. 386--393.

\bibitem{benson2016higher}
A.~R. Benson, D.~F. Gleich, and J.~Leskovec, ``Higher-order organization of
  complex networks,'' \emph{Science}, vol. 353, no. 6295, pp. 163--166, 2016.

\bibitem{li2017inhomogeneous}
P.~Li and O.~Milenkovic, ``Inhomogeneous hypergraph clustering with
  applications,'' in \emph{Advances in Neural Information Processing Systems},
  2017, pp. 2308--2318.

\bibitem{li2019motif}
P.~Li, G.~J. Puleo, and O.~Milenkovic, ``Motif and hypergraph correlation
  clustering,'' \emph{IEEE Transactions on Information Theory}, 2019.

\bibitem{yin2017local}
H.~Yin, A.~R. Benson, J.~Leskovec, and D.~F. Gleich, ``Local higher-order graph
  clustering,'' in \emph{Proceedings of the 23rd ACM SIGKDD International
  Conference on Knowledge Discovery and Data Mining}.\hskip 1em plus 0.5em
  minus 0.4em\relax ACM, 2017, pp. 555--564.

\bibitem{tsourakakis2017scalable}
C.~E. Tsourakakis, J.~Pachocki, and M.~Mitzenmacher, ``Scalable motif-aware
  graph clustering,'' in \emph{Proceedings of the 26th International Conference
  on World Wide Web}, 2017, pp. 1451--1460.

\bibitem{sen2008collective}
P.~Sen, G.~Namata, M.~Bilgic, L.~Getoor, B.~Galligher, and T.~Eliassi-Rad,
  ``Collective classification in network data,'' \emph{AI magazine}, vol.~29,
  no.~3, pp. 93--93, 2008.

\bibitem{kipf2016variational}
T.~N. Kipf and M.~Welling, ``Variational graph auto-encoders,'' \emph{NeurIPS
  Bayesian Deep Learning Workshop}, 2016.

\bibitem{chen2020can}
Z.~Chen, L.~Chen, S.~Villar, and J.~Bruna, ``Can graph neural networks count
  substructures?'' \emph{arXiv preprint arXiv:2002.04025}, 2020.

\bibitem{hu2020open}
W.~Hu, M.~Fey, M.~Zitnik, Y.~Dong, H.~Ren, B.~Liu, M.~Catasta, and J.~Leskovec,
  ``Open graph benchmark: Datasets for machine learning on graphs,''
  \emph{arXiv preprint arXiv:2005.00687}, 2020.

\bibitem{zeng2019graphsaint}
H.~Zeng, H.~Zhou, A.~Srivastava, R.~Kannan, and V.~Prasanna, ``Graphsaint:
  Graph sampling based inductive learning method,'' in \emph{International
  Conference on Learning Representations}, 2020.

\bibitem{bollobas1980probabilistic}
B.~Bollob{\'a}s, ``A probabilistic proof of an asymptotic formula for the
  number of labelled regular graphs,'' \emph{European Journal of
  Combinatorics}, vol.~1, no.~4, pp. 311--316, 1980.

\bibitem{RePEc:oxp:obooks:9780198514978}
L.~W. Beineke and R.~J. Wilson, Eds., \emph{Graph Connections: Relationships
  between Graph Theory and Other Areas of Mathematics}.\hskip 1em plus 0.5em
  minus 0.4em\relax Oxford University Press, 1997.

\bibitem{ackland2005mapping}
R.~Ackland \emph{et~al.}, ``Mapping the us political blogosphere: Are
  conservative bloggers more prominent?'' in \emph{BlogTalk Downunder 2005
  Conference, Sydney}.\hskip 1em plus 0.5em minus 0.4em\relax BlogTalk
  Downunder 2005 Conference, Sydney, 2005.

\bibitem{4726313}
C.~C. Kaiser, Marcus;~Hilgetag, ``Nonoptimal component placement, but short
  processing paths, due to long-distance projections in neural systems,''
  \emph{PLoS Computational Biology}, vol.~2, no.~7, p.~95, 2006.

\bibitem{newman2006finding}
M.~E. Newman, ``Finding community structure in networks using the eigenvectors
  of matrices,'' \emph{Physical review E}, vol.~74, no.~3, p. 036104, 2006.

\bibitem{jeh2003scaling}
G.~Jeh and J.~Widom, ``Scaling personalized web search,'' in \emph{the
  International Conference on World Wide Web}, 2003, pp. 271--279.

\bibitem{chung2007heat}
F.~Chung, ``The heat kernel as the pagerank of a graph,'' \emph{Proceedings of
  the National Academy of Sciences}, vol. 104, no.~50, pp. 19\,735--19\,740,
  2007.

\bibitem{gleich2014dynamical}
D.~F. Gleich and R.~A. Rossi, ``A dynamical system for pagerank with
  time-dependent teleportation,'' \emph{Internet Mathematics}, vol.~10, no.
  1-2, pp. 188--217, 2014.

\end{thebibliography}

\newpage
 
\appendix

\begin{appendix}

\begin{center}
{\Large \textbf{Appendix}}
\end{center}
\end{appendix}

\section{Proof of Universal Approximate Theorem for Structural Representation}
We restate Theorem~\ref{thm:univapp}:  If the structural representation $\Gamma$ can distinguish any two non-isomorphic tuples $\mathcal{T}^{(1)}$ and $\mathcal{T}^{(2)}$ in $\Omega_p$, then for any invariant function $f: \Omega_p \rightarrow \mathbb{R}$, $f$ can be universally approximated by $\Gamma$ via a 3-layer feed-forward neural network with ReLU as rectifiers, as long as
\begin{itemize}
\item The feature space $\mathcal{A}$ is compact.
\item $f(S, \cdot)$ is continuous over $\mathcal{A}$ for any $S\in \mathcal{P}_p(V)$. 
\end{itemize}

\begin{proof}
This result is a direct generalization of Theorem 4~\cite{chen2019equivalence}. Specifically, we extend the statement of representing graphs featured by $\mathbf{A}$ to that of representing structures featured by $(S, \mathbf{A})$.

Recall the original space $\mathcal{A} \subset \mathbb{R}^{n\times n\times k}$. We define a space $\mathcal{A}' \subset \mathbb{R}^{n\times n\times (k+1)}$: For any $\mathbf{A}' \in \mathcal{A}'$, its slice in the first $k$ dimensions of 3-rd mode, \textit{i.e.}, $\mathbf{A}'_{\cdot,\cdot, 1:k}$, is in $\mathcal{A}$ and the slice corresponds to the last dimension of 3-rd mode, \textit{i.e.}, $\mathbf{A}'_{\cdot,\cdot, k+1}$ is a diagonal matrix where the diagonal components could be only 0 or 1. Then, we may build a bijective mapping between $\mathbf{A}'\in\mathcal{A}'$ and $(S, \mathbf{A})\in\Omega_p$ by 
\begin{align*}
\mathbf{A}'_{\cdot,\cdot, 1:k} = \mathbf{A}, \quad \mathbf{A}'_{u, u, k+1} = \;\text{1 if $u\in S$ or 0 if $u\not\in S$}
\end{align*} 
As $\mathcal{A}$ is compact in $\mathbb{R}^{n\times n\times k}$ and we have only finite possible choices of $\mathbf{A}'_{\cdot,\cdot, k+1}$, actually ${n\choose |S|}$, the space $\mathcal{A}'$ is compact in $\mathbb{R}^{n\times n\times (k+1)}$.

Then, we may transfer all definitions from $\Omega_p$ to $\mathcal{A}'$. Specifically, the structural representation $\Gamma$ that distinguishes any two non-isomorphic tuples $\mathcal{T}^{(1)}$ and $\mathcal{T}^{(2)}$ in $\Omega_p$ defines $\Gamma': \mathcal{A}'  \rightarrow \mathbb{R}^d$ that distinguishes any two non-isomorphic tensors $\mathbf{A}'^{(1)}$ and $\mathbf{A}'^{(2)}$ in $\mathcal{A}'$, as $\mathcal{T}^{(i)}$ and $\mathbf{A}'^{(i)}$ form a bijective mapping for $i=1,2$. Moreover, one invariant function $f: \Omega_p \rightarrow \mathbb{R}$ also defines another invariant function $f': \mathcal{A}'  \rightarrow \mathbb{R}$, as $\mathcal{T}^{(i)}$ and $\mathbf{A}'^{(i)}$ form a bijective mapping for $i=1,2$.

Suppose the original metric over $\mathcal{A}$ is denoted by $\mathcal{M}: \mathcal{A} \times \mathcal{A} \rightarrow \mathbb{R}_{\geq 0}$. Define a metric  over $\mathcal{A}'$ as
\begin{align*}
\mathcal{M}'(\mathbf{A}'^{(1)}, \mathbf{A}'^{(2)}) = \mathcal{M}(\mathbf{A}'^{(1)}_{\cdot,\cdot, 1:k}, \mathbf{A}'^{(2)}_{\cdot,\cdot, 1:k}) + \sum_{u\in V} 1_{\mathbf{A}'^{(1)}_{u, u, k+1} \neq \mathbf{A}'^{(2)}_{u, u, k+1}}.
\end{align*}
Then, it is easy to show that we have the following lemma based on the definition of continuity. 
\begin{lemma}
If $f(S, \cdot)$ is continuous over $\mathcal{A}$ for any $S\in \mathcal{P}_p(V)$ with respect to $\mathcal{M}$, then $f'$ is continuous over $\mathcal{A}'$ with respect to $\mathcal{M}'$.
\end{lemma}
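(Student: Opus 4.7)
The plan is to verify continuity of $f'$ at an arbitrary point $\mathbf{A}'^{(0)} \in \mathcal{A}'$ via the sequential criterion, exploiting the fact that the second term in $\mathcal{M}'$ is integer-valued and therefore isolates the discrete indicator component from the continuous part.

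First I would fix $\mathbf{A}'^{(0)} \in \mathcal{A}'$, let $(S^{(0)}, \mathbf{A}^{(0)}) \in \Omega_p$ be its image under the bijection, and take an arbitrary sequence $\{\mathbf{A}'^{(m)}\}_{m \geq 1}$ in $\mathcal{A}'$ with $\mathcal{M}'(\mathbf{A}'^{(m)}, \mathbf{A}'^{(0)}) \to 0$. Write $(S^{(m)}, \mathbf{A}^{(m)})$ for the corresponding tuples in $\Omega_p$. The key observation is that
\[
\sum_{u \in V} \mathbf{1}_{\mathbf{A}'^{(m)}_{u,u,k+1} \neq \mathbf{A}'^{(0)}_{u,u,k+1}}
\]
takes values in $\{0, 1, \ldots, n\}$, so $\mathcal{M}'$-convergence forces this quantity to equal $0$ for all sufficiently large $m$. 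Equivalently, $S^{(m)} = S^{(0)}$ eventually, since by construction the $(k+1)$-th slice encodes exactly the indicator of $S$.

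Next I would note that once $S^{(m)} = S^{(0)}$, the definition of $\mathcal{M}'$ collapses to $\mathcal{M}'(\mathbf{A}'^{(m)}, \mathbf{A}'^{(0)}) = \mathcal{M}(\mathbf{A}^{(m)}, \mathbf{A}^{(0)})$. Hence $\mathbf{A}^{(m)} \to \mathbf{A}^{(0)}$ in $(\mathcal{A}, \mathcal{M})$. By hypothesis $f(S^{(0)}, \cdot)$ is continuous on $\mathcal{A}$, so $f(S^{(0)}, \mathbf{A}^{(m)}) \to f(S^{(0)}, \mathbf{A}^{(0)})$. Since $f'(\mathbf{A}'^{(m)}) = f(S^{(m)}, \mathbf{A}^{(m)}) = f(S^{(0)}, \mathbf{A}^{(m)})$ for all large $m$, we conclude $f'(\mathbf{A}'^{(m)}) \to f'(\mathbf{A}'^{(0)})$, giving continuity at $\mathbf{A}'^{(0)}$.

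Since $\mathbf{A}'^{(0)}$ was arbitrary, $f'$ is continuous on all of $\mathcal{A}'$. There is essentially no obstacle here: the argument hinges on a single elementary but crucial observation, namely that an integer-valued metric contribution cannot go to zero without eventually vanishing, which decouples the combinatorial choice of $S$ from the continuous deformation of $\mathbf{A}$. The only care needed is to make explicit that the bijection $\mathbf{A}' \leftrightarrow (S, \mathbf{A})$ is indeed well-behaved with respect to the two metrics, but this is immediate from the very definition of $\mathcal{M}'$.
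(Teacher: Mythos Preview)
Your proof is correct. The paper does not actually give a proof of this lemma; it merely states that it ``is easy to show\ldots based on the definition of continuity,'' and your argument via the sequential criterion---using that the integer-valued indicator term in $\mathcal{M}'$ must eventually vanish, thereby freezing $S$ and reducing the problem to the assumed continuity of $f(S^{(0)},\cdot)$---is precisely the natural way to fill in that omitted detail.
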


Now, we only need to use Theorem 4~\cite{chen2019equivalence} to prove the statement. Actually the dimensions of $\Gamma'$ overall forms a collection of one-dimensional functions $\Xi=(\Gamma'[i])_{i\in [d]}$, where $\Gamma'[i]$ is the $i$th component of $\Gamma'\in \mathbb{R}^d$. According to the definition of $\Gamma'$, we know $\Xi$ distinguishes all the non-isomorphic  $\mathbf{A}'^{(1)}$ and $\mathbf{A}'^{(2)}$ in $\mathcal{A}'$. Moreover, because $\mathcal{A}'$ is compact and $f'$ is continuous over $\mathcal{A}'$,  Theorem 4~\cite{chen2019equivalence} shows that the arbitrary invariant function $f'$ defined on $\mathcal{A}'$ can be universally approximated by $\Xi$ via 3-layer feed-forward neural networks with ReLu as rectifiers. Recall $f$ and $f'$ are bijective, and $\Gamma,\,\Gamma'$, and $\Xi$ are mutually bijective. Therefore, we claim that $f$ can be universally approximated $\Gamma$  via 3-layer feed-forward neural networks with ReLu as rectifiers.
\end{proof}

\section{Proof for The Power of WLGNN for Structural Representation}
We restate Theorem~\ref{thm:1-WL-limit}: Consider two tuples $\mathcal{T}_1 = (S^{(1)}, \mathbf{A}^{(1)})$ and $\mathcal{T}_2 = (S^{(2)}, \mathbf{A}^{(2)})$ in $\Omega_p$. If $\mathcal{T}_1, \,\mathcal{T}_2$ cannot be distinguished by the 1-WL test, then the corresponding outputs of WLGNN  satisfy $\Gamma(\mathcal{T}_1) = \Gamma(\mathcal{T}_2)$. On the other side, if they can be distinguished by the 1-WL test and suppose aggregation operations (AGG) and feed-forward neural networks $f_1,\, f_2$ are all injective mappings, then a large enough number of layers $L$, the outputs of WLGNN  satisfy $\Gamma(\mathcal{T}_1)\neq \Gamma(\mathcal{T}_2)$.

\begin{proof}
There is no fundamental difficulty to generalize the results from the case of graph representation to that of structural representation, because the only difference according to WLGNN is the final readout step (AGG($\cdot$)), which works on a subset of nodes instead of the entire node set. Therefore, the same logic of proofs of Lemma 2 and Theorem 3 in~\cite{xu2018powerful} can be directly applied for structural representation learning with little revision.
\end{proof}

\section{Proof for The Power of DE ---  Theorem~\ref{thm:power}}\label{sec:apd-power}
We restate Theorem~\ref{thm:power}: Given two fixed-sized sets $S^{(1)}, S^{(2)}\subset V$, $|S^{(1)}| = |S^{(2)}|=p$. Consider two tuples $\mathcal{T}^{(1)}=(S^{(1)}, \mathbf{A}^{(1)})$ and $\mathcal{T}^{(2)}=(S^{(2)}, \mathbf{A}^{(2)})$ in the most difficult setting when features $\mathbf{A}^{(1)}$ and $\mathbf{A}^{(2)}$ are only different in graph structures specified by $A^{(1)}$ and $A^{(2)}$ respectively. Suppose $A^{(1)}$ and $A^{(2)}$ are uniformly independently sampled from all r-regular graphs over $V$ where $3\leq r< (2\log n)^{1/2}$. Then, for any constant $\epsilon > 0$, there exist a proper \proj-$p$ with layers $L < (\frac{1}{2} + \epsilon)\frac{\log n}{\log(r-1)}$, using DE-$p$ $\zeta(u|S^{(1)}),\,\zeta(u|S^{(2)})$ for all $u\in V$ such that with probability $1-o(n^{-1})$, its outputs $\Gamma(\mathcal{T}^{(1)})\neq \Gamma(\mathcal{T}^{(2)})$. Specifically, $f_3$ can be simply chosen as SPD, \textit{i.e.}, $\zeta(u|v) = \zeta_{spd}(u|v)$. The big-O notation is with respect to $n$.
\begin{proof}
To prove the statement, we only need to prove the case that $|S^{(1)}| = |S^{(2)}| = 1$, $\zeta(u|v) = \zeta_{spd}(u|v)$ because of the following lemma.
\begin{lemma}\label{lemma:power-step0}
Suppose the statement is true when $|S^{(1)}| = |S^{(2)}| = 1$, $\zeta(u|v) = \zeta_{spd}(u|v)$. Then, the statement is also true for the case when $|S^{(1)}| = |S^{(2)}| = p > 1$ for some fixed $p$, and $\zeta(u|v)$ is a neural network fed with the list of landing probabilities.
\end{lemma}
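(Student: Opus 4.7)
The lemma decomposes into two expressiveness-monotonicity steps: enlarging the single-node anchor to a $p$-element set, and relaxing the SPD-only DE to a neural network $f_3$ fed with landing probabilities. For each step, my plan is to construct a proper DEGNN-$p$ with NN-based $\zeta$ whose computation emulates a proper DEGNN-1 with $\zeta_{spd}$, and then transport the base-case distinguishability through this emulation.

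For the DE-relaxation step, I would leverage the fact that $\zeta_{spd}(u|v)$ is a fixed deterministic function of $\ell_{uv}$, namely the index of the first nonzero coordinate. Since a proper $f_3$ is injective on the countable set of achievable $\ell_{uv}$, its output is a refinement of $\zeta_{spd}$; by universal approximation, a proper $f_1$ can extract $\zeta_{spd}$ as a sub-feature and then reproduce the base-case message passing layer-by-layer. Distinguishability achieved with $\zeta_{spd}$ is thus inherited by the NN-based $\zeta$ at no extra cost in the number of layers $L$.

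For the $p=1 \Rightarrow p > 1$ step, I would exploit the injectivity of the proper AGG in Eq.~\eqref{eq:node2setPE}, so that $\zeta(u|S)$ encodes the full multiset $\{\zeta_{spd}(u|v) : v \in S\}$. The key structural observation is that every $v \in S$ is self-identifying via the entry $\zeta_{spd}(v|v) = 0$ in its own multiset. Using this, I would design the message passing to carry, inside each hidden state, one DEGNN-1-style representation channel per anchor, initialised by self-identification at each $v \in S$ and propagated to neighbours through the $L$ layers. After $L$ iterations, the hidden state $h_v^{(L)}$ at $v \in S$ contains, as a sub-feature, the DEGNN-1 representation rooted at $v$. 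Applying the $p=1$ hypothesis to each of the $p^2 = O(1)$ ordered pairs $(v^{(1)}, v^{(2)}) \in S^{(1)} \times S^{(2)}$ and taking a union bound, with probability $1 - p^2 \cdot o(n^{-1}) = 1 - o(n^{-1})$ every DEGNN-1 representation from $A^{(1)}$ rooted at some $v^{(1)} \in S^{(1)}$ differs from every DEGNN-1 representation from $A^{(2)}$ rooted at some $v^{(2)} \in S^{(2)}$. The injective readout AGG then yields distinct multisets $\{h_v^{(L)} : v \in S^{(1)}\}$ and $\{h_v^{(L)} : v \in S^{(2)}\}$, and hence $\Gamma(\mathcal{T}^{(1)}) \neq \Gamma(\mathcal{T}^{(2)})$.

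The main technical obstacle is precisely this per-anchor channel simulation: at a generic node $u \notin S$, the multiset-based DE-$p$ does not invariantly expose the SPD to a specific anchor in $S$. I plan to circumvent this by carrying the entire multiset through all $L$ layers as an injectively encoded hidden state and never demanding per-anchor disentanglement at non-anchor nodes; only at the final readout, which ranges over $v \in S$, does self-identification via the $0$-entry allow attribution of SPD information to a specific anchor, which is enough to invoke the $p=1$ distinguishing guarantee and close the reduction.
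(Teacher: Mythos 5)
Your overall decomposition mirrors the paper's: (i) pass from the single-anchor SPD case to the multiset DE-$p$ by invoking the $p=1$ guarantee anchor-by-anchor and union-bounding (the paper fixes one $w_1\in S^{(1)}$ and union-bounds only over the $p$ nodes of $S^{(2)}$, which already suffices to make the multiset readout $\{h_v^{(L)}\}_{v\in S^{(1)}}\neq\{h_v^{(L)}\}_{v\in S^{(2)}}$; your $p^2$ ordered pairs are harmless overkill for fixed $p$), and (ii) pass from SPD to a neural $f_3$ on landing probabilities by observing that the list of landing probabilities lives in a countable space on which a proper $f_3$ is injective and hence refines $\zeta_{spd}$. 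Step (ii) of your argument is essentially identical to the paper's and is fine.

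The gap is in your per-anchor channel construction for step (i). You correctly identify the crux --- at a node $u\notin S$ the multiset $\zeta(u|S)$ does not attribute its entries to specific anchors --- but your proposed circumvention does not resolve it. Carrying the multiset injectively through $L$ layers gives a hidden state at $v\in S$ that encodes the computation tree rooted at $v$ with \emph{multiset} labels; the $0$-entry identifies $v$ at the root only and does nothing to disentangle the labels at the other tree nodes. Consequently the claim that $h_v^{(L)}$ ``contains, as a sub-feature, the DEGNN-1 representation rooted at $v$'' is unsubstantiated: the scalar-labeled tree needed to invoke the $p=1$ hypothesis is not a function of the multiset-labeled tree, so the reduction does not close. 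The paper attempts no such simulation; it argues directly (and tersely) that, because AGG in Eq.~\eqref{eq:node2setPE} is injective on the countable SPD values, the multiset features $\zeta_{spd}(\cdot|S^{(1)}),\,\zeta_{spd}(\cdot|S^{(2)})$ inherit the distinguishing power of the single-anchor features $\zeta_{spd}(\cdot|w_1),\,\zeta_{spd}(\cdot|w_2)$. To make step (i) rigorous you would need either to prove that monotonicity claim at the level of labeled computation trees, or to re-run the argument of Theorem~\ref{thm:power} with level sets defined by the distance to the whole set $S$; the self-identification device as described does not do the job.
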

\begin{proof}
We first focus on the case when DE is chosen as SPD, \textit{i.e.}, $\zeta(u|v) = \zeta_{spd}(u|v)$. We want to use the results of the case $|S^{(1)}| = |S^{(2)}|=1$ to prove that of the case $|S^{(1)}| = |S^{(2)}|>1$. Suppose $|S^{(1)}| = |S^{(2)}|>1$. We choose an arbitrary node from $S^{(1)}$, say $w_1$. As we assume the statement is true for the single node case, for any node in $S^{(2)}$, say $w_2$, \proj with DEs $\zeta_{spd}(u|w_1),\,\zeta_{spd}(u|w_2)$ is able to distinguish two tuples $(w_1, \mathbf{A}^{(1)})$ and $(w_2, \mathbf{A}^{(2)})$, with probability at least $1-o(n^{-1})$. Given that the space of SPD is countable and AGG in Eq.~\eqref{eq:node2setPE} is injective, $\zeta_{spd}(u|S^{(1)})$ and $\zeta_{spd}(u|S^{(2)})$ are different if $\zeta_{spd}(u|w_1)$ is different from any $\zeta_{spd}(u|w_2)$ ($w_2\in S^{(2)}$), which happens with probability at least $1- |S^{(2)}|o(n^{-1}) = 1-  o(n^{-1})$. Therefore, \proj with DEs $\zeta_{spd}(u|S^{(1)}),\,\zeta_{spd}(u|S^{(2)})$ is also able to distinguish two tuples $(w_1, \mathbf{A}^{(1)})$ and $(w_2, \mathbf{A}^{(2)})$, with probability at least $1-o(n^{-1})$. Based on the union bound, we know that \proj with DEs $\zeta_{spd}(u|S^{(1)}),\,\zeta_{spd}(u|S^{(2)})$ is able to distinguish two tuples $(w_1, \mathbf{A}^{(1)})$ and $(v, \mathbf{A}^{(2)})$ for any $v\in S^{(2)}$, with probability at least $1- |S^{(2)}|o(n^{-1}) = 1-  o(n^{-1})$. Therefore, we prove the capability to generalize the result from the single node case to the multiple node cases.

Now, let us generalize the result from $\zeta_{spd}(u|v)$ to arbitrary $\zeta(u|v)$ represented by neural networks fed with the list of landing probabilities. As $\zeta_{spd}(u|v)$ is indeed a function of the list of landing probabilities (Eq.~\eqref{eq:node2nodePE}), the general $\zeta(u|v)$ should have stronger discriminatory power unless neural networks cannot provide a good mapping from the list of landing probabilities to SPD. However, we do not have to worry this because the list of landing probabilities fortunately lies a countable space for unweighted graphs: 1) The dimension of this list is countable (finite in practice); 2) Each component of this list is always a rational number if the graph is unweighted. According to our assumption, $f_3$ is allowed to have an injective mapping over the list of landing probabilities. Therefore, neural networks on the list of landing probabilities will not decrease the representation power that is just based on $\zeta_{spd}(u|v)$.
\end{proof}

\begin{figure}[t]
\centering
\includegraphics[trim={1cm 9cm 0.5cm 3.3cm},clip,width=0.9\textwidth]{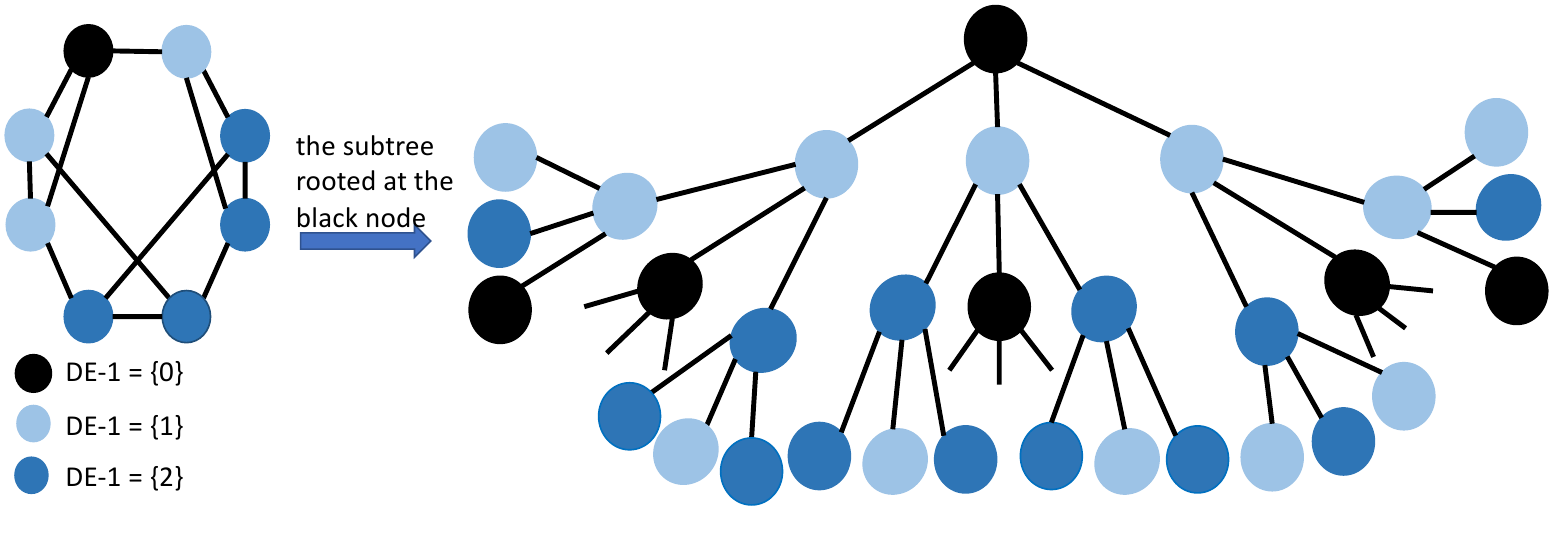}
\includegraphics[trim={1cm 9cm 0.5cm 3.5cm},clip,width=0.9\textwidth]{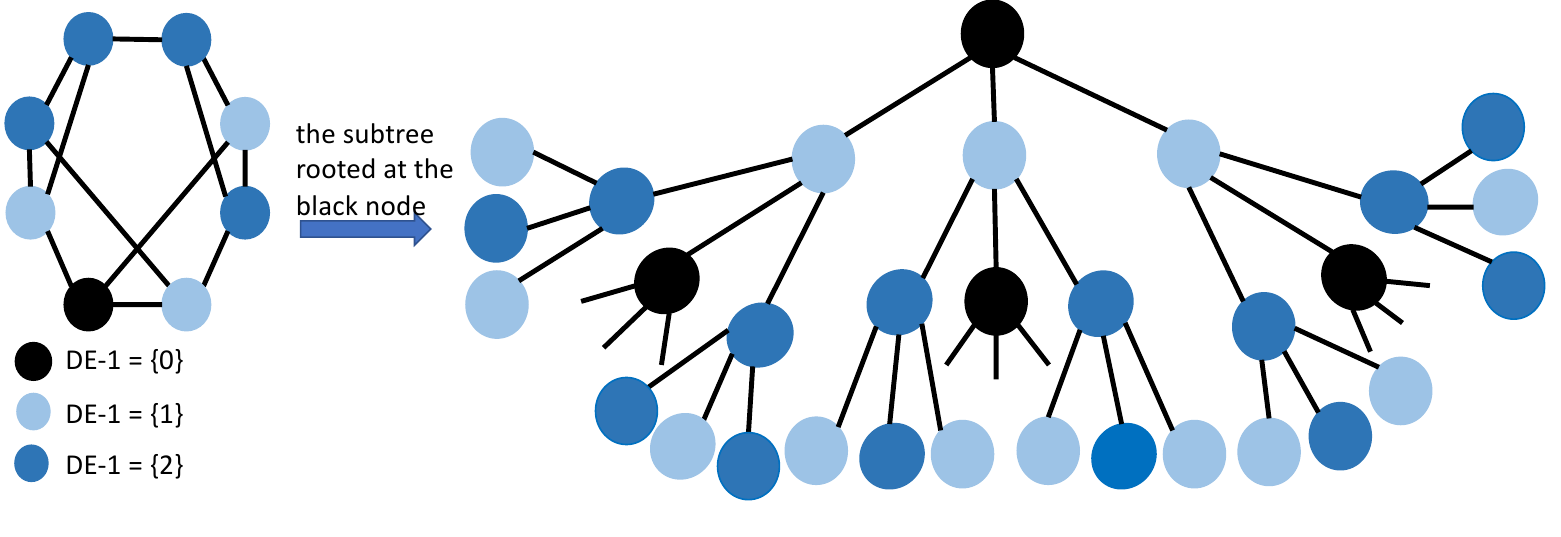}
\caption{The subtree rooted at a node: In the left two graphs, the black nodes are the target nodes who structural representations are to be learnt. Different colors of the nodes correspond to different SPDs with respect to the target nodes. The right two trees correspond to the subtrees rooted at these two target nodes respectively. \proj essentially works from bottom to top along these subtrees to obtain the representations of the target nodes. Different types of subtrees yield different representations based on a proper \proj. In this example, these two target nodes are all from 3-regular graphs so they cannot be distinguished via WLGNN without DE-1's or the informative node/edge attributes. However, with DE-1's, we can see the corresponding subtrees of these two nodes are different and the difference appears in the second layer.}
\label{fig:subtrees}
\end{figure}

Keep in mind that Lemma~\ref{lemma:power-step0} could be loose. We leave the tight probability bound for the $p>2$ case in the future. 

\textbf{Outline:} From now on, we focus on the single node case, \textit{i.e.} $|S^{(1)}| = |S^{(2)}| = 1$, with SPD as the DE-1 ($f_3$), \textit{i.e.}, $\zeta(u|v) = \zeta_{spd}(u|v)$. Without loss of generality, we suppose $S^{(1)} = S^{(2)} = \{u\}$. As SPD is countable, there exists a proper \proj that guarantees that all the operations are injective, which follows the basic condition used in~\cite{xu2018powerful}. Because of the iterative procedure of \proj and all mappings are injection, the label of node $u$ only depends on the subtree with depth $L$ rooted at $u$ (See the illustration of the subtree rooted at a given node in Fig.~\ref{fig:subtrees}). Recall $L$ is the number of layers in DE-GNN. Therefore, we only need to show that $A^{(1)}$ and $A^{(2)}$, which are uniformly sampled $r$-regular graphs, with probability at most $o(n^{-1})$, have the same subtrees rooted at $u$ given SPDs as initial node labels. To show this, our proof contains four steps. Note that all through the following proof, we assume that $n$ is very large and $\epsilon$ is any small positive constant that is independent from $n$.  
\begin{itemize}
\item We first explain that we are able to work on the configuration model of $r$-regular graphs proposed in~\cite{bollobas1980probabilistic} which associates uniform measure over all $r$-regular graphs. Given the condition $r< (2\log n)^{1/2}$, there are a large portion ($\Omega(n^{-1/2})$) of all the graphs generated by this model are simple (without self-loops and multi-edges) $r$-regular graphs~\cite{bollobas1980probabilistic}. Since the configuration model alleviates the difficulty to analyze the dependence between edges in $r$-regular graphs, we consider the graphs generated by the configuration model for the next two steps.
\item Suppose the set of nodes that are associated with SPD$=k$ from $u$ is denoted by $Q_k$, and the number of edges that connect the nodes in $Q_k$ and those in $Q_{k+1}$ is denoted by $p_k$. We prove that with probability $1-o(n^{-\frac{3}{2}})$, for all $k\in (\frac{\epsilon}{5}\frac{\log n}{\log(r-1)} + 1, (\frac{2}{3}-\epsilon)\frac{\log n}{\log(r-1)})$, $|Q_{k}| \geq (r-1-\epsilon)^{k-1} $ and $p_k \geq (r-1-\epsilon) |Q_{k}|$ based on the configuration model. 
\item Next, we define the edge configuration between $Q_k$ and $Q_{k+1}$ as a list $C_k = (a_{1,k}, a_{2,k},...)$ where $a_{i,k}$ denotes the number of nodes in $Q_{k+1}$ of which each has exactly $i$ edges from $Q_k$. We prove that for each $k \in (\frac{1}{2}\frac{\log n}{\log(r-1-\epsilon)}, \frac{4}{7}\frac{\log n}{\log(r-1-\epsilon)})$, as the edges between $Q_k$ and $Q_{k+1}$ are so many, there are too much randomness that makes each type of edge configuration $C_k$ appear with only limited probability $\mathbb{P}(C_k) = O(\frac{n^{1/2}}{p_k})$. Recall that $p_k$ is defined in step 2. Then, given any $\epsilon \frac{\log n}{\log(r-1-\epsilon)}$ many $k$'s, the probability that $A^{(1)}$ and $A^{(2)}$ have all the same edge configurations for these $k$'s is bounded by $\Pi_{k} \mathbb{P}(C_k)  \sim o(n^{-\frac{3}{2}})$. Therefore, we only need to consider edge configurations for $k\in  (\frac{1}{2}\frac{\log n}{\log(r-1-\epsilon)}, (\frac{1}{2} + \epsilon)\frac{\log n}{\log(r-1-\epsilon)})$ to distinguish $A^{(1)}$ and $A^{(2)}$, as this will give us $1-o(n^{-3}{2})$ probability to succeed. 

\item Since there are at least $\Omega(n^{-1/2})$ of all the graphs generated by the configuration model that are simple $r$-regular graphs, and there are at most $o(n^{-\frac{3}{2}}) $ probability that $A^{(1)}$ and $A^{(2)}$ share the same subtrees rooted at $u$, there are at most $o(n^{-\frac{3}{2}}/n^{-1/2})=o(n^{-1})$ probability that $A^{(1)}$ and $A^{(2)}$ are simple $r$-regular graphs and share the same subtrees rooted at $u$, which concludes the proof.
\end{itemize}

\textbf{Step 1:} We first introduce the configuration model proposed in \cite{bollobas1980probabilistic} for $r$-regular graphs of $n$ nodes. Suppose we have $n$ sets of items, $W_u$, $u\in[n]$, where each set corresponds to one node in $[n]$. Each set $W_u$ has $r$ items. Now, we randomly partition all these $nr$ items into $\frac{nr}{2}$ pairs. Then, each partitioning result corresponds to a r-regular graph: if a pair contains items from $W_u$ and $W_v$, then there is an edge between nodes $u$ and $v$ in the graph. Note that such partitioning results may render self-loops and multi-edges. Of course, we would like to consider only simple graphs which do not have self-loops and multi-edges. For this, the theory in~\cite{bollobas1980probabilistic} shows that for all these $r$-regular graphs, if $r<(2\log n)^{1/2}$, there are about $\exp(-\frac{r^2-1}{4})$ portion among them, \textit{i.e.}, $\Omega(n^{-1/2})$, which are simple graphs. 

\textbf{Step 2:} Now, we consider a graph that is uniformly sampled from the configuration model. Recall that the set of nodes that are associated with SPD$=k$ from $u$ is denoted by $Q_k$, and the number of edges that connect the nodes in $Q_k$ and those in $Q_{k+1}$ is denoted by $p_k$. Now, we prove that there exists a small constant $\epsilon > 0$, such that with probability $1-o(n^{-\frac{3}{2}})$, for all $k\in (\frac{\epsilon}{5}\frac{\log n}{\log(r-1)} + 1, (\frac{2}{3}-\epsilon)\frac{\log n}{\log(r-1)})$, $|Q_{k}| \geq (r-1-\epsilon)^{k-1} $ and $p_k \geq (r-1-\epsilon) |Q_{k}|$. We prove an even stronger lemma that gives the previous argument via a union bound and doing induction over all $k\in  (\frac{\epsilon}{5}\frac{\log n}{\log(r-1)} + 1, (\frac{2}{3}-\epsilon)\frac{\log n}{\log(r-1)})$. 
\begin{lemma} There exists a small constant $\epsilon>0$, with probability $1-O(n^{-{2+\epsilon}})$, such that: 1) For any $k<(\frac{2}{3}-\epsilon)\frac{\log n}{\log(r-1)}$, if $|Q_k| \geq n^{\epsilon/5}$,  $|Q_{k+1}| \geq p_k - |Q_{k}|^{1/2} $ and $p_k \geq (r-1) |Q_{k}| - |Q_{k}|^{1/2} $; 2) When $k=\lceil\frac{\epsilon}{5}\frac{\log n}{\log(r-1)}\rceil+1$, $|Q_k|\geq (r-1)^{k-1} = n^{\epsilon/5} $. 
\end{lemma}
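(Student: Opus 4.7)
The plan is to work in the configuration model from Step 1 and expose the BFS exploration from $u$ layer by layer, pairing half-edges on demand and tracking both the layer sizes $|Q_k|$ and the forward-edge counts $p_k$. Part 2 serves as the base case, showing that up to depth $K_0 := \lceil \frac{\epsilon}{5}\frac{\log n}{\log(r-1)}\rceil + 1$ the BFS subgraph is a full $(r-1)$-ary tree with high probability. Part 1 is then the inductive step: once $|Q_k|$ has grown past $n^{\epsilon/5}$, with high probability the next layer loses at most $|Q_k|^{1/2}$ half-edges to back-pairings and at most $|Q_k|^{1/2}$ edges to collisions within the new layer, so the near-geometric growth continues up to depth $(\frac{2}{3}-\epsilon)\frac{\log n}{\log(r-1)}$.

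For the base case, up through depth $K_0$ only $O((r-1)^{K_0})=O(n^{\epsilon/5})$ half-edges around $u$ are ever touched by the exposure. At each pairing, conditional on the history the probability that the partner lies among previously touched half-edges is $O((r-1)^{K_0}/n)$, so a union bound over the $O((r-1)^{K_0})$ pairings bounds the probability that any cycle of length $\leq 2K_0$ lies in the explored region (equivalently, that the BFS is not a pure tree through depth $K_0$) by $O((r-1)^{2K_0}/n)=O(n^{2\epsilon/5-1})$. On the complementary event, $|Q_k|=r(r-1)^{k-1}\geq(r-1)^{k-1}$ for every $k\leq K_0$, and in particular $|Q_{K_0}|\geq n^{\epsilon/5}$, supplying the hypothesis needed to kick off the inductive step.

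For the inductive step, fix $k$ in the inductive range, condition on the BFS up through layer $Q_k$, and assume $n^{\epsilon/5}\leq |Q_k|\leq r(r-1)^{k-1}\leq r\,n^{2/3-\epsilon}$; the upper bound holds automatically from the layer-by-layer recursion. There are $(r-1)|Q_k|$ outward half-edges to pair at this step, since one half-edge at each node in $Q_k$ is already used by the incoming edge from $Q_{k-1}$. Expose their partners one at a time and let $X_i$ be the indicator of a ``bad'' pairing of either type: a back edge whose partner lies in $\bigcup_{j\leq k}Q_j$, or a collision whose partner sits at a vertex already claimed by an earlier outward half-edge at this level. Conditional on the history, each $\mathbb{E}[X_i\mid\mathcal{F}_{i-1}]$ is bounded by $O\!\left(\sum_{j\leq k}|Q_j|/n\right)=O(|Q_k|/n)$ using the geometric-growth fact $\sum_{j\leq k}|Q_j|=O(|Q_k|)$, so the expected total count of bad pairings of each type is $O(|Q_k|^2/n)$. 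Since $|Q_k|\leq n^{2/3-\epsilon}$, this expectation is $o(|Q_k|^{1/2})$, and Freedman's inequality applied to the martingale $\sum_i(X_i-\mathbb{E}[X_i\mid\mathcal{F}_{i-1}])$ shows that the total count stays below $|Q_k|^{1/2}$ with failure probability $\exp(-\Omega(|Q_k|^{1/2}))\leq \exp(-\Omega(n^{\epsilon/10}))$. This gives both $p_k\geq(r-1)|Q_k|-|Q_k|^{1/2}$ and $|Q_{k+1}|\geq p_k-|Q_k|^{1/2}$.

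A union bound over the $O(\log n)$ values of $k$ in the inductive range yields simultaneous validity of part 1 with super-polynomially small failure probability, so the overall failure probability is dominated by the base-case cycle bound $O(n^{2\epsilon/5-1})$. The main obstacle I anticipate is the Freedman/Bernstein step in the configuration model, since the bad-event indicators are not independent: one must choose the sequential exposure order carefully (one half-edge at a time in a fixed pre-specified order) and verify that the conditional Bernoulli upper bound on each $X_i$ remains valid uniformly in the exposure history. After that is in place the argument reduces to disciplined bookkeeping of the BFS tree growth in a sparse random regular graph.
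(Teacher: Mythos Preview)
Your overall structure---configuration model, layer-by-layer BFS exposure, bounding back-edges and collisions separately---matches the paper's proof, and your use of Freedman's inequality for part 1 is a perfectly good (indeed slightly cleaner) alternative to the paper's direct binomial tail sum; both give failure probability well below the required $O(n^{-2+\epsilon})$ per layer.

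The gap is in your base case for part 2. You demand that the BFS be a \emph{pure tree} through depth $K_0$ (zero bad pairings), and your union bound over the $O((r-1)^{K_0})$ exposed half-edges correctly gives failure probability $O((r-1)^{2K_0}/n)=O(n^{2\epsilon/5-1})$ for that event. But the lemma requires failure probability $O(n^{-2+\epsilon})$, and this is not cosmetic: downstream in Step 4 the Step-2 bound must be $o(n^{-3/2})$ so that after dividing by the $\Omega(n^{-1/2})$ density of simple graphs in the configuration model one still lands at $o(n^{-1})$. Your bound $O(n^{-1+2\epsilon/5})$ misses this by essentially a full factor of $n$. The fix---and this is precisely what the paper does---is to relax the base-case requirement to \emph{at most one} bad pairing among the first $O((r-1)^{K_0})$ exposures. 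The probability of two or more bad pairings is $O\bigl((r-1)^{4K_0}/n^{2}\bigr)=O(n^{4\epsilon/5-2})$, which meets the target; and with a single bad edge the worst case (both endpoints among already-touched half-edges, e.g.\ a self-pairing at the root) still leaves $|Q_1|\geq r-2\geq 1$ and thereafter the tree is pure, so $|Q_k|\geq (r-1)^{k-1}$ for all $k\leq K_0$, which is exactly what part 2 claims.
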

\begin{proof}
We consider the following procedure to generate the graph based on the configuration model. Recall the node $u$ is the target node, or the node with $SPD=0$ to the target node. We start from generating the edges attached to this node. We start from the set $W_u$ and generate the $r$ pairs with at least one item in $W_u$. Then, we have all the nodes in $Q_1$. Based on the set $\cup_{v\in Q_1}W_v$, we generate all the $(r-1)|Q_1|$ pairs with at least one item in $\cup_{v\in Q_1}W_v$, and we have all the nodes in $Q_2$. The procedure goes on so on and so forth, from $Q_k$ to $Q_{k+1}$. 

Now, we prove 1). First, we prepare some inequalities. We have $|Q_k| \leq r(r-1)^{k-1} < n^{2/3-\epsilon}$ by the assumption on $k$. For $i \geq \lceil |Q_{k}|^{1/2}\rceil$, we have
\begin{align}\label{eq:step2-1}
\frac{|Q_{k}|\cdot (r-1)|Q_{k}|}{n\cdot i} \leq n^{- \epsilon}
\end{align}
Moreover, Recall $|Q_{k}| < n^{2/3 -\epsilon}$. As $|Q_{k}| \geq n^{\epsilon/5}$, then 
\begin{align}\label{eq:step2-3}
\left(\frac{e(r-1)|Q_{k}|^{3/2}}{n}\right)^{\lceil |Q_{k}|^{1/2}\rceil} = O(n^{-{2+\epsilon}}).
\end{align}
This inequality is very crude. 

Now, we go back to prove the bound for $p_k$. Recall the definition of $p_k$ that is the number of edges between $Q_k$ and $Q_{k+1}$. Then, the number of edges that are generated with both end-nodes are in $Q_k$ is $(r-1)|Q_k| - p_k$. As we suppose the edges are generated sequentially, the probability to generate an edge whose two end-nodes are in $Q_k$ is upper bounded by $\frac{(r-1)|Q_{k}|}{r(n - \sum_{j=0}^k|Q_k|)}<  \frac{|Q_{k}|}{n}$ where we use $\sum_{j=0}^k|Q_j| \leq r(r-1)^k= O(n^{2/3})$. Then, the probability that $(r-1)|Q_k| - p_k > |Q_{k}|^{1/2}$ is upper bounded by (just summing over all possible $(r-1)|Q_k| - p_k=i > |Q_{k}|^{1/2}$)
\begin{align*}
\sum_{i = \lceil |Q_{k}|^{1/2}\rceil}^{(r-1)|Q_{k}|} \left[\frac{|Q_{k}|}{n} \right]^i {(r-1)|Q_{k}| \choose i} 
\stackrel{\eqref{eq:step2-1}}{<}&\left[\frac{|Q_{k}|}{n} \right]^{\lceil |Q_{k}|^{1/2}\rceil} {(r-1)|Q_{k}| \choose \lceil |Q_{k}|^{1/2}\rceil}\sum_{i\geq 0} n^{-i\epsilon}\\
<&c_1\left[\frac{|Q_{k}|}{n} \right]^{\lceil |Q_{k}|^{1/2}\rceil} {(r-1)|Q_{k}| \choose \lceil |Q_{k}|^{1/2}\rceil} \\
<&c_2\left[\frac{|Q_{k}|}{n} \right]^{\lceil |Q_{k}|^{1/2}\rceil} \left[\frac{e(r-1)|Q_{k}|}{\lceil |Q_{k}|^{1/2}\rceil}\right]^{\lceil |Q_{k}|^{1/2}\rceil}\stackrel{\eqref{eq:step2-3}}{=} O(n^{-{2+\epsilon}})
\end{align*}
where $c_1, c_2$ are constants, the numbers above the equality/inequality signs refer to which equations are used. 

Next, we prove the bound for $|Q_{k+1}| \geq p_k - |Q_{k}|^{1/2}$. Again, if the edges are generated sequentially, $p_k - Q_{k+1}$ indicates the number of edges whose end-nodes in $Q_{k+1}$ also belong to other edges that has been generated between $Q_k$ and $Q_{k+1}$. The probability of this edge is upper bounded by $\frac{|Q_{k+1}|}{r(n - \sum_{j=0}^{k+1}|Q_{j}|)}<  \frac{(r-1)|Q_{k}|}{r(n - \sum_{j=0}^{k+1}|Q_{j}|)} \leq \frac{|Q_{k}|}{n}$.  Then, the probability that $p_k - Q_{k+1} > |Q_{k}|^{1/2}$ is again upper bounded by (just summing over all possible $p_k - Q_{k+1}=i > |Q_{k}|^{1/2}$)
\begin{align*}
\sum_{i = \lceil |Q_{k}|^{1/2}\rceil}^{(r-1)|Q_{k}|} \left[\frac{|Q_{k}|}{n} \right]^i {(r-1)|Q_{k}| \choose i}  =  O(n^{-{2+\epsilon}})
\end{align*}
Till now, we have proved the statement 1).

Now, we prove the statement 2). Actually, at the time when $Q_k$ is generated, the number of edges having been generated is at most $r(r-1)^k - 2$. These edges cover at most $r(r-1)^k - 1$ nodes. When $k=\lceil\frac{\epsilon}{5}\frac{\log n}{\log(r-1)}\rceil + 1$, we claim that with probability $1-O(n^{-2+\epsilon})$, at most 1 edge among these edges when generated is not connected to a new node. This is because if there are more than 1 such edges, the probability is at most (by summing over $i$ such edges)
\begin{align*}
\sum_{i = 2}^{r(r-1)^k } \left[\frac{r(r-1)^k}{n - r(r-1)^k} \right]^i {r(r-1)^k  \choose i} \leq c_3  \left(\frac{r^2n^{\epsilon/5}}{n} \right)^2 {r^2n^{\epsilon/5} \choose 2} = O(n^{-2+\epsilon}).
\end{align*}
We use this result to give a lower bound of $|Q_k|$ for $k=\lceil\frac{\epsilon}{5}\frac{\log n}{\log(r-1)}\rceil + 1$. Because there is at most 1 edge when generated do not connect to a new node. The worst case appears when two items in $W_u$ are mutually connected, which leads to $|Q_1| \geq r - 2 \geq 1$. All edges after $Q_1$ is generated are connected to new nodes and furthermore $|Q_k| \geq (r-1)^{k-1} \geq n^{\epsilon/5}$.
\end{proof}

\textbf{Step 3:} We start to consider the edge configuration between $Q_k$ and $Q_{k+1}$ for $k\in (\frac{1}{2}\frac{\log n}{\log(r-1-\epsilon)}, \frac{4}{7}\frac{\log n}{\log(r-1-\epsilon)})$. We focus our attention on the graphs that satisfy the properties developed in Step 2, which, as demonstrated in Step 2 , are with high probability $1- o(n^{-3/2})$. For those graphs, we know that for $k\in (\frac{1}{2}\frac{\log n}{\log(r-1-\epsilon)}, \frac{4}{7}\frac{\log n}{\log(r-1-\epsilon)})$, $p_k \geq (r-1-\epsilon)|Q_k| \geq (r-1-\epsilon)^k\geq n^{1/2}$ and $p_k\leq r(r-1)^k < n^{2/3 - \epsilon}$.  Moreover, $\sum_{j=1}^k |Q_k| \leq (r-1)|Q_k| = o(n)$ and therefore at the time when $Q_k$ is generated, there are still $q_k = n- o(n) = \Theta(n)$ nodes that have not been connected.

Recall that we define the edge configuration between $Q_k$ and $Q_{k+1}$ is a list $C_k = (a_{1,k}, a_{2,k},...)$ where $a_{i,k}$ means the number of nodes in $Q_{k+1}$ of which each has exactly $i$ edges from $Q_k$. According to the definition of $C_k$, it satisfies 
\begin{align}
\sum_{i=1}^r i\times a_{i, k} = p_k 
\end{align}

Note that if \proj cannot distinguish $(u, A^{(1)})$ and $(u, A^{(2)})$, then $A^{(1)}$ and $A^{(2)}$ must share the same edge configuration between $Q_k$ and $Q_{k+1}$. Otherwise, after one iteration, the intermediate representation of nodes in $Q_{k+1}$ are different between $A^{(1)}$ and $A^{(2)}$. Such difference will be propagated to $u$ later. 
To bound the probability that $A^{(1)}$ and $A^{(2)}$ must share the same edge configuration between $Q_k$ and $Q_{k+1}$, for simplicity, we consider the probability of $C_k$ given the number of edges between $Q_k$ and $Q_{k+1}$, \textit{i.e.}, $p_k$ and the number remaining nodes, \textit{i.e.}, $q_k = [n]/\cup_{i=1}^k Q_i = \Theta(n)$. We are to derive a upper bound of $\mathbb{P}(C_k)$ based on the configuration model in the following lemma.
\begin{lemma}\label{lemma:step3}
Suppose $ p_k\in [n^{1/2}, n^{2/3-\epsilon}]$ and $q_k = \Theta(n)$. Consider the configuration model to generate edges: there are $p_k$ edges that correspond to two items  that are one in $\cup_{v\in Q_k}W_v$ and one among the rest $q_k r$ items. Then, for any possible edge configuration $C_k$ obtained based on this generating procedure, $\mathbb{P}(C_k) \leq c_5\frac{q_k^{1/2}}{p_k}$ for some constant $c_5$.
\end{lemma}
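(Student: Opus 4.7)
My plan is to compute $\mathbb{P}(C_k)$ exactly via the configuration model and then bound the resulting expression by isolating a single coordinate of $C_k$ that carries most of the randomness. The overall idea is that once we specify $C_k$ we in particular specify $a_{2,k}$, and $a_{2,k}$ is approximately Poisson-distributed with mean of order $p_k^2/q_k$; a Poisson pmf with mean $\mu \ge 1$ is bounded by $O(1/\sqrt{\mu})$, which gives exactly the claimed $O(\sqrt{q_k}/p_k)$.

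First, I would write $\mathbb{P}(C_k) = N(C_k)/N_{\mathrm{total}}$, where $N_{\mathrm{total}} = (q_k r)!/(q_k r - p_k)!$ counts the injective assignments of the $p_k$ half-edges on the $Q_k$-side into the $q_k r$ outside half-edges, and $N(C_k)$ counts those assignments that realize the configuration. A direct combinatorial count yields
\begin{align*}
N(C_k) \;=\; \binom{q_k}{a_{0,k}, a_{1,k}, \ldots, a_{r,k}} \prod_{i \geq 1} \binom{r}{i}^{a_{i,k}} \cdot p_k!,
\end{align*}
where $a_{0,k} = q_k - \sum_{i \geq 1} a_{i,k}$: first choose which outside nodes receive each degree $i$ (multinomial), then pick $i$ of their $r$ half-edges, then match the resulting $p_k$ outside half-edges to the $p_k$ $Q_k$-side ones.

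Next, I would exploit the inequality $\mathbb{P}(C_k) \leq \mathbb{P}(a_{2,k} = a_{2,k}^*)$ where $a_{2,k}^*$ is the value specified by $C_k$. Applying Stirling's formula to the explicit expression above in the regime $p_k \ll q_k r$ and using that $a_{i,k}$ for $i \geq 3$ have expectation $O(p_k^3/q_k^2) = o(1)$, the marginal of $a_{2,k}$ factorizes asymptotically into a Poisson-like shape with mean $\mu_2 = \Theta(p_k^2/q_k)$. The maximum pmf of Poisson$(\mu_2)$ is $O(1/\sqrt{\mu_2})$ when $\mu_2 \ge 1$, which in our regime $p_k^2 \gtrsim q_k$ yields $\mathbb{P}(a_{2,k} = m) = O(\sqrt{q_k}/p_k)$. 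In the complementary boundary regime $p_k^2 < q_k$ the claimed bound $c_5 \sqrt{q_k}/p_k$ already exceeds $c_5$, so the statement holds trivially for a sufficiently large constant $c_5$.

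The main obstacle will be making the Poisson approximation for $a_{2,k}$ rigorous: the per-node degrees $X_v$ (number of edges from $Q_k$ to outside node $v$) are not independent under the configuration model, only negatively correlated through the matching constraint. I would address this either by a direct Stirling expansion of the closed-form expression above, where the ratio of multinomial/binomial factors collapses into a product whose second coordinate is manifestly a Poisson pmf up to a $1+o(1)$ factor, or by a Chen--Stein coupling argument that exploits $p_k \ll q_k r$ to control the dependence between the $X_v$'s. A minor secondary technicality is verifying uniformity of the constant $c_5$ across the range $p_k \in [n^{1/2}, n^{2/3-\epsilon}]$, which follows once the Stirling expansion is carried out with explicit error terms.
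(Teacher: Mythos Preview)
Your approach is correct but differs from the paper's in the main reduction step. The paper does not pass to the marginal of $a_{2,k}$ and then invoke a Poisson approximation. Instead, it first uses an \emph{adjustment argument}: by comparing $\mathbb{P}(a_{1,k}=x,\,a_{i,k}=y)$ with $\mathbb{P}(a_{1,k}=x+i,\,a_{i,k}=y-1)$ directly from the same counting formula you wrote down, it shows that for every $i\ge 3$ the latter is strictly more likely in the regime $p_k\le n^{2/3-\epsilon}$, $q_k=\Theta(n)$. Hence the most probable $C_k$ has $a_{i,k}=0$ for all $i\ge 3$, and one is reduced to the one-parameter family $g(y)=\mathbb{P}\bigl(a_{1,k}=p_k-2y,\;a_{2,k}=y\bigr)$. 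The paper then computes the ratio $g(y)/g(y+1)$ explicitly, locates the mode at $y_0\approx\frac{(r-1)p_k^2}{2rq_k}$, shows $g(y_0\pm\sqrt{y_0})\ge(\tfrac23+o(1))g(y_0)$, and concludes $g(y^*)\le O(1/\sqrt{y_0})=O(\sqrt{q_k}/p_k)$ by summing over the $\sim 2\sqrt{y_0}$ near-mode values.

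Both routes rest on the same underlying fact --- $a_{2,k}$ fluctuates on scale $\sqrt{\mu_2}$ around $\mu_2=\Theta(p_k^2/q_k)$ --- but they access it differently. Your marginal-plus-Poisson route is conceptually cleaner and would generalize more easily, at the cost of needing either Chen--Stein or a careful Stirling computation to control the dependence you flagged; the paper's adjustment-plus-ratio route is more elementary and entirely self-contained (no approximation theorem is invoked, only explicit ratios of multinomial/binomial factors), at the cost of being more ad hoc. Your identification of the Poisson-limit step as the main obstacle is accurate; the paper simply avoids that obstacle rather than overcoming it.
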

\begin{proof}
First, for the configuration $C_k = (a_{1,k}, a_{2,k},...)$, we claim that the most probable $C_k$ is achieved when $a_{i,k} = 0$ for $i\geq 3$. We prove this statement via the adjustment method: We fix the value of $a_{1,k} + i a_{i,k}$ and all the other $a_{i',k}$'s. We compare the probability of $(a_{1,k} = x, a_{i,k} = y)$ and that of $(a_{1,k} = x + i, a_{i,k} = y-1)$. Because $x, y\leq p_k = O(n^{2/3-\epsilon})$, for some constant $c_6$, we have
\begin{align*}
\frac{\mathbb{P}(a_{1,k} = x, a_{i,k} = y)}{\mathbb{P}(a_{1,k} = x+i, a_{i,k} = y-1)} &= \frac{{q_k \choose x}r^x{q_k - x \choose y} {r \choose i}^y}{{q_k \choose x+i}r^{x+i}{q_k - x - i \choose y-1} {r \choose i}^{y-1}} \leq c_6 \frac{x^i}{q_k^{i-1}} \leq c_6 \frac{x^3}{q_k^2} < 1.
\end{align*}
Therefore, we only need to consider the case when $a_{1,k}, a_{2,k} > 0$ so $a_{1,k} + 2a_{2,k} = p_k$. Define a function $g(x)$ to denote the probability of the edge configuration $(a_{1,k} =  p_k - 2y, a_{2,k} = y)$. We compare $g(y)$ and $g(y+1)$
\begin{align}\label{eq:step3-3}
\frac{g(y)}{g(y+1)} = \frac{{q_k \choose p_k - 2y}r^{p_k - 2y}{q_k - p_k + 2y \choose y} {r \choose 2}^y}{{q_k \choose p_k - 2y - 2}r^{p_k - 2y - 2}{q_k - p_k + 2y + 2 \choose y+1} {r \choose 2}^{y +1}} =\frac{2r}{(r-1)} \frac{(y+1)(q_k - p_k + y + 1)}{(p_k - 2y)(p_k - 2y - 1)}.
\end{align}
Consider the choice $y=y^*$ to make $g(y^*)/g(y^*+1)\geq 1$ while  $g(y^*-1)/g(y^*)\leq 1$ that corresponds to $g(y^*) = \max_{y} g(y)$. Then, we must have $y^* = o(p_k)$ and otherwise $g(y^*-1)/g(y^*) > 1$ because $q_k=\Theta(n)$. As  $y^* = o(p_k)$ and $p_k = o(q_k)$, $y^*$ according to Eq.~\ref{eq:step3-3} is about $\frac{(r-1)p_k^2}{2rq_k}$  by setting Eq.~\ref{eq:step3-3}$=1$. We define $y_0 = \frac{(r-1)p_k^2}{2rq_k}$. Consider $y_0 + \delta$ where $\delta = o(y_0)$. Then, using $p_k = O(n^{2/3-\epsilon})$ and hence $\frac{y_0p_k}{q_k} = o(1) = o(\delta)$, we have
\begin{align*}
\frac{g(y_0 + \delta)}{g(y_0 + \delta +1)} = 1 + \frac{\delta}{y_0} + o(\frac{\delta}{y_0}).
\end{align*}   
Moreover, for $\delta > 0$
\begin{align*}
\frac{g(y_0)}{g(y_0 + \delta)} = \prod_{j=0}^{\delta-1} (1 + \frac{j}{y_0} + o(\frac{\delta}{y_0})) \leq 1 + \frac{\delta(\delta-1)}{2y_0}  + o(\frac{\delta(\delta-1)}{2y_0}), \\
\frac{g(y_0)}{g(y_0 - \delta)} = \prod_{j=1}^{\delta-1} (1 - \frac{j}{y_0} + o(\frac{\delta}{y_0}))^{-1} \leq 1 + \frac{\delta(\delta-1)}{2y_0}  + o(\frac{\delta(\delta-1)}{2y_0}). 
\end{align*}
Choose $\delta = y_0^{1/2}$, then $$g(y_0 +  y_0^{1/2}),  g(y_0 -  y_0^{1/2})\geq (\frac{2}{3} + o(1))g(y_0).$$ As $\sum_{j = - y_0^{1/2}}^{y_0^{1/2}} g(j) \leq 1$ and $y^*$ should be in $[y_0 -  y_0^{1/2}, y_0 +  y_0^{1/2}]$, we obtain $$g(y^*) \leq \frac{3}{4 y_0^{1/2}} = c_5 \frac{q_k^{1/2}}{p_k},$$ which concludes the proof. 
\end{proof}

Now, we go back to consider any $\epsilon \frac{\log n}{\log (r-1-\epsilon)}$ many $k$'s in $(\frac{1}{2}\frac{\log n}{\log(r-1-\epsilon)}, \frac{4}{7}\frac{\log n}{\log(r-1-\epsilon)})$. Based on Lemma~\ref{lemma:step3}, the probability that $A^{(1)}$ and $A^{(2)}$ share the same edge configurations between $Q_k$ and $Q_{k+1}$ for all these $k$'s is bounded by 
\begin{align*}
\prod_{k = c_7\frac{\log n}{\log(r-1-\epsilon)}}^{(c_7+\epsilon)\frac{\log n}{\log(r-1-\epsilon)}}  \mathbb{P}(C_k) \leq &\prod_{k = c_7\frac{\log n}{\log(r-1-\epsilon)}}^{(c_7+\epsilon)\frac{\log n}{\log(r-1-\epsilon)}} c_5\frac{q_k^{1/2}}{p_k} < n^{\epsilon \log c_5} \frac{n^{\frac{\epsilon}{2}\frac{\log n}{\log(r-1-\epsilon)}}} {n^{(c_7 + \frac{\epsilon}{2})\epsilon\frac{\log n}{\log(r-1-\epsilon)}}} \\
< &n^{- \frac{\epsilon^2}{3} \frac{\log n}{\log\log n}} = o(n^{-3/2}).
 \end{align*}

\textbf{Step 4:} From step 2, we know that the graphs that do not satisfy $|Q_{k}| \geq (r-1-\epsilon)^{k-1} $ and $p_k \geq (r-1-\epsilon) |Q_{k}|$ for $k\in  (\frac{\epsilon}{5}\frac{\log n}{\log(r-1)} + 1, (\frac{2}{3}-\epsilon)\frac{\log n}{\log(r-1)})$ are only $o(n^{-3/2})$ portion of all graphs generated from  the configuration model. From step 3, we know that $A^{(1)}$ and $A^{(2)}$ that satisfy the properties of step 2, with probability at most $o(n^{-3/2})$, their subtrees rooted at node $u$ are the same even with DE-1's. Even if all these graphs belong to simple regular graphs, as step 1 tells the portion of simple regular graphs among the graphs generated from the configuration model is $\Omega(n^{-1/2})$, we arrive at the final conclusion that if $A^{(1)}$ and $A^{(2)}$ are sampled from all simple $r$-regular graphs, with probability at least $1 - o(n^{-3/2})/\Omega(n^{-1/2}) = 1 - o(n^{-1})$, a proper \proj can distinguish $(u, A^{(1)})$ and $(u, A^{(2)})$.
\end{proof}

\section{Discussion on node sets over irregular graphs}\label{sec:irregular}
Theorem~\ref{thm:power} focuses on node sets embedded in regular graphs. A natural question therefore arises: how about the power of DEs to distinguish non-isomorphic node sets embedded in irregular graphs that the 1-WL test (also WLGNN) may not distinguish. 
To answer this question, note that there is some important connection between irregular graphs and regular graphs under the umbrella of the 1-WL test. Actually, the partition of nodes over irregular graphs according to their representations (colors) stably associated by the 1-WL test has \emph{equitable} property~\cite{arvind2019weisfeiler}. Basically, suppose that the whole node set $V$ can be partitioned into several parts based on the representations (colors) of nodes, $V=\cup_{i=1}^c V_i$, where for an arbitrary $i$, nodes in $V_i$ share the same representations based on the 1-WL test. Then, the induced subgraph of the nodes in $V_i$ is a regular graph for all $i$'s. What's more, for any $i,j$, the number of nodes in $V_j$ that are neighbors of a certain node in $V_i$ is shared by all the nodes in $V_i$, which again shows certain regularity. Theoretically, if we focus on the regular subgraph defined over each $V_i$, we may further leverage DE defined over this subgraph to further distinguish the nodes in $V_i$. In practice, we do not need to work on each subgraph individually. Mostly, the capability of DE indicated by Theorem~\ref{thm:power} may still help with breaking such regularity.



\section{Proof for The Limitation of DE-1 ---  Theorem~\ref{thm:limit}}
We restate Theorem~\ref{thm:limit}: Consider any two nodes $v, u\in V$. Consider two tuples $\mathcal{T}_1=(v, \mathbf{A}^{(1)})$ and $\mathcal{T}_2=(u, \mathbf{A}^{(2)})$ with graph topologies $A^{(1)}$ and $A^{(2)}$ that correspond to two connected DRGs with a same intersection array. Then, \proj-1 must depend on discriminatory node/edge attributes to distinguish $\mathcal{T}_1$ and $\mathcal{T}_2$.
\begin{proof}
We are to prove that if $A^{(1)}$ and $A^{(2)}$ correspond to two DRGs with a same intersection array, then the subtrees rooted at any nodes are all same even if the nodes are labeled with any DE-1's~(see illustration of a subtree rooted at a node in Fig.~\ref{fig:subtrees}). Because if the subtrees are same, the only possibility to differentiate two nodes is based on discriminatory node/edge attributes embedded in these subtrees when \proj processes them from the bottom to the top. 

We recall the definition of the intersection array of a connected DRG as the following. 
\begin{definition}
The intersection array of a connected DRG with diameter $\triangle$ is an array of integers $\{b_0, b_1, ..., b_{\triangle-1};c_1, c_2, ..., c_{\triangle}\}$ such that for any node pair $(u, v)\in V\times V$ that satisfies SPD$(v, u)=j$, $b_j$ is the number of nodes $w$ that are neighbors of $v$ and satisfy SPD$(w, u)=j+1$, and $c_j$ is the number of nodes $w$ that are neighbors of $v$ and satisfy SPD$(w, u)=j-1$. 
\end{definition}
The definition of DRG implies the following lemma.
\begin{lemma}\label{lemma:limit-1}
Suppose each node is associated with SPD as DE-1. Consider a graph with the  intersection array $\{b_0, b_1, ..., b_{\triangle-1};c_1, c_2, ..., c_{\triangle}\}$. For an arbitrary node $u$, any node $v$ in the subtree rooted at $u$, with SPD($v,\, u) = j$, has children including $b_j$ nodes $w$ with DE-1 satisfying SPD($w,\, u)=j+1$,  $c_j$ nodes with DE-1 satisfying SPD$=j-1$, and $b_0 - b_j - c_j$ nodes $w$ with DE-1 satisfying SPD($w,\, u)=j+1$.
\end{lemma}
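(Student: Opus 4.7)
The plan is to reduce the lemma to two standard facts about distance-regular graphs (DRGs) together with an unpacking of what a child in the computation subtree actually means. Concretely, let $u$ be the root and $v$ an arbitrary node in the subtree at depth at least one, with $\text{SPD}(v,u)=j$. The children of $v$ in the unrolled computation subtree are, by construction of the DEGNN message-passing procedure, exactly the graph neighbors $\mathcal{N}_v$ of $v$, and the DE-1 label on each child $w$ is $\zeta_{\text{spd}}(w|u)=\text{SPD}(w,u)$. So the claim reduces to counting how many neighbors of $v$ lie at distance $j-1$, $j$, and $j+1$ from $u$.

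First I would invoke the triangle inequality for SPD on an unweighted graph: for any edge $(v,w)$, $|\text{SPD}(w,u)-\text{SPD}(v,u)|\le 1$. Hence every neighbor of $v$ has distance to $u$ in $\{j-1,\,j,\,j+1\}$, and these three sets partition $\mathcal{N}_v$. Next, I would plug in Definition of the intersection array directly: since the graph is a DRG with array $\{b_0,\dots,b_{\triangle-1};c_1,\dots,c_\triangle\}$, the number of neighbors of $v$ at distance $j+1$ from $u$ is $b_j$, and the number at distance $j-1$ is $c_j$ — both depending only on $j=\text{SPD}(v,u)$, not on the particular choice of $v$ or $u$. This is exactly where distance-regularity is doing the real work: it is what guarantees that these counts are well-defined invariants of $j$.

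Then I would handle the total degree. In a DRG, setting $j=0$ in the definition of $b_j$ gives that $b_0$ equals the number of neighbors of $u$ at distance $1$ from $u$, which is $\deg(u)$; by regularity (a DRG is in particular regular), $\deg(v)=b_0$ for every $v$. Therefore the remaining neighbors, those at distance exactly $j$ from $u$, number $b_0-b_j-c_j$, which is the last piece of the claim (I read the last ``SPD$=j+1$'' in the statement as a typo for SPD$=j$, since otherwise the $b_j$ and the $b_0-b_j-c_j$ counts would overlap and contradict the partition into three distance classes).

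There is essentially no hard step here; the only thing one must be careful about is not conflating the subtree (an unrolled multiset of walks, where each child of $v$ is a neighbor of $v$ in $G$, possibly including the parent edge back toward $u$) with a BFS tree (where the parent edge would be excluded). The lemma as stated is about the computation subtree used by the GNN, so the $c_j$ neighbors at distance $j-1$ — which in a BFS tree would be pruned — must be included as children. Once that convention is fixed, the proof is just the three-way partition of $\mathcal{N}_v$ by distance to $u$, combined with the definition of the intersection array and the fact that $b_0$ is the common vertex degree.
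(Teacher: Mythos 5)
Your proof is correct and follows essentially the same route as the paper's: all graph neighbors of $v$ become children in the computation subtree, the triangle inequality confines their distances to $u$ to $\{j-1,j,j+1\}$, the intersection array gives the counts $c_j$ and $b_j$ for the outer two classes, and $b_0$-regularity yields $b_0-b_j-c_j$ for the middle class. Your reading of the final ``SPD$=j+1$'' as a typo for SPD$=j$ is consistent with the paper's own proof, which assigns the counts $c_j,\,b_0-c_j-b_j,\,b_j$ to distances $j-1,\,j,\,j+1$ respectively.
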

\begin{proof}
A DRG with the intersection array $\{b_0, b_1, ..., b_{\triangle-1};c_1, c_2, ..., c_{\triangle}\}$ is a $b_0$-regular graph and thus $v$ has $b_0$ neighbors. All the neighbors of $v$ become children of $v$ in the subtree. As SPD($v,\, u) = j$, we know that SPDs from the neighbors of $v$ to $u$ are in $\{j-1, j, j+1\}$. According to the definition of intersection array. we know the numbers of neighbors of $v$ with different SPDs, $j-1, j, j+1$, exactly are $c_j, b_0 - c_j - b_j, b_j$ respectively.
\end{proof}
We start from any node $u$ and construct the subtree rooted at $u$. By using the Lemma~\ref{lemma:limit-1}, it is obvious that subtrees for any nodes from DRGs with a same intersection array are all the same even if all the nodes use SPD as DE-1. An illustrative example of this result is shown in Fig.~\ref{fig:DRG-DE-1}.

\begin{figure}[t]
\centering
\includegraphics[trim={0.5cm 12cm 1cm 4cm},clip,width=\textwidth]{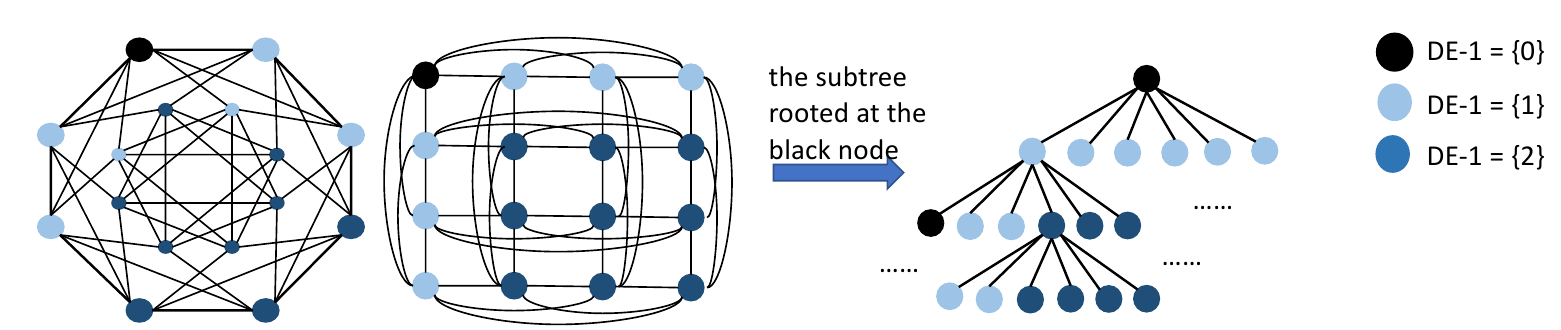}
\caption{The subtrees rooted at two nodes in the Shrikhande graph and the $4\times 4$ rook's graph respectively: In the left two graphs, the black nodes are the target nodes who structural representations are to be learnt. Different colors of the nodes correspond to different DE-1 (SPDs) with respect to the target nodes. For both the Shrikhande graph and the $4\times 4$ rook's graph, the subtrees rooted at the black nodes share the same spreading colors as shown in the right. As these two graphs are DRGs with the same intersection array, the configuration of colors (DE-1) of children only depends on the color (DE-1) of their father node.}
\label{fig:DRG-DE-1}
\end{figure}

The next step is to generalize SPD as DE-1 to the list of landing probability as DE-1. Actually, the following lemma indicates that in DRGs, SPD and the list of landing probability are bijective. 
\begin{lemma}\label{lemma:limit-2}
In any connected DRG with the same intersection array, the number of walks of given length between nodes depends only on the SPD between these vertices.
\end{lemma}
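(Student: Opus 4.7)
The plan is to prove this by induction on the walk length $k$, using the defining property of the intersection array to control how neighbors partition by SPD to the target. Define $N_k(v,u)$ to be the number of walks of length $k$ from $v$ to $u$; the goal is to show that $N_k(v,u) = \phi_k(\text{SPD}(v,u))$ for some function $\phi_k$ that depends only on $k$ and the intersection array $\{b_0,\dots,b_{\triangle-1};c_1,\dots,c_\triangle\}$, not on the specific graph or the specific endpoints.

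For the base case $k=0$, clearly $N_0(v,u) = \mathbf{1}[v=u] = \mathbf{1}[\text{SPD}(v,u)=0]$, so $\phi_0(j) = \mathbf{1}[j=0]$. For the inductive step, decompose a walk of length $k$ from $v$ to $u$ by its first step: $N_k(v,u) = \sum_{w \sim v} N_{k-1}(w,u)$. By the inductive hypothesis, $N_{k-1}(w,u) = \phi_{k-1}(\text{SPD}(w,u))$, so the summand depends on $w$ only through $\text{SPD}(w,u)$. Now invoke Lemma~\ref{lemma:limit-1}: if $\text{SPD}(v,u)=j$, then among the $b_0$ neighbors of $v$ exactly $c_j$ have $\text{SPD}=j-1$ to $u$, exactly $b_j$ have $\text{SPD}=j+1$ to $u$, and the remaining $a_j := b_0 - c_j - b_j$ have $\text{SPD}=j$ to $u$. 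Therefore
\begin{equation*}
N_k(v,u) = c_j\,\phi_{k-1}(j-1) + a_j\,\phi_{k-1}(j) + b_j\,\phi_{k-1}(j+1),
\end{equation*}
which depends only on $j=\text{SPD}(v,u)$ and on the intersection array. Setting $\phi_k(j)$ equal to this expression completes the induction.

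The conclusion is then immediate: two connected DRGs sharing the same intersection array produce the same sequence $\{\phi_k\}_{k\ge 0}$, so the number of length-$k$ walks between any two nodes is a function purely of $k$ and their SPD. Dividing by $r^k$ (where $r=b_0$ is the common degree) this also shows that the list of landing probabilities $\ell_{uv}$ used in Eq.~\eqref{eq:node2nodePE} is a bijective function of $\text{SPD}(v,u)$ within such graphs. I do not foresee a real obstacle here; the only subtlety is being careful at the boundary when $j=0$ (so $c_0$ is undefined, but $\phi_{k-1}(-1)$ is multiplied by $0$) and when $j=\triangle$ (so $b_\triangle$ is undefined, but $\phi_{k-1}(\triangle+1)$ is multiplied by $0$), which are trivially handled by the convention that the missing coefficients are zero.
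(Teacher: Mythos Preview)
Your proposal is correct and follows essentially the same induction-on-walk-length argument as the paper, decomposing a length-$k$ walk by one endpoint's neighbors and using the intersection array to count how many neighbors fall at each SPD value. In fact your version is slightly more careful than the paper's: you explicitly include the $a_j = b_0 - c_j - b_j$ term for neighbors at the same SPD $j$ (which the paper's recurrence omits, presumably by oversight) and you handle the boundary cases $j=0,\triangle$ cleanly.
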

\begin{proof}
Given any node $u\in V$ over the DRG with  the intersection array $\mathcal{L} = \{b_0, b_1, ..., b_{\triangle-1};c_1, c_2, ..., c_{\triangle}\}$, denote the number of walks of length $l$ from $u$ to another node $v$ is $f(v,l)$. We need to prove that $f(v,l)$ can be written as a function $g(\text{SPD}(u,v), l, \mathcal{L})$, which is indepdent from the node identities. We prove this result via induction over the length of walks denoted by $l$. The statement is trivial for $l=1$. Suppose the statement is true for $l= l_0-1$, we consider the case $l=l_0$. 

Because of the definition of walks, there is recurrence relation 
\begin{align*}
f(u,v,l_0) &= \sum_{w\in \mathcal{N}_v} f(u,w,l_0-1) \\
&= \sum_{w:\text{SPD}(w,u) = \text{SPD}(v,u) - 1} f(u,w,l_0-1) + \sum_{w:\text{SPD}(w,u) = \text{SPD}(v,u) + 1} f(u,w,l_0-1) 
\end{align*}
Because the assumption of induction, we have 
\begin{align*}
f(u,v,l_0) &= c_{\text{SPD}(v,u)}g(\text{SPD}(v,u) - 1, l_0-1, \mathcal{L}) + b_{\text{SPD}(v,u)}g(\text{SPD}(v,u) + 1, l_0-1, \mathcal{L}) 
\end{align*}
which only depends on $\text{SPD}(v,u), l_0$ and $\mathcal{L}$ and thus can be written as $g(\text{SPD}(u,v), l_0, \mathcal{L})$. 
\end{proof}
Actually, Lemma~\ref{lemma:limit-2} is closely related the argument in~\cite{RePEc:oxp:obooks:9780198514978}, which claims the same result for walks over one DRG. However, Lemma~\ref{lemma:limit-2} extends the argument to any DRGs with the same intersection array. As DRGs are $b_0$-regular graphs, there is a bijective mapping between the list of landing probabilities and the list of walks of different length. Therefore, this is a bijective mapping between SPD and the list of landing probabilities, which concludes the proof. 
\end{proof}

\section{Proof for \projA --- Corollary~\ref{col:PEAGNN} and Further Discussion } \label{apd:PEAGNN}
\projA contains a general aggregation procedure assisted by DE-1 (Eq.~\eqref{eq:agg-pe}). As we discussed in Section~\ref{sec:related}, as \projA allows to use DEs as extra node features as \proj in the same time, it has at least the same representation power as \proj. Therefore, Theorem~\ref{thm:power} and Corollary~\ref{col:graphpower} are still true for \projA. So \textbf{our first question} is whether \projA shares the same limitation with \proj with DE-1 for node representation learning over DRGs. Later, we will prove our confirmation to this question.

An interesting case in practice is to set the DE-1 in Eq.~\eqref{eq:agg-pe} as SPD $\zeta(u|v) = \zeta_{spd}(u|v)$. For some $K \geq 1$, we specify the aggregation as 
\begin{align}\label{eq:agg-pe-app}
\text{AGG}(\{f_2(h_u^{(l)}, \mathbf{A}_{vu})\}_{u\in \mathcal{N}_v})   \rightarrow \text{AGG}(\{(f_2(h_u^{(l)}, \mathbf{A}_{vu}),  \zeta_{spd}(u|v))\}_{\zeta_{spd}(u|v)\leq K}),
\end{align}
which means that the aggregation happens among $K$-hop neighbors. If the model is to learn the representation of a node subset with size $p$, We term this model as \projA-$p$-$K$-hop. \textbf{The second question} is to investigate whether \projA-$p$-K-hop may decrease the number of layers of \proj required in Theorem~\ref{thm:power} and Corollary~\ref{col:graphpower} by a factor of $K$. This result is not trivial. For example, Fig.~\ref{fig:mixhop-hard} shows two trees whose root nodes are the target nodes to learn structural representation. Obviously, \proj-1 needs two layers to distinguish these two root nodes. However, \projA-1-2-hop may not decrease the number of layers by a factor of $2$ and thus still needs two layers. This is because the set aggregation in Eq.~\eqref{eq:agg-pe-app} may decrease the discriminatory power of those features interacted with graph structures. However, next, we can still prove the confirmation to this second questions.

\begin{figure}[t]
\centering
\includegraphics[trim={3.5cm 14cm 14cm 4cm},clip,width=0.7\textwidth]{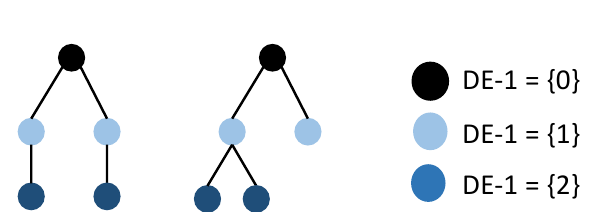}
\caption{$K$-hop aggregation does necessarily better the discriminatory power. Consider using \proj-1 and \projA-1-2-hop to learn the structural representation of the nodes colored by black. We choose SPD as DE-1. Both models require at least two layers to distinguish two black nodes. \projA-1-2-hop cannot decrease the number of layers by a factor of 2.}
\label{fig:mixhop-hard}
\end{figure}


\textbf{Confirmation to the first question.} We formally restate the conclusion to prove: Consider any two nodes $w_1, w_2\in V$. Consider two tuples $\mathcal{T}_1=(w_1,A^{(1)})$ and $\mathcal{T}_2=(w_2,A^{(2)})$ with graph topologies $A^{(1)}$ and $A^{(2)}$ that correspond to two connected DRGs with a same intersection array. Then, \projA-1 must depend on discriminatory node/edge attributes to distinguish $\mathcal{T}_1$ and $\mathcal{T}_2$.
\begin{proof}
Because Lemma~\ref{lemma:limit-2} implies that in all DRGs with the same intersection array, there is a bijective mapping between SPD and the list of landing probabilities. Therefore, we only need to focus on the case that uses SPDs as DE-1, which both appears as extra node attributes and features in aggregation (see Eq.~\eqref{eq:agg-pe}). As the statement is about the limitation, we consider the most expressive case, \textit{i.e.}, the aggregation appearing among every pair of nodes. 

Recall the tree structure to compute \proj is termed as the subtree rooted at some node. Now, we call the tree structure to compute \projA for the structural representation of a node $w$ as the \emph{extended} subtree rooted at $w$, which has the same utility as the tree structure for \proj (Fig.~\ref{fig:subtrees}). Suppose we are to learn the structural representation of $w$. We are going to prove by induction that with arbitrary number of layers, if no discriminatory node/edge attributes are available, any node $v$ will be associated with an representation vector that only depends on SPD$(v|w)$ and the intersection array of the graph.  If this is true, then we know that any nodes in DRGs with a same intersection array will have the same representation output by \projA with an arbitrary number of layers. 

Recall that the number of layers is $L$. Obviously, when $L=0$, the node representations only depend on SPD$(v|w)$ and thus satisfy the statement. Suppose the statement is true when $L = L_0-1$, consider the case when $L=L_0$. Denote the representation of a node $v$ after $L_0-1$ layers as $h_v^{(L_0-1)}$. Then, its representation after $L_0$ layers is
\begin{align*}
    h_v^{(L_0)} = f_1( h_v^{(L_0-1)}, \text{AGG}\{(f_2(h_u^{(L_0-1)}), \zeta_{spd}(u|v))\}_{u\in V})).
\end{align*}
Note that we omit the edge attributes $\mathbf{A}_{uv}$ due to the requirement of the statement. Consider another node $v'$ who satisfies SPD$(v'|w)$ = SPD$(v|w)$. For this, we only need to prove that the following two components are the same and thus $h_v^{(L_0)} = h_{v'}^{(L_0)}$: 
\begin{align}
    &h_v^{(L_0-1)} = h_{v'}^{(L_0-1)}, \label{eq:lemma-limit-eq1}\\
    &\{(h_u^{(L_0-1)}, \zeta_{spd}(u|v))\}_{u\in V} = \{(h_u^{(L_0-1)}, \zeta_{spd}(u|v'))\}_{u\in V}. \label{eq:lemma-limit-eq2}
\end{align} 
The Eq.~\eqref{eq:lemma-limit-eq1} is directly due to the assumption of induction. To prove Eq.~\eqref{eq:lemma-limit-eq2}, we first partition all nodes in $V$ according to $\zeta_{spd}(u|v), \zeta_{spd}(u|v')$ by defining $S_{v}(a) = \{u\in V |\zeta_{spd}(u|v) = a\}$ and $S_{v'}(a)=\{u\in V |\zeta_{spd}(u|v') = a\}$. We further partition these two sets according to $\zeta_{spd}(u|w), \zeta_{spd}(u|w)$ by defining $ S_{v}(a,b) =\{u\in S_{v}(a) | \zeta_{spd}(u|w) = b\}$ and $S_{v'}(a,b) =\{u\in S_{v'}(a) | \zeta_{spd}(u|w) = b\}$. By using the definition of DRG, we know that $|S_{v}(a,b)| = |S_{v'}(a,b)|$ and such cardinality only depends on the intersection array $\mathcal{L}$. Moreover, using the assumption of induction, all nodes u in $S_{v}(a,b), S_{v'}(a,b)$ share the same representation $h_u^{(L_0-1)}$. Combining the fact that $|S_{v}(a,b)| = |S_{v'}(a,b)|$ and the fact the nodes in these two sets hold the same representation, we may claim the second Eq.~\eqref{eq:lemma-limit-eq2} is true, which concludes the proof.


\end{proof}

%
%
%
%

\textbf{Confirmation to the second question.} We formally restate the conclusion to prove: Given two fixed-sized sets $S^{(1)}, S^{(2)}\subset V$, $|S^{(1)}| = |S^{(2)}|=p$. Consider two tuples $\mathcal{T}^{(1)}=(S^{(1)}, \mathbf{A}^{(1)})$ and $\mathcal{T}^{(2)}=(S^{(2)}, \mathbf{A}^{(2)})$ in the most difficult setting when features $\mathbf{A}^{(1)}$ and $\mathbf{A}^{(2)}$ are only different in graph topologies specified by $A^{(1)}$ and $A^{(2)}$ respectively. Suppose $A^{(1)}$ and $A^{(2)}$ are uniformly independently sampled from all r-regular graphs over $V$ where $3\leq r< (2\log n)^{1/2}$. Then, for some constant $\epsilon > 0$ and constant positive integer $K$, there exist a proper \projA-p-K-hop with layers $L < \lceil (\frac{1}{2} + \epsilon)\frac{\log n}{K\log(r-1)}\rceil$, using DE-$p$ $\zeta(u|S^{(1)}),\,\zeta(u|S^{(2)})$ for all $u\in V$ such that with probability $1-o(n^{-1})$, its outputs $\Gamma(\mathcal{T}^{(1)})\neq \Gamma(\mathcal{T}^{(2)})$. 

\begin{proof}
Similar to the proof of Theorem~\ref{thm:power}, we use Lemma~\ref{lemma:power-step0} and focus on the case $S^{(1)} = S^{(2)}= \{w\}$ and the initial node attribute for each node $u$ are $\zeta_{spd}(u|w )$. Most of the logic of the proof is the same as that of Theorem~\ref{thm:power}. We only need to take care of the step 3 of the proof  of Theorem~\ref{thm:power}, as in \projA-1-$K$-hop, a node $v$ with SDP$(v|w)=k$ will aggregation representations of nodes $u$ even with SDP$(u|w)\in [k-K, k+K]$. Therefore, we need to redefine the edge configuration in the step 3 of the proof of Theorem~\ref{thm:power}. 

Recall $Q_{k} = \{v\in V| \text{SDP}(v|w)=k\}$. We define the edge configuration between $Q_k$ and $\cup_{i\in[k+1, k+K]}Q_{i}$ as a list $\bar{C}_k = ((a_{1,k,1}, a_{2,k,1}, ...), (a_{1,k,2}, a_{2,k,2}, ...), ..., (a_{1,k,K}, a_{2,k,K}, ...))$, where $a_{m,k,j}$ is the number of nodes in $Q_{k+j}$ of which each connects to exactly $m$ nodes in $Q_{k+j-1}$. Note that two different $\bar{C}_k$'s will lead to two different representations of $(w, A^{(1)})$ and $(w, A^{(2)})$ after layers $\lceil \frac{k}{K}\rceil + 2$ as \projA-1-$K$-hop uses at most 2 layers yield $\{h_u^{(2)}|\zeta_{spd}(u|w, A^{(1)}) = k\} \neq \{h_u^{(2)}|\zeta_{spd}(u|w, A^{(2)}) = k\}$ and uses at most $\lceil \frac{k}{K}\rceil$ layers to propagate such difference to the target node $w$. 

Actually, this definition of edge configuration is nothing but a concatenation of the edge configurations $\{C_i\}_{i\in[k, k+K-1]}$ where $C_i$ is the edge configuration between $Q_i$ and $Q_{i+1}$ as defined in the proof of Theorem~\ref{thm:power}. Then, we use the statement of step 3 on $C_i$ to characterize the probabilitic property of $\bar{C}_k$. For each $k \in (\frac{1}{2}\frac{\log n}{\log(r-1-\epsilon)}, \frac{4}{7}\frac{\log n}{\log(r-1-\epsilon)})$, each type of edge configuration $\bar{C}_k$ appears with only limited probability $\mathbb{P}(\bar{C}_k) = \Pi_{i=k}^{k+K-1} \mathbb{P}(\bar{C}_i) =  \Pi_{i=k}^{k+K-1} O(\frac{n^{1/2}}{p_i})$. Then, we consider $\epsilon \frac{\log n}{K\log(r-1-\epsilon)}$ many $k$'s in $(\frac{1}{2}\frac{\log n}{\log(r-1-\epsilon)}, \frac{4}{7}\frac{\log n}{\log(r-1-\epsilon)})$ such that these $k$'s hold the same integral interval $K$ and can be denoted as $k_0, k_0+K, ...$. The probability that $A^{(1)}$ and $A^{(2)}$ have all the same edge configurations for these $k$'s is bounded by 
\begin{align*}
\Pi_{k\in\{k_0, k_0+K,...\}} \mathbb{P}(\bar{C}_k) = \Pi_{i=k_0}^{k_0 + \epsilon \frac{\log n}{\log(r-1-\epsilon)}-1} \mathbb{P}(C_i)\sim o(n^{-\frac{3}{2}}).
\end{align*}
Therefore, we only need to consider edge configurations for $k\in  (\frac{1}{2}\frac{\log n}{\log(r-1-\epsilon)}, (\frac{1}{2} + \epsilon)\frac{\log n}{\log(r-1-\epsilon)})$ to distinguish $A^{(1)}$ and $A^{(2)}$. And within $\lceil(\frac{1}{2} + \epsilon)\frac{\log n}{K\log(r-1-\epsilon)}\rceil + 2$, \projA-1-$K$-hop yields different representaions for $(w, A^{(1)})$ and $(w, A^{(2)})$. Note that the constant 2 may be merged in $\epsilon$ for simplicity, which concludes the proof.
\end{proof}

\section{Details of the Experiments}  \label{sec:exp_supple}
\subsection{Datasets} \label{sec:data_supple}

The three air traffic networks for Task 1, Brazil-Airports, Europe-Airports, and USA-Airports were collected by \cite{ackland2005mapping} from the government websites throughout the year 2016 and were used to evaluate algorithms to learn structural representations of nodes~\cite{ribeiro2017struc2vec,donnat2018learning}. Networks are built such that nodes represent airports and there exists an edge between two nodes if there are commercial flights between them. Brazil-Airports is a network with 131 nodes, 1,038 edges and diameter 5; Europe-Airports is a network with 399 nodes, 5,995 edges and also diameter 5; USA-Airports is a network with 1,190 nodes, 13,599 edges and diameter 8. In each dataset, the airports are divided into 4 different levels according to the annual passengers flow distribution by 3 quantiles: 25\%, 50\%, 75\%. The goal is to infer the level of an airport using solely the connectivity pattern of them. 

Tasks 2 \& 3 were carried out on three other datasets used by SEAL \cite{zhang2018link} to facilitate comparison study: C.ele, NS and PB. C.ele \cite{4726313} is a
neural network of C. elegans with 297 nodes, 2,148 edges and 3241 triangles (closed node triads), and diameter of 5, in which nodes are neurons and edges are neural linkage between them. NS \cite{newman2006finding} is a network of collaboration relationship between scientists specialized in network science, comprising of 1461 nodes 2742 edges and 3764 triangles. PB \cite{ackland2005mapping} is a network of reference relationships between political post web-pages, consisting of 1222 nodes, 16714 edges, and of diameter 8. Following \cite{zhang2018link, you2019position}, for Task 2 \& 3, we remove all links or triangles in testing sets from graph structure during the training phase to avoid label leakage.

\subsection{Baseline Details} \label{sec:baseline_supple}
We have five baselines based on GNNs and one baseline, struc2vec~\cite{ribeiro2017struc2vec}, based on kernels using handcrafted structural features. We first introduce the implementation of struc2vec and then discuss other baselines. 

\textit{Struc2vec} is implemented in a 2-phase manner. In the first phase, embeddings for all the nodes are learned by running the n-gram framework over a constructed graph based on structural similarity kernels. We directly use the code provided by the original paper \cite{ribeiro2017struc2vec} \footnote{https://github.com/leoribeiro/struc2vec}. In the second phase, the embeddings of nodes that in the target node set are concatenated and further fed into an one-layer fully connected neural network to make further inference. 

Regarding other GNN-based baselines, \textit{GCN} is implemented according to Equation (9) of \cite{kipf2016semi} with self-loops added. \textit{SAGE} is implemented according to Algorithm 1 of Section 3.1 in \cite{hamilton2017inductive}. Mean pooling is used as the neighborhood aggregation function. \textit{GIN} is implemented by adapting the code provided by the original paper \cite{xu2018powerful} \footnote{https://github.com/weihua916/powerful-gnns}, where we use the sum-pooling aggregation and multi-linear perception to aggregate neighbors. In all three baselines described above, ReLU nonlinearities are applied to the output of each hidden layer, followed by a Dropout layer. \textit{PGNN} layer is implemented by adapting the code provided by the original paper \cite{you2019position} \footnote{https://github.com/JiaxuanYou/P-GNN}. \textit{SEAL} is implemented by adapting the code provided by the original paper \footnote{https://github.com/muhanzhang/SEAL}. As we focus on learning structural representation with inductive capability, all the five GNN-based methods use node degrees as input features if node attributes are not available.  

Final readout layers are tailored to suit different tasks. For Task 1 since the task is node classification, the final layer for all baselines is a one-layer neural network followed by a cross entropy loss. Tasks 2 \& 3 have slightly more complex readout layers since the target entity for prediction is a node set of size 2 or 3. Note that SEAL is specifically designed for Task 2 and has its own readout that uses SortPooling over all node representations over the ego-networks of node-pairs~\cite{zhang2018end}. we refer the readers to the original paper for details~\cite{zhang2018link}. For all the other baselines, to make a fair comparison, we use the following difference-pooling: Suppose the target node set is $S$ and the representation of node $v$ for $v\in S$ is denoted by $h_v$, then we readout the representation of $S$ as  
\begin{align}\label{eq:mcpool}
    z= \sum_{u,v \in S} |h_v-h_u|
\end{align}
where $|\cdot|$ denotes component-wise absolute value. Note that Tasks 2 \& 3 are to predict the existence of a link / triangle. So we use the inner product $\langle w, z\rangle$ where $w \in \mathbb{R}^{d}$ is a trainable final projection vector, and feed this product into the binary cross entropy loss to train the models.

\subsection{\proj Variants Details.} \label{sec:degnn_supple}

\textbf{Minibatch training based on ego-network extraction.} To understand our detailed framework it is helpful to first discuss the minibatch training of GCN, although the original GCN is trained in a full-batch manner~\cite{kipf2016semi}. To train GCN in minibatchs, we first extract, for each target node $v$ in a given minibatch, an ego-network centering at $v$ within $L$-hop neighbors by doing a depth-$L$ BFS from $v$, denoted by $G_v$. Here, $L$ is the number of GCN layers intended to be used in the model. Note that the representation of node $v$ via using $L$-layer GCN over $G_v$ is the same as that via using $L$-layer GCN over the whole graph. If node attributes are not available for GCN or other WLGNNs, we may use the degree of each node as its node attributes. 

Our models are implemented by following the above mini-batch training framework. For a target node set $S$, we first extract the union of ego-networks centering at any nodes in $S$ within $L$-hop neighbors. We call the union of ego-networks as the ego-network around $S$, denoted by $G_S = \cup_{v\in S} G_v$. Note that even if $S$ has multiple nodes, $G_S$ can be extracted as a whole by running BFS. All the edges of $G_S$ between nodes that are both in $S$ will be further removed, which is denoted by $G'_S$. For $G'_S$, we associate each node $u$ in these ego-networks with the DE $\zeta(u|S)$ as extra node attributes. Specifically, we use a simple aggregation for Eq.~\eqref{eq:node2setPE}: 
\begin{align} 
\zeta(u|S) = \frac{1}{|S|} \sum_{v\in S} \zeta(u|v)
\end{align}
Next, we detail the different versions of $\zeta(u|v)$ used by different variants of \proj.

\textbf{\proj-SPD.} This variant sets $\zeta(u|v)$, $v\in S$ and $u\in G_S$ as a one-hot vector of the truncated shortest-path-distance between $u$ and $v$. That is,
\begin{align} \label{eq:degnn-spd}
\zeta_{spd}(u|v) =\text{one hot}(\min(\text{SPD}(u,v), d_{max})),
\end{align}
where $d_{max}$ is the maximum distance to be encoded. As a result, the $\zeta_{spd}(u|v)$ is a vector of length $d_{\max}+1$. 
The pairwise SPDs can either be pre-computed in preprocessing stage, or be computed by traversing the extracted ego-networks on the fly. The $d_{max}(\leq L)$ helps prevent overfitting the noise in an overly large neighborhood. 

\textbf{Compare \proj-SPD with SEAL.} \proj-SPD, when used for link prediction, is similar to SEAL \cite{zhang2018link} in a sense that we both encode the distance between any node and the two nodes in the target node-pairs. However, we are fundamentally different from SEAL as SEAL uses graph-level readout of all nodes in the ego-networks. SEAL also has no discussion on the expressive power of distance encoding. Their intention of node labeling as they reported is just to let the model know which node-pair in the extracted ego-networks is the target node-pair. Moreover, the specific DE $\zeta(u|S)$ for a node-pair $S=\{v_1, v_2\}$ used in SEAL is a one-hot encoding of the value $1 + \min(\text{SPD}(u, v_1) , \text{SPD}(u, v_2)) + (d/2)[(d/2) + (d\%2)-1]$, where $d = \text{SPD}(u, v_1)+\text{SPD}(u, v_2)$. The dimension of this DE is $O(d_{\max}^2)$ which is higher than our DE (Eq.~\eqref{eq:degnn-spd}) used in \proj-SPD, which may result in model overfitting for large $d_{\max}$


\textbf{\proj-LP.} This variant sets $\zeta(u|v)$, $v\in S$ and $u\in G_S$ as landing probabilities of random walks (of different lengths) from node $v$ to node $u$:
\begin{align} \label{eq:degnn-lp}
\zeta_{lp}(u|v) &=((W^{(0)})_{vu}, (W^{(1)})_{vu}, (W^{(2)})_{vu}, ... ,(W^{(d_{\text{rw}})})_{vu})
\end{align}
where $W^{(k)}=(AD^{-1})^k$ is the $k$-step random walk matrix, $d_{\text{rw}}$ is the max step number of random walks. Notice that in principle, $\zeta_{lp}(u|v)$ encodes the distance information at a finer granularity than $\zeta_{spd}(u|S)$. This is because SPD$(u,v)$ can be inferred from $\zeta_{lp}(u|v)$ as the index of the first non-zero random walk feature. However, in practice we observe that such encoding does not always bring a significant performance gain. 

For both \proj-SPD and \proj-LP, we use the same 1-hop neighborhood aggregation as GCN. DE can also be used to control the message passing as shown in Eq.~\eqref{eq:agg-pe}. We further discuss two variants by incorporating \proj-SPD and Eq.~\eqref{eq:agg-pe}. 

\textbf{\projA-SPD.} \projA-SPD is built on top of \proj-SPD, using the same extra node features $\zeta_{spd}(u|S)$, but allows for multi-hop aggregation by specifying Eq.~\eqref{eq:agg-pe} as
\begin{align*}
    \text{AGG}(\{f_2(h_u^{(l)}, \mathbf{A}_{vu})\}_{u\in \mathcal{N}_v})   \rightarrow \text{AGG}(\{(f_2(h_u^{(l)}, \mathbf{A}_{vu}),  \zeta_{spd}(u|v))\}_{\zeta_{spd}(u|v)\leq K}).
\end{align*}
In experiments, we choose $K=2,3$, which means that each node aggregates representations of other nodes that are not only its direct neighbors but also its (exclusive) 2-hop and even 3-hop neighbors. 
As we do not have edge attributes in our data, we omit $\mathbf{A}_{vu}$. Our implementation of the aggregation for the layer $l$ follows 
\begin{align*}
    h_v^{(l+1)} = \sum_{k=1}^K \text{Relu}\left( \frac{1}{|S_{v,k}|+1} \left(h_v^{(l)} + \sum_{u\in S_{v,k}} h_u^{(l)} \Theta^{(lk)}\right)\right) ,\; S_{v,k} =\{u | \zeta_{spd}(u|v) = k\}.
\end{align*}
where $\Theta^{(lk)}$ is a trainable weight matrix and for each $k$, we aggregate $k$-hop neighbors via a GCN layer with a self-loop. Note that when implementing \projA-SPD, we need to extract the ego-network of nodes within $LK$-hops, if \projA-SPD has $L$ layers. 

\textbf{\projA-PR.} \projA-SPD is also built on top of \proj-SPD, but the propagation is by specifying Eq.~\eqref{eq:agg-pe} as
\begin{align*}
    \text{AGG}(\{f_2(h_u^{(l)}, \mathbf{A}_{vu})\}_{u\in \mathcal{N}_v})   \rightarrow \text{AGG}(\{(f_2(h_u^{(l)}, \mathbf{A}_{vu}),  \zeta_{gpr}(u|v))\}_{v\in V}).
\end{align*}
As the aggregation is over the whole node set, this model does not extract the ego-networks but uses the entire graphs. For the layer $l$, we further specify the above aggregation by using
\begin{align*}
    h_v^{(l+1)} = \text{Relu}\left(\sum_{u\in V} \zeta_{ppr}(u|v) h_u^{(l)} \Theta^{(l)} \right)
\end{align*}
where $\Theta^{(l)}$ is a trainable weight matrix and $\zeta_{ppr}(u|v)$ is a specific form of $\zeta_{gpr}(u|v)$ based on Personalized Pagerank scores~\cite{jeh2003scaling}, i.e, 
\begin{align*}
\zeta_{ppr}(u|v) = [\sum_{k=0}^{\infty} (0.9W)^k]_{uv} = [(I - 0.9W)^{-1}]_{uv}.
\end{align*}
Note that the above $0.9$ is a hyper-parameter. As we are just willing to show the use case, 0.9 is set as a heuristic and is not obtained via parameter tuning. Other values may yield better performance. Other types of PageRank scores may be used, \textit{e.g.}, heat-kernel PageRank scores~\cite{chung2007heat}, time-dependent PageRank scores~\cite{gleich2014dynamical}.

We compare \projA-PR with all other methods, which yields the following Table~\ref{tab:performance_full}. \projA-PR performs worse than \projA-SPD while it still works much better than WLGNNs in link and triangle predictions. Comparing these observations with the statements on GDC~\cite{klicpera2019diffusion}, we argue that missing DEs as node attributes is the key that limits the performance of link prediction via GDC.
\begin{table}[H]
\centering
\resizebox{\textwidth}{!}{%
\vspace{-0.1cm}
\begin{tabular}{l|lll|lll|ll}
\hline
 &
  \multicolumn{3}{c|}{\cellcolor[HTML]{F5F9FF}Nodes (Task 1): Average Accuracy} &
  \multicolumn{3}{c|}{\cellcolor[HTML]{FFEFEE}Node-pairs (Task 2): AUC} &
  \multicolumn{2}{c}{\cellcolor[HTML]{EAFDEA}Node-triads (Task 3): AUC} \\
\multirow{-2}{*}{\backslashbox{Method}{Data}} &
  \multicolumn{1}{c}{Bra.-Airports} &
  \multicolumn{1}{c}{Eur.-Airports} &
  \multicolumn{1}{c|}{USA-Airports} &
  \multicolumn{1}{c}{C.elegans} &
  \multicolumn{1}{c}{NS} &
  \multicolumn{1}{c|}{PB} &
  \multicolumn{1}{c}{C.elegans} &
  \multicolumn{1}{c}{NS} \\ \hline
GCN~\cite{kipf2016semi} &
  64.55$\pm$4.18 &
  54.83$\pm$2.69 &
  56.58$\pm$1.11 &
  74.03$\pm$0.99 &
  74.21$\pm$1.72 &
  89.78$\pm$0.99 &
  80.94$\pm$0.51 &
  81.72$\pm$1.50 \\
SAGE~\cite{hamilton2017inductive} &
  70.65$\pm$5.33 &
  56.29$\pm$3.21 &
  50.85$\pm$2.83 &
  73.91$\pm$0.32 &
  79.96$\pm$1.40 &
  90.23$\pm$0.74 &
  84.72$\pm$0.40 &
  84.06$\pm$1.14 \\
GIN~\cite{xu2018powerful} &
  {71.89$\pm$3.60}$^\dagger$ &
  {57.05$\pm$4.08} &
  58.87$\pm$2.12&
  75.58$\pm$0.59&
  87.75$\pm$0.56&
  91.11$\pm$0.52&
  {86.42$\pm$1.12}$^\dagger$ &
  {94.59$\pm$0.66}$^\dagger$ \\
Struc2vec~\cite{ribeiro2017struc2vec} &
  70.88$\pm$4.26 &
  57.94$\pm$4.01$^\dagger$ &
  {61.92$\pm$2.61}$^\dagger$ &
  72.11$\pm$0.31 &
  82.76$\pm$0.59 &
  90.47$\pm$0.60 &
  77.72$\pm$0.58 &
  81.93$\pm$0.61 \\
PGNN~\cite{you2019position} &
  \multicolumn{1}{c}{N/A} &
  \multicolumn{1}{c}{N/A} &
  \multicolumn{1}{c}{N/A} &
  78.20$\pm$0.33 &
  94.88$\pm$0.77 &
  89.72$\pm$0.32 &
  86.36$\pm$0.74 &
  79.36$\pm$1.49 \\
SEAL~\cite{zhang2018link} &
  \multicolumn{1}{c}{N/A} &
  \multicolumn{1}{c}{N/A} &
  \multicolumn{1}{c|}{N/A} &
  {88.26$\pm$0.56}$^\dagger$ &
  {98.55$\pm$0.32}$^\dagger$ &
  {94.18$\pm$0.57}$^\dagger$ &
  \multicolumn{1}{c}{N/A} &
  \multicolumn{1}{c}{N/A} \\ \hline
\proj-SPD &
  \textbf{73.28$\pm$2.47} &
  {56.98$\pm$2.79} &
  \textbf{63.10$\pm$0.68}$^*$ &
  \textbf{89.37$\pm$0.17}$^*$ &
  \textbf{99.09$\pm$0.79} &
  \textbf{94.95$\pm$0.37}$^*$ &
  \textbf{92.17$\pm$0.72}$^*$ &
  \textbf{99.65$\pm$0.40}$^*$ \\
\proj-LP &
  \textbf{75.10$\pm$3.80}$^*$ &
  58.41$\pm$3.20$^*$ &
  \textbf{64.16$\pm$1.70}$^*$ &
  86.27$\pm$0.33 &
  98.01$\pm$0.55 &
  91.45$\pm$0.41 &
  86.24$\pm$0.18 &
  \textbf{99.31$\pm$0.12}$^*$ \\
\projA-SPD &
  \textbf{75.37$\pm$3.25}$^*$ &
  57.99$\pm$2.39$^*$ &
  \textbf{63.28$\pm$1.59} &
  \textbf{90.05$\pm$0.26}$^*$ &
  \textbf{99.43$\pm$0.63}$^*$ &
  \textbf{94.49$\pm$0.24}$^*$ &
  \textbf{93.35$\pm$0.65}$^*$ &
  \textbf{99.84$\pm$0.14}$^*$ \\
\projA-PR &
  \textbf{73.26$\pm$4.08} &
  51.41$\pm$2.39 &
  50.34$\pm$1.50 &
  {83.07$\pm$0.77} &
  \textbf{99.46$\pm$0.37}$^*$ &
  {92.68$\pm$0.57} &
  {83.15$\pm$1.11} &
  \textbf{99.86$\pm$0.03}$^*$ \\
  \hline 
\end{tabular}%
}
\caption{\small{Model performance (including \projA-PR) in Average Accuracy and Area Under the ROC Curve (AUC) (mean in percentage $\pm$ 95\% confidence level). $\dagger$ highlights the best baselines. $^*$, \textbf{bold font}, \textbf{bold font}$^*$ respectively highlights the case where our proposed model's performance: exceeds the best baseline on average, exceeds by 70\% confidence, exceeds by 95\% confidence.}}
\vspace{-0.7cm}
\label{tab:performance_full}
\end{table}

\subsection{Model performance without validation datasets} \label{sec:exp-res-sup}
We have confirmed with the original authors of Struc2vec \cite{ribeiro2017struc2vec} and SEAL~\cite{zhang2018link} that the performance of these two baselines reported in their papers do not use validation set. The performance therein is the best testing results ever achieved when models are being trained till convergence. We think that it is necessary to include validation datasets to achieve fair comparison and therefore reported the results with validation datasets in the main text. We put the results without validation datasets here. Under both experimental settings, we draw similar conclusions for our models in comparison with the baselines.    

\begin{table}[H]
\centering
\resizebox{\textwidth}{!}{%
\begin{tabular}{l|lll|lll|ll}
\hline
 &
  \multicolumn{3}{c|}{\cellcolor[HTML]{F5F9FF}Nodes (Task 1): Average Accuracy} &
  \multicolumn{3}{c|}{\cellcolor[HTML]{FFEFEE}Node-pairs (Task 2): AUC} &
  \multicolumn{2}{c}{\cellcolor[HTML]{EAFDEA}Node-triads (Task 3): AUC} \\
  \multirow{-2}{*}{\backslashbox{Method}{Data}} & Bra.-airports & Eur.-airports & USA-airports  & \multicolumn{1}{c}{C.elegans}     & \multicolumn{1}{c}{NS}            & \multicolumn{1}{c}{PB}            & \multicolumn{1}{c}{C.elegans} & \multicolumn{1}{c}{NS}            \\ \hline
GCN\cite{kipf2016semi}        & 82.01$\pm$3.09 & 55.56$\pm$2.90 & 60.08$\pm$4.20 & 78.10$\pm$1.24 & 81.92$\pm$0.90 & 90.04$\pm$1.23 & 83.15$\pm$0.50 & 88.56$\pm$3.54 \\
GraphSAGE\cite{hamilton2017inductive}  & 81.48$\pm$6.26 & 63.41$\pm$5.87 & 54.62$\pm$4.29 & 77.20$\pm$1.26 & 86.16$\pm$2.95 & 90.73$\pm$1.38 & 86.84$\pm$1.13 & 87.98$\pm$4.58 \\
GIN\cite{xu2018powerful}        & 85.19$\pm$5.85$^\dagger$ & 65.47$\pm$2.99$^\dagger$ & 63.45$\pm$5.29 & 79.79$\pm$1.42 & 89.84$\pm$3.62 & 91.47$\pm$0.58 & 89.37$\pm$0.44$^\dagger$ & 95.48$\pm$0.64$^\dagger$ \\
stuc2vec\cite{ribeiro2017struc2vec}   & 84.08$\pm$9.30 & 65.30$\pm$5.60 & 68.09$\pm$2.50$^\dagger$ & 74.17$\pm$0.37 & 87.43$\pm$2.48 & 90.96$\pm$0.26 & 83.66$\pm$3.26 & 87.53$\pm$3.12 \\
PGNN\cite{you2019position}       & N/A           & N/A           & N/A           & 80.76$\pm$0.98 & 94.99$\pm$1.44 & 90.21$\pm$0.78 & 87.18$\pm$0.52 & 83.39$\pm$1.74 \\
SEAL\cite{zhang2018link}       & N/A           & N/A           & N/A           & 90.30$\pm$1.35$^\dagger$ & 98.85$\pm$0.47$^\dagger$ &94.72$\pm$0.46$^\dagger$ & N/A           & N/A           \\ \hline
\proj-SPD  & \textbf{87.78$\pm$3.95} & 65.62$\pm$3.27$^*$ & \textbf{70.95$\pm$0.80}$^*$ & 90.67$\pm$0.91$^*$ & \textbf{99.50$\pm$0.87} & \textbf{95.21$\pm$0.53} & \textbf{92.54$\pm$0.63}$^*$ & \textbf{99.91$\pm$0.13}$^*$ \\
\proj-LP   & \textbf{88.36$\pm$4.62} & 66.25$\pm$2.07$^*$ & 69.33$\pm$1.33$^*$ & 88.48$\pm$0.46 & 98.24$\pm$0.67 & 92.43$\pm$1.29 & 87.65$\pm$0.71 & \textbf{99.74$\pm$0.21}$^*$ \\
\projA-SPD & \textbf{88.43$\pm$3.24}$^*$ & \textbf{67.14$\pm$1.57}$^*$ & \textbf{71.07$\pm$0.63}$^*$ & \textbf{91.09$\pm$0.96}$^*$ & \textbf{99.52$\pm$0.88} & \textbf{95.15$\pm$0.24}$^*$ & \textbf{93.28$\pm$0.51}$^*$ & \textbf{99.94$\pm$0.06}$^*$ \\
\projA-PR  & \textbf{87.30$\pm$2.52} & 64.38$\pm$1.78 & 62.46$\pm$1.18 & 84.23$\pm$0.55 & \textbf{99.58$\pm$0.46}$^*$ & 92.94$\pm$0.76 & 84.61$\pm$0.58 & \textbf{99.95$\pm$0.06}$^*$ \\
 \hline
\end{tabular}%
}
\caption{\small{Model performance without validation set (including \projA-PR) in Average Accuracy and Area Under the ROC Curve (AUC) (mean in percentage $\pm$ 95\% confidence level). $\dagger$ highlights the best baselines. $^*$, \textbf{bold font}, \textbf{bold font}$^*$ respectively highlights the case where our proposed model's performance: exceeds the best baseline on average, exceeds by 70\% confidence, exceeds by 95\% confidence.}}
\label{tab:performance_no_val}
\end{table}

We report additional results of our model on Task 2 and 3 versus other baselines, measured by average accuracy without validation set in Table \ref{tab:performance_no_val_acc}. Similar observations as reported in the main text can be drawn from both Table \ref{tab:performance_no_val} and \ref{tab:performance_no_val_acc}: the strongest baselines are given by SEAL~\cite{zhang2018link} and GIN \cite{xu2018powerful}, while our \projA variants further significantly outperform those baselines on all tasks.

\begin{table}[H]
\centering
\resizebox{0.8\textwidth}{!}{%
\vspace{-0.1cm}
\begin{tabular}{l|lll|ll}
\hline
 &
  \multicolumn{3}{c|}{\cellcolor[HTML]{FFEFEE}Node-pairs (Task 2) Average Accuracy} &
  \multicolumn{2}{c}{\cellcolor[HTML]{EAFDEA}Node-triads (Task 3) Average Accuracy} \\
\multirow{-2}{*}{\backslashbox{Method}{Data}} &
  \multicolumn{1}{c}{C.elegans} &
  \multicolumn{1}{c}{NS} &
  \multicolumn{1}{c|}{PB} &
  \multicolumn{1}{c}{C.elegans} &
  \multicolumn{1}{c}{NS} \\ \hline
GCN~\cite{kipf2016semi} &
  65.01$\pm$ 0.45 &
  74.69$\pm$ 1.39 &
  78.35$\pm$ 1.69 &
  69.47$\pm$ 2.69 &
  82.16$\pm$ 1.27 \\
SAGE~\cite{hamilton2017inductive} &
  67.96$\pm$ 0.90 &
  78.29$\pm$ 2.40 &
  83.53$\pm$ 1.41 &
  76.67$\pm$ 0.72 &
  88.45$\pm$ 0.65 \\
GIN~\cite{xu2018powerful} &
  69.45$\pm$1.24 &
  82.17$\pm$1.07 &
  83.01$\pm$0.83 &
  {77.60$\pm$0.85}$^\dagger$ &
  {92.67$\pm$1.44}$^\dagger$ \\
Struc2vec~\cite{ribeiro2017struc2vec} &
  68.41$\pm$2.57 &
  80.64$\pm$0.92 &
  74.43$\pm$3.48 &
  71.06$\pm$3.86 &
  83.42$\pm$1.67 \\
PGNN~\cite{you2019position} &
  71.40$\pm$1.68 &
  91.04$\pm$0.85 &
  86.44$\pm$1.14 &
  76.34$\pm$0.23 &
  80.67$\pm$0.49 \\
SEAL~\cite{zhang2018link} &
  {83.90$\pm$0.97}$^\dagger$ &
  {97.89$\pm$0.40}$^\dagger$ &
  {88.92$\pm$0.95}$^\dagger$ &
  \multicolumn{1}{c}{N/A} &
  \multicolumn{1}{c}{N/A} \\ \hline
\proj-SPD &
  {83.78$\pm$0.88} &
  \textbf{99.54$\pm$0.28}$^*$ &
  \textbf{89.82$\pm$0.82}$^*$ &
  \textbf{90.82$\pm$0.62}$^*$ &
  \textbf{100.0$\pm$0.00}$^*$ \\
\proj-LP &
  70.96$\pm$1.27 &
  96.96$\pm$0.57 &
  84.37$\pm$1.03 &
  \textbf{82.90$\pm$0.69}$^*$ &
  \textbf{100.0$\pm$0.00}$^*$ \\
\projA-SPD &
  \textbf{84.81$\pm$0.69}$^*$ &
  \textbf{99.77$\pm$0.24}$^*$ &
  \textbf{89.83$\pm$0.98}$^*$ &
  \textbf{91.24$\pm$0.22}$^*$ &
  \textbf{100.0$\pm$0.00}$^*$ \\ 
\projA-PR &
  76.95$\pm$0.85 &
  \textbf{99.88$\pm$0.19}$^*$ &
  85.82$\pm$0.87 &
  \textbf{83.18$\pm$0.71}$^*$ &
  \textbf{100.0$\pm$0.00}$^*$ \\ \hline
\end{tabular}%
}
\caption{\small{Model performance without validation set (including \projA-PR) in Average Accuracy (mean in percentage $\pm$ 95\% confidence level). $\dagger$ highlights the best baselines. $^*$, \textbf{bold font}, \textbf{bold font}$^*$ respectively highlights the case where our proposed model's performance: exceeds the best baseline on average, exceeds by 70\% confidence, exceeds by 95\% confidence.}}
\label{tab:performance_no_val_acc}
\end{table}

\subsection{Hyperparameters Tuning.} \label{sec:hyper-supple}
Table \ref{tab:hypp} lists the most important hyperparameters' at a glance, which applies to both the baselines and our proposed models. Grid search is used to find the best hyperparameters combination. The models are sufficiently trained till the cross entropy loss converges and we report the best model by running each for 20 times over different random seed. For more details please refer to the code attached.
\begin{table}[H]
\centering
\begin{tabular}{lll}
\hline
\textbf{Hyperparameters} & \textbf{Value / Range} & \textbf{Notes}                                                \\ \hline
batch size               & 64, 128                &                                                      \\
learning rate            & 1e-4                   &                                                      \\
optimizer            & SGD        & stochastic gradient descent\\
conv. layers             & 1, 2, 3                & struc2vec does not follow this setting              \\
conv. hidden dim.        & 20, 50, 80, 100        & struc2vec does not follow this setting              \\
dropout                  & 0, 0.2                 &                                                      \\
$d_{rw}$                & 3, 4                   & \#steps of random walk, valid only for \proj-RW     \\
$d_{spd}$                 & 3, 4                   & maximum shortest path distance for \proj-SPD variants \\
prop\_depth & 1, 2, 3 & \begin{tabular}[c]{@{}l@{}} the number of hops of message in one layer, only\\ valid for \projA-SPD, 1 for all the others\end{tabular} \\ \hline
\end{tabular}
\caption{List of hyperparameters and their value / range.}
\label{tab:hypp}
\end{table}



\section{A Brief Introduction of Higher-Order Weisfeiler-Lehman Tests} \label{sec:WL-supp}
We mentioned higher-order WL tests in our context, especially on the 2-WL test. There are different definitions of the $k$-WL test ($k\geq 2$), while in this work we followed the definition in~\cite{cai1992optimal}. Note that the $k$-WL test here also corresponds to the $k$-WL' test in~\cite{grohe2017descriptive} and the $k$-FWL test in~\cite{maron2019provably}, and are equivalent to the $k+1$-WL tests in~\cite{maron2019provably,morris2019weisfeiler}.

\textbf{The $k$-WL test} ($k\geq 2$) follows the following coloring procedure: 
\begin{enumerate}
    \item For each $k$-tuple of node set $V_i=(v_{i_1}, v_{i_2},...,v_{i_k})\in V^k$, $i\in [n^k]$, we initialize $V_i$ with a color denoted by $C_i^{(0)}$. These colors satisfies that for two $k$-tuples, $V_i$ and $V_j$, $C_i^{(0)}$ and $C_j^{(0)}$ are the same if and only if for $a,b\in[k]$ (1) $v_{i_a} = v_{i_b} \Leftrightarrow v_{j_a} = v_{j_b}$ and (2) $(v_{i_a}, v_{i_b})\in E \Leftrightarrow (v_{i'_a}, v_{i'_b})\in E$.
    \item For each $k$-tuple $V_i$ and $u\in V$, define $N(V_i;u)$ as a sequence of $k$-tuples such that $N(V_i;u) = ((u, v_{i_2},...,v_{i_k}), (v_{i_1}, u,...,v_{i_k}), (v_{i_1}, v_{i_2},...,u))$. Then, the color of $V_i$ can be updated via the following mapping:
    \begin{align*}
        C_i^{(l+1)} \leftarrow g(C_i^{(l)}, \{(C_{j}^{(l)}|V_j\in N(V_i;u))\}_{u\in V}),
    \end{align*}
    where $g(\cdot)$ is injective coloring.
    \item For each step $l$, $\{C_i^{(l)}\}_{i\in[n^k]}$ is a coloring configuration of the graph $G$, which is essentially a multi-set. If two graphs have different coloring configurations, these two graphs are determined to be non-isomorphic, while the inverse is not true. 
\end{enumerate}
Note that the step 2 essentially requires to aggregate colors of $nk$-tuples and thus even when $k=2$, the 2-WL test may not leverage the sparsity of graph structure to keep good scalability. Ring-GNN~\cite{chen2019equivalence} and PPGN~\cite{maron2019provably} essentially try to achieve the expressive power of the 2-WL test. They are not scalable to process large graphs and they were only evaluated for entire-graph-level tasks such as graph classification and graph regression~\cite{chen2019equivalence,maron2019provably}.

\end{document}